\newtheorem{proposition}{Proposition}[section]
\newtheorem{definition}{Definition}[section]
\newtheorem{lemma}{Lemma}[section]
\newtheorem{theorem}{Theorem}[section]
\newtheorem{algorithm}{Algorithm}[section]
\newtheorem{corollary}{Corollary}[section]
\newtheorem{assumption}{Assumption}[section]
\newtheorem{example}{Example}[section]
\title{A Simple and General Debiased Machine Learning Theorem with Finite Sample Guarantees}
\author{%
  Victor Chernozhukov \\
  MIT Economics \\
  \texttt{vchern@mit.edu} \\
   \And
   Whitney K. Newey \\
MIT Economics \\
   \texttt{wnewey@mit.edu} \\
   \And
   Rahul Singh \\
   MIT Economics \\
   \texttt{rahul.singh@mit.edu} \\
}
\begin{document}

\maketitle

\begin{abstract}
   Debiased machine learning is a meta algorithm based on bias correction and sample splitting to calculate confidence intervals for functionals, i.e. scalar summaries, of machine learning algorithms. For example, an analyst may desire the confidence interval for a treatment effect estimated with a neural network. We provide a nonasymptotic debiased machine learning theorem that encompasses any global or local functional of any machine learning algorithm that satisfies a few simple, interpretable conditions. Formally, we prove consistency, Gaussian approximation, and semiparametric efficiency by finite sample arguments. The rate of convergence is $n^{-1/2}$ for global functionals, and it degrades gracefully for local functionals. Our results culminate in a simple set of conditions that an analyst can use to translate modern learning theory rates into traditional statistical inference. The conditions reveal a general double robustness property for ill posed inverse problems.
\end{abstract}



\section{Introduction}\label{sec:intro}

The goal of this paper is to provide a useful technical result for analysts who desire confidence intervals for functionals, i.e. scalar summaries, of machine learning algorithms. For example, the functional of interest could be the average treatment effect of a medical intervention, and the machine learning algorithm could be a neural network trained on medical scans. Alternatively, the functional of interest could be the price elasticity of consumer demand, and the machine learning algorithm could be a kernel ridge regression trained on economic transactions. Treatment effects and price elasticities for a specific demographic are examples of localized functionals. In these various applications, confidence intervals are essential.

We provide a simple set of conditions that can be verified using the kind of rates provided by statistical learning theory. Unlike previous work, we provide a finite sample analysis for any global or local functional of any machine learning algorithm, without bootstrapping, subject to these simple and interpretable conditions. The machine learning algorithm may be estimating a nonparametric regression, a nonparametric instrumental variable regression, or some other nonparametric quantity. We provide conceptual and statistical contributions for the rapidly growing literature on debiased machine learning.

Conceptually, our result unifies, refines, and extends existing debiased machine learning theory for a broad audience. We unify finite sample results that are specific to particular functionals or machine learning algorithms. General asymptotic theory with abstract conditions already exists, which we refine to finite sample theory with simple conditions. In doing so, we uncover a new notion of double robustness for exactly identified ill posed inverse problems. A virtue of finite sample analysis is that it handles the case where the functional involves localization. We show how learning theory delivers inference.

Statistically, we provide results for the class of global functionals that are mean square continuous, and their local counterparts, using algorithms that have sufficiently fast finite sample learning rates. Formally, we prove (i) consistency, Gaussian approximation, and semiparametric efficiency for global functionals; and (ii) consistency and Gaussian approximation for local functionals. The analysis explicitly accounts for each source of error in any finite sample size. The rate of convergence is the parametric rate of $n^{-1/2}$ for global functionals, and it degrades gracefully to nonparametric rates for local functionals.
\section{Related work}\label{sec:related_main}

By focusing on functionals of nonparametric quantities, this paper continues the tradition of classic semiparametric statistics \cite{hasminskii1979nonparametric,robinson1988root,bickel1993efficient,newey1994asymptotic,andrews1994asymptotics,robins1995semiparametric,ai2003efficient}. Whereas classic semiparametric theory studies functionals of densities or regressions over low dimensional domains, we study functionals of machine learning algorithms over arbitrary domains. 
In classic semiparametric theory, an object called the Riesz representer appears in efficient influence functions and asymptotic variance calculations \cite{newey1994asymptotic}. For the same reasons, it appears in debiased machine learning confidence intervals.

In asymptotic inference, the Riesz representer is inevitable. A growing literature directly incorporates the Riesz representer into estimation, which amounts to debiasing known estimators. Doubly robust estimating equations serve this purpose \cite{robins1995semiparametric}. A geometric perspective emphasizes Neyman orthogonality: by debiasing, the learning problem for the functional becomes orthogonal to the learning problem for the nonparametric object \cite{chernozhukov2016locally,chernozhukov2018original,foster2019orthogonal}. An analytic perspective emphasizes the mixed bias property: by debiasing, the functional has bias equal to the product of certain learning rates \cite{chernozhukov2018original,rotnitzky2021characterization}. In this work, we focus on debiased machine learning with doubly robust estimating equations.

With debiasing alone, a key challenge remains: for inference, the function class in which the nonparametric quantity is learned must be Donsker \cite{van2006targeted,luedtke2016statistical,van2018targeted,qiu2021universal}, or it must have slowly increasing entropy \cite{belloni2013inference,belloni2014uniform,zhang2014confidence,javanmard2014confidence,vandegeer2014asymptotically}. However, popular nonparametric settings in machine learning may not satisfy this property. A solution to this challenging issue is to combine debiasing with sample splitting \cite{klaassen1987consistent}. The targeted \cite{zheng2011cross}
and debiased \cite{belloni2012sparse,chernozhukov2016locally,chernozhukov2018original} machine learning literatures provide this insight. In particular, debiased machine learning delivers sufficient conditions for asymptotic inference on functionals in terms of learning rates of the underlying nonparametric quantity and the Riesz representer. We complement prior results with a finite sample analysis. 

This paper subsumes \cite[Section 4]{singh2021debiased}.
\section{Framework and examples}\label{sec:framework}

The general inference problem is to find a confidence interval for some scalar $\theta_0\text{ in }\mathbb{R}$ where
$
\theta_0=E\{m(W,\gamma_0)\}$, $\gamma_0$ is in $\Gamma$,
and $m:\mathcal{W}\times \mathbb{L}_2\rightarrow{\mathbb{R}}$ is an abstract formula. $W\text{ in }\mathcal{W}$ is a concatenation of random variables in the model excluding the outcome $Y\text{ in }\mathcal{Y}\subset\mathbb{R}$.  $\mathbb{L}_2$ is the space of functions of the form $\gamma:\mathcal{W}\rightarrow \mathbb{R}$ that are square integrable with respect to measure $\text{pr}$. $\Gamma$ is a linear subset of $\mathbb{L}_2$ known by the analyst, which may be $\mathbb{L}_2$ itself. 

Note that $\gamma_0$ may be the conditional expectation function $\gamma_0(w)=E(Y \mid W=w)$ or some other nonparametric quantity. For example, it could be the function defined as the solution to the ill posed inverse problem $E(Y \mid W_2=w_2)=E\{\gamma(W_1) \mid W_2=w_2\}$ where $W_1,W_2\subset W$. Such a function is called a nonparametric instrumental variable regression in econometrics \cite{newey2003instrumental}. We study the exactly identified case, which amounts to assuming completeness when $\Gamma=\mathbb{L}_2$ \cite{chen2018overidentification}. If $W_1=W_2$ then nonparametric instrumental variable regression simplifies into nonparametric regression. 

A local functional $\theta_0^{\lim}$ in $\mathbb{R}$ is a scalar that takes the form
$$
\theta^{\lim}_{0}=\lim_{h\rightarrow 0} \theta_0^h,\quad \theta_0^h=E\{m_h(W,\gamma_0)\}=E\{\ell_h(W_j) m(W,\gamma_0)\},\quad \gamma_0 \text{ in } \Gamma,
$$
where $\ell_h$ is a Nadaraya Watson weighting with bandwidth $h$ and $W_j$ is a scalar component of $W$. $\theta^{\lim}_{0}$ is a nonparametric quantity. However, it can be approximated by the sequence $(\theta_0^h)$. Each $\theta_0^h$ can be analyzed like $\theta_0$ above as long as we keep track of how certain quantities depend on $h$. By this logic, finite sample semiparametric theory for $\theta^h_0$ translates to finite sample nonparametric theory for $\theta_0^{\lim}$ up to some approximation error. In this sense, our analysis encompasses both global and local functionals.

To illustrate, we consider some classic functionals.

\begin{example}[Heterogeneous treatment effect estimated by neural network]\label{ex:CATE}
Let $Y$ be a health outcome. Let $W=(D,V,X)$ concatenate binary treatment $D$, covariate of interest $V$ such as age, and other covariates $X$ such as medical scans. Let $\gamma_0(d,v,x)=E(Y\mid D=d,V=v,X=x)$ be a function estimated by a neural network. Under the assumption of selection on observables, the heterogeneous treatment effect is 
$$
\textsc{CATE}(v)=E\{\gamma_0(1,V,X)-\gamma_0(0,V,X)\mid V=v\}=\lim_{h\rightarrow 0}E[\ell_h(V)\{\gamma_0(1,V,X)-\gamma_0(0,V,X)\}],$$ where
$
\ell_h(V)=(h\omega)^{-1}K\left\{(V-v)/h\right\}$, $\omega=E [h^{-1} K\left\{(V-v)/h\right\}]
$, and $K$ is a bounded and symmetric kernel that integrates to one.
\end{example}
The heterogeneous treatment effect is defined with respect to some interpretable, low dimensional characteristic $V$ such as age, race, or gender \cite{abrevaya2015estimating}. The same functional without the localization $\ell_h$ is the classic average treatment effect. See \cite{bibaut2017data} and \cite{colangelo2020double} for other meaningful localizations of average treatment effect.

\begin{example}[Regression discontinuity design estimated by random forest]\label{ex:RDD}
Let $Y$ be an educational outcome. Let $W=(D,X)$ concatenate test score variable $D$ and covariates $X$. Let $\gamma_0(d,x)=E(Y\mid D=d,X=x)$ be a function estimated by a random forest. Suppose the cutoff for a scholarship is the test score $D=0$. The regression discontinuity design parameter is
$$
\textsc{RDD}=\lim_{d\downarrow 0}E\{\gamma_0(d,X)\}-\lim_{d\uparrow 0}E\{\gamma_0(d,X)\}=\lim_{h\rightarrow 0}E\{\ell^{+}_{h}(D)\gamma_0(D,X)-\ell^{-}_{h}(D)\gamma_0(D,X)\},
$$
where
$
\ell^+_h(D)=(h\omega^+)^{-1} K\left\{(2D-h)/(2h)\right\}$, $\omega^+=E \left[h^{-1} K\left\{(2D-h)/(2h)\right\}\right]
$, 
$\ell^-_h(D)=(h\omega^-)^{-1} K\left\{(-2D-h)/(2h)\right\}$, $\omega^-=E \left[h^{-1} K\left\{(-2D-h)/(2h)\right\}\right],$ and $K$ vanishes outside of the interval $(-1/2,1/2)$.
\end{example}
The expressions for fuzzy regression discontinuity, exact kink, and fuzzy kink designs are similar. 

\begin{example}[Demand elasticity estimated by kernel instrumental variable regression]\label{ex:elasticity}
Let $Y$ be log quantity demanded of some good. Let $W=(D,X,Z)$ concatenate log price $D$, covariates $X$, and cost shifter $Z$. Let $\gamma_0(d,x)$ be defined as the solution to $E(Y \mid X=x, Z=z)=E\{\gamma(D,X) \mid X=x, Z=z\}$ estimated by a kernel instrumental variable regression \cite{singh2019kernel}. The demand elasticity is 
$$
\textsc{ELASTICITY}=E\left\{\frac{\partial }{\partial d} \gamma_0(D,X) \right\}.
$$
\end{example}
In Supplement~2, we present the additional example of heterogeneous average derivative estimated by lasso, which is useful when an analyst has access to data on household spending behavior. 

For our simple and general theorem, we require that the formula $m$ is mean square continuous.
\begin{assumption}[Linearity and mean square continuity]\label{assumption:cont}
Assume that the functional $\gamma\mapsto \mathbb{E}\{m(W,\gamma)\}$ is linear, and that there exist $\bar{Q}<\infty $ and $q>0$ such that
$
E\{m(W,\gamma)^2\}\leq \bar{Q} [E\{\gamma(W)^2\}]^q$ for all $\gamma\text{ in } \Gamma.
$
\end{assumption}
This condition will be key in Section~\ref{sec:dml}, where we reduce the problem of inference for $\theta_0$ into the problem of learning $(\gamma_0,\alpha^{\min}_0)$, where $\alpha^{\min}_0$ is introduced below. It is a powerful condition satisfied by many functionals of interest, or at least satisfied by their approximating sequences. Though the local functional $\theta_0^{\lim}$ does not satisfy Assumption~\ref{assumption:cont}, each approximating $\theta_0^h$ does. In particular, for each $m_h$ there exists some $\bar{Q}_h$ that depends on $h$. We keep track of $\bar{Q}$ in our analysis and subsequently consider $\bar{Q}=\bar{Q}_h$. See Theorem~\ref{thm:local} below for conditions that characterize $\bar{Q}_h$ in local functionals, including Examples~\ref{ex:CATE} and~\ref{ex:RDD}.

The restriction that $\gamma_0$ is in $\Gamma\subset \mathbb{L}_2$, where $\Gamma$ is some linear function space, is called a restricted model in semiparametric statistical theory. In learning theory, mean square rates are adaptive to the smoothness of $\gamma_0$, encoded by $\gamma_0\text{ in }\Gamma$. We quote a general Riesz representation theorem for restricted models.

\begin{lemma}[Riesz representation \cite{chernozhukov2018global}]\label{prop:RR}
Suppose Assumption~\ref{assumption:cont} holds. Further suppose $\gamma_0\text{ is in } \Gamma$. Then there exists a Riesz representer $\alpha_0\text{ in } \mathbb{L}_2$ such that for all $\gamma$ in $\Gamma$, 
$
E\{m(W,\gamma)\}=E\{\alpha_0(W)\gamma(W)\}.
$
There exists a unique minimal Riesz representer $\alpha_0^{\min}\text{ in } closure(\Gamma)$ that satisfies this equation, obtained by projecting any $\alpha_0$ onto $\Gamma$. Moreover, denoting by $\bar{M}$ the operator norm of $\gamma\mapsto E\{m(W,\gamma)\}$, we have that
$
[E\{\alpha_0^{\min}(W)^2\}]^{1/2}=\bar{M} \leq \bar{Q}^{1/2}<\infty.
$
\end{lemma}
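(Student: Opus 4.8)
The plan is to recognize $\gamma\mapsto E\{m(W,\gamma)\}$ as a bounded linear functional on a Hilbert space and then invoke the classical Riesz--Fr\'echet representation theorem. Write $L(\gamma):=E\{m(W,\gamma)\}$, and let $\mathcal H:=\mathrm{closure}(\Gamma)$, viewed as a closed linear subspace of $\mathbb L_2$ with inner product $\langle f,g\rangle=E\{f(W)g(W)\}$, so that $\mathcal H$ is itself a Hilbert space. First I would check that $L$ extends to a bounded linear functional on $\mathcal H$: linearity on $\Gamma$ is part of Assumption~\ref{assumption:cont}, and for $\gamma\in\Gamma$ the inequality $(EX)^2\le EX^2$ together with Assumption~\ref{assumption:cont} gives $|L(\gamma)|\le [E\{m(W,\gamma)^2\}]^{1/2}\le \bar Q^{1/2}[E\{\gamma(W)^2\}]^{q/2}$; since $q>0$, on the unit ball $\{\|\gamma\|_2\le1\}$ this is bounded by $\bar Q^{1/2}$, so $L$ has finite operator norm $\bar M\le\bar Q^{1/2}$ on $\Gamma$ and hence extends uniquely and continuously to $\mathcal H$ with the same norm.

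Next I would apply Riesz--Fr\'echet in $\mathcal H$: there is a unique $\alpha_0^{\min}\in\mathcal H=\mathrm{closure}(\Gamma)$ with $L(\gamma)=\langle\alpha_0^{\min},\gamma\rangle=E\{\alpha_0^{\min}(W)\gamma(W)\}$ for every $\gamma\in\mathcal H\supseteq\Gamma$, and $[E\{\alpha_0^{\min}(W)^2\}]^{1/2}=\|\alpha_0^{\min}\|_2=\|L\|=\bar M$. Combined with the bound from the previous paragraph this yields $[E\{\alpha_0^{\min}(W)^2\}]^{1/2}=\bar M\le\bar Q^{1/2}<\infty$; and since $\alpha_0^{\min}\in\mathbb L_2$, it already furnishes one Riesz representer, establishing the existence claim.

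For the remaining assertions — that $\alpha_0^{\min}$ is the $\mathbb L_2$-projection onto $\mathrm{closure}(\Gamma)$ of an arbitrary representer, that it is unique in $\mathrm{closure}(\Gamma)$, and that it is minimal — I would argue as follows. Let $\alpha_0\in\mathbb L_2$ satisfy $E\{m(W,\gamma)\}=E\{\alpha_0(W)\gamma(W)\}$ for all $\gamma\in\Gamma$, and let $\Pi$ denote $\mathbb L_2$-orthogonal projection onto $\mathcal H$. Since $\alpha_0-\Pi\alpha_0\perp\mathcal H\supseteq\Gamma$, we obtain $E\{m(W,\gamma)\}=E\{(\Pi\alpha_0)(W)\gamma(W)\}$ for all $\gamma\in\Gamma$, so $\Pi\alpha_0$ is a representer lying in $\mathcal H$; uniqueness in Riesz--Fr\'echet forces $\Pi\alpha_0=\alpha_0^{\min}$. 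Minimality is then Pythagoras: $\|\alpha_0\|_2^2=\|\alpha_0^{\min}\|_2^2+\|\alpha_0-\alpha_0^{\min}\|_2^2\ge\|\alpha_0^{\min}\|_2^2$.

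I do not anticipate a serious obstacle, since this is the Hilbert-space Riesz theorem plus bookkeeping; the only points needing care are (i) that $L$ is defined only on $\Gamma$, which is not assumed closed, so one must pass to $\mathrm{closure}(\Gamma)$ and verify that the bounded extension is legitimate and norm-preserving, and (ii) the handling of a general exponent $q>0$ in the continuity bound rather than the classical $q=1$, where the key observation is simply that $\|\gamma\|_2^{2q}\le 1$ on the unit ball for every $q>0$, so boundedness of $L$ — and hence the operator-norm estimate $\bar M\le\bar Q^{1/2}$ — is unaffected.
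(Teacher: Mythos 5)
Your proposal is correct, and for the one part of the claim that the paper actually proves in-house --- the bound $\bar{M}\leq \bar{Q}^{1/2}$ --- your argument is identical to the paper's: Jensen's inequality plus Assumption~\ref{assumption:cont} give $|E\{m(W,\gamma)\}|\leq \bar{Q}^{1/2}\|\gamma\|_{\text{pr},2}^{q}$, and restricting to the unit sphere of $\Gamma$ (where $\|\gamma\|_{\text{pr},2}^{q}=1$ for any $q>0$, so the nonstandard exponent is harmless) bounds the operator norm. Where you differ is that the paper delegates everything else --- existence of a representer, uniqueness and minimality of $\alpha_0^{\min}$ in $\mathrm{closure}(\Gamma)$, the projection characterization, and the identity $[E\{\alpha_0^{\min}(W)^2\}]^{1/2}=\bar{M}$ --- to the cited result \cite[Lemma 2.1]{chernozhukov2018global}, whereas you supply a self-contained proof via the standard Hilbert-space route: continuous extension of the functional to $\mathcal{H}=\mathrm{closure}(\Gamma)$, Riesz--Fr\'echet on $\mathcal{H}$, and projection plus Pythagoras for the minimality and projection claims. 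That buys a reader a complete argument without chasing the reference, at the cost of a page of standard functional analysis; it is presumably the same argument as in the cited lemma. One small step worth making explicit: when you conclude $\Pi\alpha_0=\alpha_0^{\min}$ from ``uniqueness in Riesz--Fr\'echet,'' note that $\Pi\alpha_0$ a priori represents the functional only on $\Gamma$, so you should first extend the identity $\langle\Pi\alpha_0,\gamma\rangle=L(\gamma)$ from $\Gamma$ to $\mathcal{H}$ by density and continuity (equivalently, observe that $\Pi\alpha_0-\alpha_0^{\min}\in\mathcal{H}$ is orthogonal to the dense set $\Gamma$ and hence to itself); this is a one-line fix, not a gap in the approach.
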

The condition $\bar{M}<\infty$ is enough for the conclusions of Lemma~\ref{prop:RR} to hold. Since $\bar{M}\leq \bar{Q}^{1/2}$, $\bar{Q}<\infty$ in Assumption~\ref{assumption:cont} is a sufficient condition. Nonetheless, we assume $\bar{Q}<\infty$ because mean square continuity plays a central role in the main results of Section~\ref{sec:dml}. In Examples~\ref{ex:CATE} and~\ref{ex:RDD}, with propensity score $\pi_0(v,x)$,
    $$
    \alpha_0(d,v,x)=\ell_h(v)\left\{\frac{d}{\pi_0(v,x)}-\frac{1-d}{1-\pi_0(v,x)}\right\};\quad
    \alpha^+_0(d,x)=\ell_h^{+}(d),
    \quad 
     \alpha^-_0(d,x)=\ell_h^{-}(d).
    $$
Riesz representation delivers a doubly robust formulation of the target $\theta_0\text{ in }\mathbb{R}$. For the case where $\gamma_0(w)$ is defined as a nonparametric regression in $\Gamma$ or projection onto $\Gamma$, consider the estimating equation
$$
\theta_0=E[m(W,\gamma_0)+\alpha^{\min}_0(W)\{Y-\gamma_0(W)\}].
$$
This formulation is doubly robust since it remains valid if either $\gamma_0$ or $\alpha^{\min}_0$ is correct: for all $(\gamma,\alpha)$ in $\Gamma$,
$$
\theta_0=E[m(W,\gamma_0)+\alpha(W)\{Y-\gamma_0(W)\}]=E[m(W,\gamma)+\alpha^{\min}_0(W)\{Y-\gamma(W)\}].
$$
The term $\alpha(w)\{y-\gamma(w)\}$ serves as a bias correction for the term $m(w,\gamma)$. We view $(\gamma_0,\alpha^{\min}_0)$ as nuisance parameters that we must learn in order to learn and infer $\theta_0$. While any Riesz representer $\alpha_0$ will suffice for valid learning and inference of $\theta_0=E\{m(W,\gamma_0)\}$ under correct specification of $\gamma_0$ as the regression $E(Y \mid W=w)$ in $\Gamma$, the minimal Riesz representer $\alpha_0^{\min}$ confers specification robust inference and semiparametric efficiency for estimating $\theta_0=E\{m(W,\gamma_0)\}$ when $\gamma_0$ is only the projection of $E(Y \mid W=w)$ onto $\Gamma$; see \cite[Theorem 4.2]{chernozhukov2018global}.

If $\gamma_0(w)$ is defined as the solution to an ill posed inverse problem, then the appropriate Riesz representer is defined as the solution to another ill posed inverse problem \cite{severini2012efficiency,ichimura2021influence}. The relevant nuisance parameters are $(\gamma_0,\alpha^{\min}_0)$ defined as unique solutions $(\gamma,\alpha)$ to
$$
E(Y \mid W_2=w_2)=E\{\gamma(W_1) \mid W_2=w_2\},\quad \eta_0^{\min}(w_1)=E\{\alpha(W_2) \mid W_1=w_1\},
$$
where $\eta_0^{\min}$ is the minimal Riesz representer satisfying $E\{m(W_1,\gamma)\}=E\{\eta_0(W_1)\gamma(W_1)\}$ for all $\gamma$ in $\Gamma$ from Lemma~\ref{prop:RR}. Uniqueness is due to the assumption of exact identification, which amounts to completeness when $\Gamma=\mathbb{L}_2$. In Example~\ref{ex:elasticity}, $w_1=(d,x)$, $w_2=(z,x)$, and $\eta_0(d,x)=-\partial_d  
\log f(d\mid x)$ where $f(d \mid x)$ is a conditional density. This abuse of notation allows us to state unified results. The estimating equation is 
$$
\theta_0=E[m(W_1,\gamma_0)+\alpha^{\min}_0(W_2)\{Y-\gamma_0(W_1)\}].
$$ 
A new insight of this work is that, for any mean square continuous functional, $n^{-1/2}$ Gaussian approximation is still possible if either $\gamma_0$ or $\alpha^{\min}_0$ is the solution to a mildly, rather than severely, ill posed inverse problem; the doubly robust formulation confers double robustness to ill posedness.
\section{Algorithm}\label{sec:algorithm}

Our goal is general purpose learning and inference for the target parameter $\theta_0\text{ in }\mathbb{R}$ that is a mean square continuous functional of $\gamma_0\text{ in } \Gamma$. Lemma~\ref{prop:RR} demonstrates that any such $\theta_0$ has a unique minimal representer $\alpha_0^{\min}\text{ in } \Gamma$. In this section, we describe a meta algorithm to turn estimators $\hat{\gamma}$ of $\gamma_0$ and $\hat{\alpha}$ of $\alpha_0^{\min}$ into an estimator $\hat{\theta}$ of $\theta_0$ such that $\hat{\theta}$ has a valid and practical confidence interval. Recall that $\hat{\gamma}$ may be any machine learning algorithm. To preserve this generality, we do not instantiate a choice of $\hat{\gamma}$; we treat it as a black box. In subsequent analysis, we will only require that $\hat{\gamma}$ converges to $\gamma_0$ in mean square error. This mean square rate is guaranteed by existing statistical learning theory. 

The target estimator $\hat{\theta}$ as well as its confidence interval will depend on nuisance estimators $\hat{\gamma}$ and $\hat{\alpha}$. We refrain from instantiating the estimator $\hat{\alpha}$ for $\alpha_0^{\min}$. As we will see in subsequent analysis, the general theory only requires that $\hat{\alpha}$ converges to $\alpha^{\min}_0$ in mean square error. A recent literature provides $\hat{\alpha}$ estimators with fast rates inspired by the Dantzig selector \cite{chernozhukov2018global}, lasso \cite{chernozhukov2018learning,smucler2019unifying,avagyan2021high}, adversarial neural networks \cite{chernozhukov2020adversarial,kallus2021causal}, and kernel ridge regression \cite{singh2021debiased}.

\begin{algorithm}[Debiased machine learning]\label{alg:target}
Given a sample $(Y_i,W_i)$ $(i=1,...,n)$, partition the sample into folds $(I_{\ell})$ $(\ell=1,...,L)$. Denote by $I_{\ell}^c$ the complement of $I_{\ell}$.
\begin{enumerate}
    \item For each fold $\ell$, estimate $\hat{\gamma}_{\ell}$ and $\hat{\alpha}_{\ell}$ from observations in $I_{\ell}^c$.
    \item Estimate $\theta_0$ as
    $
  \hat{\theta}=n^{-1}\sum_{\ell=1}^L\sum_{i\in I_{\ell}} [m(W_i,\hat{\gamma}_{\ell})+\hat{\alpha}_{\ell}(W_i)\{Y_i-\hat{\gamma}_{\ell}(W_i)\}]
    $.
    \item Estimate its $(1-a) 100$\% confidence interval as
    $
    \hat{\theta}\pm c_{a}\hat{\sigma} n^{-1/2}$, where $c_{a}$ is the $1-a/2$ quantile of the standard Gaussian and $\hat{\sigma}^2=n^{-1}\sum_{\ell=1}^L\sum_{i\in I_{\ell}} [m(W_i,\hat{\gamma}_{\ell})+\hat{\alpha}_{\ell}(W_i)\{Y_i-\hat{\gamma}_{\ell}(W_i)\}-\hat{\theta}]^2
    $. 
\end{enumerate}
\end{algorithm}

This meta algorithm can be seen as an extension of classic one step corrections \cite{pfanzagl1982lecture} amenable to the use of modern machine learning, and it has been termed debiased machine learning \cite{chernozhukov2018original}. It departs from targeted machine learning inference with a finite sample \cite{van2017finite,cai2020nonparametric} in a few ways. On the one hand, it avoids iteration and bootstrapping, thereby simplifying computation. On the other hand, it does not involve substitution, which would ensure that the estimator obeys additional meaningful constraints. See \cite{chernozhukov2018learning} for an algorithm that combines the two approaches.
\section{Validity of confidence interval}\label{sec:dml}

We write this section at a high level of generality so it can be used by analysts working on a variety of problems. We assume a few simple and interpretable conditions and consider black box estimators $(\hat{\gamma},\hat{\alpha})$. We prove by finite sample arguments that $\hat{\theta}$ defined by Algorithm~\ref{alg:target} is consistent, and that its confidence interval is valid and semiparametrically efficient. Towards this end, define the oracle moment function
$$
 \psi_0(w)=\psi(w,\theta_0,\gamma_0,\alpha^{\min}_0), \quad \psi(w,\theta,\gamma,\alpha)=m(w,\gamma)+\alpha(w)\{y-\gamma(w)\}-\theta.
$$
Its moments are $\sigma^2=E\{\psi_0(W)^2\}$, $\kappa^3=E\{|\psi_0(W)|^3\}$, and $\zeta^4=E\{\psi_0(W)^4\}$. Write the Berry Esseen constant as $c^{BE}=0.4748$ \cite{shevtsova2011absolute}. The result will be in terms of abstract mean square rates.

\begin{definition}[Mean square error]
Write the mean square error $\mathcal{R}(\hat{\gamma}_{\ell})$ and the projected mean square error $\mathcal{P}(\hat{\gamma}_{\ell})$ of $\hat{\gamma}_{\ell}$ trained on observations indexed by $I^c_{\ell}$ as
$$
    \mathcal{R}(\hat{\gamma}_{\ell})=E[\{\hat{\gamma}_{\ell}(W)-\gamma_0(W)\}^2\mid I^c_{\ell}],\quad 
     \mathcal{P}(\hat{\gamma}_{\ell})=E([ E\{\hat{\gamma}_{\ell}(W_1)-\gamma_0(W_1)\mid W_2, I^c_{\ell}\} ]^2\mid I^c_{\ell}).
$$
Likewise define $\mathcal{R}(\hat{\alpha}_{\ell})$ and $\mathcal{P}(\hat{\alpha}_{\ell})$.
\end{definition}
Statistical learning theory provides rates of this form, where $I^c_{\ell}$ is a training set and $W$ is a test point. In the case of nonparametric regression, $\mathcal{R}(\hat{\gamma}_{\ell})$ or $\mathcal{R}(\hat{\alpha}_{\ell})$ typically has a fast rate between $n^{-1/2}$ and $n^{-1}$. In the case of nonparametric instrumental variable regression, $\mathcal{R}(\hat{\gamma}_{\ell})$ and $\mathcal{R}(\hat{\alpha}_{\ell})$ typically have rates slower than $n^{-1/2}$ due to ill posedness, but $\mathcal{P}(\hat{\gamma}_{\ell})$ or $\mathcal{P}(\hat{\alpha}_{\ell})$ may have a fast rate \cite{blundell2007semi,singh2019kernel,dikkala2020minimax}. Our main result is a finite sample Gaussian approximation.

\begin{theorem}[Finite sample Gaussian approximation]\label{thm:dml}Suppose Assumption~\ref{assumption:cont} holds,
$
E[\{Y-\gamma_0(W)\}^2 \mid W]\leq \bar{\sigma}^2,$
and
$\|\alpha^{\min}_0\|_{\infty}\leq\bar{\alpha}.
$
Then with probability $1-\epsilon$,
$$
\sup_{z\in\mathbb{R}} \left| \text{\normalfont pr} \left\{\frac{n^{1/2}}{\sigma}(\hat{\theta}-\theta_0)\leq z\right\}-\Phi(z)\right|\leq c^{BE}\left(\frac{\kappa}{\sigma}\right)^3 n^{-1/2}+\frac{\Delta}{(2\pi)^{1/2}}+\epsilon,
$$
where $\Phi(z)$ is the standard Gaussian cumulative distribution function and
$$
\Delta=\frac{3 L}{\epsilon   \sigma}\left[(\bar{Q}^{1/2}+\bar{\alpha})\{\mathcal{R}(\hat{\gamma}_{\ell})\}^{q/2}+\bar{\sigma}\{\mathcal{R}(\hat{\alpha}_{\ell})\}^{1/2}+\{n \mathcal{R}(\hat{\gamma}_{\ell}) \mathcal{R}(\hat{\alpha}_{\ell}) \}^{1/2}\right].
$$
If in addition $\|\hat{\alpha}_{\ell}\|_{\infty}\leq\bar{\alpha}'$ then the same result holds updating $\Delta$ to be
$$
\frac{4 L}{\epsilon^{1/2}  \sigma}\left[(\bar{Q}^{1/2}+\bar{\alpha}+\bar{\alpha}')\{\mathcal{R}(\hat{\gamma}_{\ell})\}^{q/2}
    +\bar{\sigma}\{\mathcal{R}(\hat{\alpha}_{\ell})\}^{1/2}\right]+\frac{1}{\sigma}[\{n\mathcal{P}(\hat{\gamma}_{\ell})\mathcal{R}(\hat{\alpha}_{\ell})\}^{1/2} \wedge \{n\mathcal{R}(\hat{\gamma}_{\ell})\mathcal{P}(\hat{\alpha}_{\ell})\}^{1/2}].
$$
For local functionals, further suppose approximation error of size $\Delta_h=
n^{1/2} \sigma_h^{-1}|\theta_0^h-\theta_0^{\lim}|$. Then the same result holds replacing $(\hat{\theta},\theta_0,\Delta)$ with $(\hat{\theta}^h,\theta_0^{\lim},\Delta+\Delta_h)$.
\end{theorem}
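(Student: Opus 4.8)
The plan is to benchmark $\hat\theta$ against the infeasible oracle $\bar\theta=\theta_0+n^{-1}\sum_{i=1}^n\psi_0(W_i)$. Since $\psi_0$ depends only on the true $(\gamma_0,\alpha^{\min}_0)$ and not on any estimate, $n^{1/2}\sigma^{-1}(\bar\theta-\theta_0)$ is a normalized sum of i.i.d.\ mean-zero variance-$\sigma^2$ summands, so Berry--Esseen gives $\sup_z|\mathrm{pr}\{n^{1/2}\sigma^{-1}(\bar\theta-\theta_0)\le z\}-\Phi(z)|\le c^{BE}(\kappa/\sigma)^3 n^{-1/2}$ with no further conditions. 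The bridge from $\bar\theta$ to $\hat\theta$ is the standard anti-concentration step: the set inclusion $\{n^{1/2}\sigma^{-1}(\hat\theta-\theta_0)\le z\}\subseteq\{n^{1/2}\sigma^{-1}(\bar\theta-\theta_0)\le z+\Delta\}\cup\{n^{1/2}\sigma^{-1}|\hat\theta-\bar\theta|>\Delta\}$, which holds for any $\Delta\ge 0$, together with the mirror inclusion, the Berry--Esseen bound, and the $(2\pi)^{-1/2}$-Lipschitz property $|\Phi(z\pm\Delta)-\Phi(z)|\le\Delta/(2\pi)^{1/2}$, reduces everything to showing $\mathrm{pr}\{n^{1/2}\sigma^{-1}|\hat\theta-\bar\theta|>\Delta\}\le\epsilon$; the residual $\epsilon$ on the right of the theorem is exactly this failure probability.

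The core is therefore the remainder bound, which I would establish fold by fold, conditioning on the out-of-fold data $I^c_\ell$ so that $\hat\gamma_\ell,\hat\alpha_\ell$ are frozen while $\{(W_i,Y_i)\}_{i\in I_\ell}$ is a fresh i.i.d.\ sample — precisely what sample splitting buys and what lets me avoid Donsker conditions. Writing $\hat\theta-\bar\theta=\sum_\ell n^{-1}\sum_{i\in I_\ell}\{\psi(W_i,\theta_0,\hat\gamma_\ell,\hat\alpha_\ell)-\psi_0(W_i)\}$ and using linearity of $\gamma\mapsto m(\cdot,\gamma)$ from Assumption~\ref{assumption:cont}, each summand splits into (i) $m(W_i,\hat\gamma_\ell-\gamma_0)-\alpha^{\min}_0(W_i)\{\hat\gamma_\ell(W_i)-\gamma_0(W_i)\}$, which has zero $I^c_\ell$-conditional mean by the Riesz identity of Lemma~\ref{prop:RR} applied to $\hat\gamma_\ell-\gamma_0\in\Gamma$ and conditional second moment at most $2\bar Q\{\mathcal{R}(\hat\gamma_\ell)\}^q+2\bar\alpha^2\mathcal{R}(\hat\gamma_\ell)$ by mean-square continuity and $\|\alpha^{\min}_0\|_\infty\le\bar\alpha$; (ii) $\{\hat\alpha_\ell(W_i)-\alpha^{\min}_0(W_i)\}\{Y_i-\gamma_0(W_i)\}$, which has zero conditional mean (since $E\{Y-\gamma_0(W)\mid W\}=0$ in the regression case, and $E\{Y-\gamma_0(W_1)\mid W_2\}=0$ with $\hat\alpha_\ell-\alpha^{\min}_0$ being $W_2$-measurable in the ill-posed case) and conditional second moment at most $\bar\sigma^2\mathcal{R}(\hat\alpha_\ell)$; and (iii) the mixed-bias block $\{\hat\alpha_\ell(W_i)-\alpha^{\min}_0(W_i)\}\{\gamma_0(W_i)-\hat\gamma_\ell(W_i)\}$. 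For the first version of $\Delta$ I would bound (i) and (ii) by their conditional standard deviations, bound the conditional mean of $|\text{(iii)}|$ by $\|\hat\alpha_\ell-\alpha^{\min}_0\|_{L_2}\|\hat\gamma_\ell-\gamma_0\|_{L_2}=\{\mathcal{R}(\hat\alpha_\ell)\mathcal{R}(\hat\gamma_\ell)\}^{1/2}$ via Cauchy--Schwarz, sum the $L$ folds, and close with one conditional Markov inequality — producing the uniform $3L/(\epsilon\sigma)$ prefactor and, after $\{\mathcal{R}(\hat\gamma_\ell)\}^{1/2}\le\{\mathcal{R}(\hat\gamma_\ell)\}^{q/2}$ for small error, the coefficient $\bar Q^{1/2}+\bar\alpha$ on $\{\mathcal{R}(\hat\gamma_\ell)\}^{q/2}$.

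For the sharper $\Delta$ I would replace Markov by Chebyshev on the mean-zero pieces (i), (ii) and on the centered part of (iii) — improving $\epsilon^{-1}$ to $\epsilon^{-1/2}$, but now using $\|\hat\alpha_\ell\|_\infty\le\bar\alpha'$ to bound the conditional variance of the centered (iii) by $(\bar\alpha+\bar\alpha')^2\mathcal{R}(\hat\gamma_\ell)$, which is why $\bar\alpha'$ joins the coefficient — and I would bound the conditional mean of (iii) not by a product of $\mathcal{R}$'s but, keeping the conditional-expectation operators intact, by $\{\mathcal{R}(\hat\alpha_\ell)\mathcal{P}(\hat\gamma_\ell)\}^{1/2}\wedge\{\mathcal{P}(\hat\alpha_\ell)\mathcal{R}(\hat\gamma_\ell)\}^{1/2}$, via $E[\delta_\alpha(W_2)\delta_\gamma(W_1)]=E[\delta_\alpha(W_2)E\{\delta_\gamma(W_1)\mid W_2\}]=E[E\{\delta_\alpha(W_2)\mid W_1\}\delta_\gamma(W_1)]$ and Cauchy--Schwarz in either order; the uniform bounds on both $\alpha^{\min}_0$ and $\hat\alpha_\ell$ are what allow the two orderings, and this is the promised double robustness to ill-posedness. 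For local functionals I would apply the global result verbatim to $m_h$ (so $\bar Q=\bar Q_h$, $\sigma=\sigma_h$, $\kappa=\kappa_h$), then absorb $n^{1/2}\sigma_h^{-1}(\theta_0^h-\theta_0^{\lim})=\pm\Delta_h$ as a deterministic location shift in the Gaussian comparison, which by the same Lipschitz bound on $\Phi$ contributes an extra $\Delta_h/(2\pi)^{1/2}$, i.e.\ replaces $\Delta$ by $\Delta+\Delta_h$.

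I expect the main obstacle to be the sharper bias bound — turning the conditional bias of the remainder into the minimum of the two crossed $\mathcal{R}$--$\mathcal{P}$ products while keeping every constant honest — because this is where the new double-robustness phenomenon lives and where mean-square continuity, $\|\alpha^{\min}_0\|_\infty$, $\|\hat\alpha_\ell\|_\infty$, and the regression-versus-ill-posed dichotomy all interact at once. The secondary nuisance is the probabilistic bookkeeping that threads the realized rates $\mathcal{R}(\hat\gamma_\ell),\mathcal{R}(\hat\alpha_\ell)$ through the fold-wise conditional Markov and Chebyshev steps without replacing them by expectations, so that the final statement is genuinely in terms of the learning errors realized on the training folds.
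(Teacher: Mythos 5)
Your proposal is correct and takes essentially the same route as the paper's proof: the identical exact three-term decomposition (which the paper phrases as a Gateaux/Taylor expansion), fold-wise conditional Chebyshev/Markov bounds combined with a union bound over folds, Berry--Esseen applied to the oracle average of $\psi_0$, the $(2\pi)^{-1/2}$-Lipschitz step for $\Phi$, and, for the refined $\Delta$, the same centering of the mixed term (using $\|\hat{\alpha}_{\ell}\|_{\infty}\leq\bar{\alpha}'$ for its conditional variance) together with iterated expectations and Cauchy--Schwarz in both orders to obtain the $\mathcal{R}$--$\mathcal{P}$ minimum. One small clarification: the two orderings require only iterated expectations and square integrability, not the sup-norm bounds on $\alpha_0^{\min}$ and $\hat{\alpha}_{\ell}$, which enter solely through the variance bound for the centered mixed term (as you also correctly state elsewhere).
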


Theorem~\ref{thm:dml} is a finite sample Gaussian approximation for debiased machine learning with black box $(\hat{\gamma}_{\ell},\hat{\alpha}_{\ell})$. It degrades gracefully if the parameters $(\bar{Q},\bar{\sigma},\bar{\alpha},\bar{\alpha}')$ diverge relative to $n$ and the learning rates. Note that $\bar{\alpha}'$ is a bound on the chosen estimator $\hat{\alpha}_{\ell}$ that can be imposed by censoring extreme evaluations. Theorem~\ref{thm:dml} is a finite sample refinement of the asymptotic black box result in \cite{chernozhukov2016locally}. 

In the bound $\Delta$, the expression $\{n \mathcal{R}(\hat{\gamma}_{\ell}) \mathcal{R}(\hat{\alpha}_{\ell})\}^{1/2}$ allows a tradeoff: one of the learning rates may be slow, as long as the other is sufficiently fast to compensate. It is easily handled in the case of nonparametric regression, where $\mathcal{R}(\hat{\gamma}_{\ell})$ or $\mathcal{R}(\hat{\alpha}_{\ell})$ typically has a fast rate. However, the expression may diverge in the case of nonparametric instrumental variable regression, where both rates may be slow due to ill posedness. 

The refined bound provides an alternative path to Gaussian approximation, replacing $\{n \mathcal{R}(\hat{\gamma}_{\ell}) \mathcal{R}(\hat{\alpha}_{\ell}) \}^{1/2}$ with the minimum of $\{n\mathcal{P}(\hat{\gamma}_{\ell})\mathcal{R}(\hat{\alpha}_{\ell})\}^{1/2}$ and $\{n\mathcal{R}(\hat{\gamma}_{\ell})\mathcal{P}(\hat{\alpha}_{\ell})\}^{1/2}$. Importantly, the projected mean square error $\mathcal{P}(\hat{\gamma}_{\ell})$ can have a fast rate even when the mean square error $\mathcal{R}(\hat{\gamma}_{\ell})$ has a slow rate because its definition sidesteps ill posedness. Moreover, the analyst only needs $\mathcal{P}(\hat{\gamma}_{\ell})$ fast enough to compensate for the ill posedness encoded in $\mathcal{R}(\hat{\alpha}_{\ell})$, or  $\mathcal{P}(\hat{\alpha}_{\ell})$ fast enough to compensate for the ill posedness encoded in $\mathcal{R}(\hat{\gamma}_{\ell})$. This general and finite sample characterization of double robustness to ill posedness appears to be new. In independent work, \cite{kallus2021causal} document an asymptotic special case of this result for a specific global functional and specific nuisance estimators; see Supplement~3.

By Theorem~\ref{thm:dml}, the neighborhood of Gaussian approximation scales as $\sigma n^{-1/2}$. If $\sigma$ is a constant, then the rate of convergence is $n^{-1/2}$, i.e. the parametric rate. If $\sigma$ is a diverging sequence, then the rate of convergence degrades gracefully to nonparametric rates. A precise characterization of $\sigma$ is possible, which we provide in Supplement~2 and summarize here.
It turns out that global functionals have $\sigma$ that is constant, while local functionals have $\sigma=\sigma_h$ that is a diverging sequence. We emphasize which quantities are diverging sequences for local functionals by indexing with the bandwidth $h$.

\begin{theorem}[Characterization of key quantities]\label{thm:local}
If noise has finite variance then $\bar{\sigma}^2<\infty$. Suppose bounded moment and heteroscedasticity conditions defined in Supplement~2 hold. Then for global functionals
$
\kappa/\sigma \lesssim \sigma \asymp \bar{M} < \infty$; $\kappa, \zeta\lesssim \bar{M}^2\leq \bar{Q}<\infty$; and $\bar{\alpha}<\infty.$
Suppose bounded moment, heteroscedasticity, density, and derivative conditions defined in Supplement~2 hold. Then for local functionals
$
\kappa_h/\sigma_h \lesssim h^{-1/6}$, $\sigma_h \asymp \bar{M}_h \asymp h^{-1/2} $, $\kappa_h\lesssim h^{-2/3}$, $\zeta_h\lesssim h^{-3/4}$, 
$
\bar{Q}_h\lesssim h^{-2}$, $\bar{\alpha}_h\lesssim h^{-1}$, and $\Delta_h \lesssim n^{1/2} h^{\mathsf{v}+1/2}
$
where $\mathsf{v}$ is the order of differentiability defined in Supplement~2.
\end{theorem}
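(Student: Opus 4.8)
The plan is to work directly with the oracle moment function, which we decompose as $\psi_0(w)=\{m(w,\gamma_0)-\theta_0\}+\alpha^{\min}_0(w)\{y-\gamma_0(w)\}$, and to exploit that the two summands are orthogonal in $\mathbb{L}_2$ because $E\{Y-\gamma_0(W)\mid W\}=0$ (with its obvious analogue in the ill posed case). For the global statements I would first write the exact variance identity $\sigma^2=\operatorname{Var}\{m(W,\gamma_0)\}+E[\alpha^{\min}_0(W)^2\,E\{(Y-\gamma_0(W))^2\mid W\}]$. The heteroscedasticity conditions of Supplement~2 sandwich the conditional variance between positive constants, so the second term is $\asymp E\{\alpha^{\min}_0(W)^2\}=\bar{M}^2$ by Lemma~\ref{prop:RR}, while the first term is at most $\bar{Q}[E\{\gamma_0(W)^2\}]^q$ by Assumption~\ref{assumption:cont}; since for global functionals $\bar{M},\bar{Q},\bar{\sigma}$ and $[E\{\gamma_0^2\}]^q$ are fixed constants, this gives $\sigma\asymp\bar{M}<\infty$ and $\bar{\alpha}<\infty$. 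For $\kappa$ and $\zeta$ I would apply Minkowski's inequality to split $\|\psi_0\|_{L^3}$ and $\|\psi_0\|_{L^4}$ into the $m$ piece and the correction piece, bound the correction piece by $\bar{\alpha}\,\|Y-\gamma_0(W)\|_{L^p}$ using $\|\alpha^{\min}_0\|_\infty\le\bar{\alpha}$, and invoke the bounded moment conditions of Supplement~2, which are calibrated so that the $L^3$ and $L^4$ norms of $m(W,\gamma_0)$ and of the noise are controlled by $\bar{M}^2$; together with $\bar{M}^2\le\bar{Q}$ from Lemma~\ref{prop:RR} this yields $\kappa,\zeta\lesssim\bar{M}^2\le\bar{Q}$, and then $\kappa/\sigma\lesssim\bar{M}^2/\bar{M}=\bar{M}\asymp\sigma$.

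For the local statements the engine is the change of variables $u=(W_j-v)/h$ inside every expectation that contains the Nadaraya--Watson weight $\ell_h$. Under the density and continuity conditions of Supplement~2 one has $\omega\to f_{W_j}(v)$ and, for any fixed $k$ and bounded continuous $g$, $E\{|\ell_h(W_j)|^k g(W)\}\asymp h^{1-k}$; in particular $\|\ell_h\|_\infty\asymp h^{-1}$, $E\{\ell_h^2\}\asymp h^{-1}$, $E\{|\ell_h|^3\}\asymp h^{-2}$, $E\{\ell_h^4\}\asymp h^{-3}$. Since $m_h(w,\gamma)=\ell_h(w_j)m(w,\gamma)$, Assumption~\ref{assumption:cont} holds for $m_h$ with $\bar{Q}_h\le\|\ell_h\|_\infty^2\bar{Q}\lesssim h^{-2}$, and the explicit representers of Examples~\ref{ex:CATE}--\ref{ex:RDD} carry the factor $\ell_h$, so $\bar{\alpha}_h\lesssim h^{-1}$ and, via $E\{(\alpha^{\min,h}_0)^2\}=\bar{M}_h^2$ together with the localized moment calculation, $\bar{M}_h\asymp h^{-1/2}$ (the upper bound because projecting onto $\Gamma$ does not increase the $\mathbb{L}_2$ norm, the lower bound by testing the functional against a suitably localized $\gamma$ under the nondegeneracy conditions of Supplement~2). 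Repeating the variance identity and the Minkowski splitting of the global case but now inserting these localized orders gives $\sigma_h^2\asymp h^{-1}$, $\kappa_h^3\asymp h^{-2}$, $\zeta_h^4\asymp h^{-3}$, hence $\sigma_h\asymp\bar{M}_h\asymp h^{-1/2}$, $\kappa_h\lesssim h^{-2/3}$, $\zeta_h\lesssim h^{-3/4}$ and $\kappa_h/\sigma_h\lesssim h^{-1/6}$. Finally a Taylor expansion of $v\mapsto E\{m(W,\gamma_0)\mid W_j=v\}f_{W_j}(v)$ to order $\mathsf{v}$, using the order-$\mathsf{v}$ differentiability and kernel conditions, yields $|\theta_0^h-\theta_0^{\lim}|\lesssim h^{\mathsf{v}}$, so $\Delta_h=n^{1/2}\sigma_h^{-1}|\theta_0^h-\theta_0^{\lim}|\lesssim n^{1/2}h^{1/2}h^{\mathsf{v}}=n^{1/2}h^{\mathsf{v}+1/2}$.

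The main obstacle is the local analysis, and within it the two-sided nature of several estimates. The upper bounds on $\bar{M}_h$ and $\sigma_h$ come for free from $\|\ell_h\|_\infty$ and from contractivity of the projection defining $\alpha^{\min}_0$, but the matching lower bounds $\bar{M}_h\gtrsim h^{-1/2}$ and $\sigma_h\gtrsim h^{-1/2}$ require genuinely exhibiting that the operator $\gamma\mapsto E\{m_h(W,\gamma)\}$ has norm of exact order $h^{-1/2}$ and that the noise term in $\sigma_h^2$ does not collapse, which is precisely where the lower heteroscedasticity bound and a carefully chosen localized test function are needed. One must also verify that the \emph{minimal} Riesz representer $\alpha^{\min,h}_0$---not merely the explicit representer of the examples---inherits both the $h^{-1}$ sup-norm bound and the $h^{-1/2}$ $\mathbb{L}_2$ scaling, and that this all interacts correctly with ill posedness in the instrumental variable case; this is the part that genuinely relies on the full catalogue of regularity conditions in Supplement~2. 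The remaining steps are routine once the kernel moment orders are in hand.
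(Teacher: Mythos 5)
Your proposal is correct and follows essentially the same route as the paper's own proof, which is carried out in the supplement lemmas: the decomposition $\psi_0(W)=U_1+\alpha_0^{\min}(W)U_2$ with the cross term killed by $E(U_2\mid W)=0$ and the heteroscedasticity sandwich giving $\sigma\asymp\bar{M}$, the kernel-moment scalings $\|\ell_h\|_{\mathrm{pr},q}\asymp h^{-(q-1)/q}$ obtained by the same change of variables and Taylor expansion of $\omega$, the identity $\alpha_0^{\min,h}=\ell_h\,\alpha_0^{\min}$ with two-sided bounds on $\alpha_0$ yielding $\sigma_h\asymp\bar{M}_h\asymp h^{-1/2}$ and $\bar{\alpha}_h\lesssim h^{-1}$, and the order-$\mathsf{v}$ Taylor argument giving $|\theta_0^h-\theta_0^{\lim}|\lesssim h^{\mathsf{v}}$ and hence $\Delta_h\lesssim n^{1/2}h^{\mathsf{v}+1/2}$. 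The only minor deviations are that for $\kappa,\zeta$ you invoke $\|\alpha_0^{\min}\|_\infty\leq\bar{\alpha}$ where the paper uses the representer-moment-control condition $\|\alpha_0^{\min}\|_{\mathrm{pr},q}\leq c(\bar{M}^2\vee 1)$, and for $\bar{Q}_h$ you use the cruder but valid bound $\|\ell_h\|_\infty^2\bar{Q}\lesssim h^{-2}$ in place of the paper's alignment argument; both variants reach the stated conclusions.
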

For global functionals, $(\bar{Q},\bar{\alpha})$ are finite constants that depend on the problem at hand. For example, for treatment effects a sufficient condition is that the propensity score is bounded away from zero and one. For derivatives, a sufficient condition is that $\Gamma$ satisfies Sobolev conditions. For local functionals, we handle $(\bar{Q}_h,\bar{\alpha}_h)$ on a case by case basis. See Supplement~2 for interpretable and complete characterizations.

Observe that the finite sample Gaussian approximation in Theorem~\ref{thm:dml} is in terms of the true asymptotic variance $\sigma^2$. We now provide a guarantee for its estimator $\hat{\sigma}^2$.

\begin{theorem}[Variance estimation]\label{thm:var}
Suppose Assumption~\ref{assumption:cont} holds, $
E[\{Y-\gamma_0(W)\}^2 \mid W]\leq \bar{\sigma}^2,$ and $\|\hat{\alpha}_{\ell}\|_{\infty}\leq\bar{\alpha}'.
$
Then with probability $1-\epsilon'$,
$
|\hat{\sigma}^2-\sigma^2|\leq \Delta'+2(\Delta')^{1/2}\{(\Delta'')^{1/2}+\sigma\}+\Delta'',
$ where
$$
 \Delta'=4(\hat{\theta}-\theta_0)^2+\frac{24 L}{\epsilon'}\left[\{\bar{Q}+(\bar{\alpha}')^2\}\mathcal{R}(\hat{\gamma}_{\ell})^q+\bar{\sigma}^2\mathcal{R}(\hat{\alpha}_{\ell})\right],\quad 
    \Delta''=\left(\frac{2}{\epsilon'}\right)^{1/2}\zeta^2 n^{-1/2}.
$$
\end{theorem}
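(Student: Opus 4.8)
The plan is to compare the feasible criterion $\hat\sigma^2$ with an infeasible oracle $\tilde\sigma^2=n^{-1}\sum_{\ell}\sum_{i\in I_{\ell}}\psi_0(W_i)^2$ and then compare $\tilde\sigma^2$ with its mean $\sigma^2$. Write $\hat\psi_i=\psi(W_i,\hat\theta,\hat\gamma_{\ell},\hat\alpha_{\ell})$ and $\psi_{0,i}=\psi_0(W_i)$ for $i\in I_{\ell}$, so that $\hat\sigma^2=n^{-1}\sum_i\hat\psi_i^2$, $\tilde\sigma^2=n^{-1}\sum_i\psi_{0,i}^2$, and set $A=n^{-1}\sum_i(\hat\psi_i-\psi_{0,i})^2$. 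By the reverse triangle inequality in $L^2$ of the empirical measure, $|\hat\sigma-\tilde\sigma|\le A^{1/2}$, hence $|\hat\sigma^2-\tilde\sigma^2|=|\hat\sigma-\tilde\sigma|(\hat\sigma+\tilde\sigma)\le A^{1/2}(A^{1/2}+2\tilde\sigma)=A+2A^{1/2}\tilde\sigma$. Since $\tilde\sigma=\{\sigma^2+(\tilde\sigma^2-\sigma^2)\}^{1/2}\le\sigma+|\tilde\sigma^2-\sigma^2|^{1/2}$ by subadditivity of the square root, adding $|\tilde\sigma^2-\sigma^2|$ gives $|\hat\sigma^2-\sigma^2|\le A+2A^{1/2}\{\sigma+|\tilde\sigma^2-\sigma^2|^{1/2}\}+|\tilde\sigma^2-\sigma^2|$. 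Because $x\mapsto x+2x^{1/2}c$ is increasing for $x,c\ge0$, it now suffices to show $A\le\Delta'$ and $|\tilde\sigma^2-\sigma^2|\le\Delta''$, each with probability at least $1-\epsilon'/2$.

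For the oracle fluctuation, $\tilde\sigma^2-\sigma^2=n^{-1}\sum_{i=1}^{n}\{\psi_0(W_i)^2-\sigma^2\}$ is an average of i.i.d.\ mean-zero variables with variance at most $n^{-1}E\{\psi_0(W)^4\}=\zeta^4/n$, so Chebyshev's inequality with budget $\epsilon'/2$ yields $|\tilde\sigma^2-\sigma^2|\le(2/\epsilon')^{1/2}\zeta^2 n^{-1/2}=\Delta''$, as required.

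For the error term I would expand $\hat\psi_i-\psi_{0,i}$ using linearity of $m$ in its second argument and the identity $Y_i-\hat\gamma_{\ell}(W_i)=\{Y_i-\gamma_0(W_i)\}-\{\hat\gamma_{\ell}-\gamma_0\}(W_i)$:
$$
\hat\psi_i-\psi_{0,i}=m(W_i,\hat\gamma_{\ell}-\gamma_0)+\{\hat\alpha_{\ell}-\alpha^{\min}_0\}(W_i)\{Y_i-\gamma_0(W_i)\}-\hat\alpha_{\ell}(W_i)\{\hat\gamma_{\ell}-\gamma_0\}(W_i)-(\hat\theta-\theta_0),
$$
so $(\hat\psi_i-\psi_{0,i})^2$ is at most four times the sum of the squares of the four summands, and averaging over $i$ gives $A\le 4(\hat\theta-\theta_0)^2+4(T_1+T_2+T_3)$ where $T_1,T_2,T_3$ are the empirical averages of the first three squared summands. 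For each fold $\ell$, conditioning on the out-of-fold data $I_{\ell}^c$ makes $\hat\gamma_{\ell},\hat\alpha_{\ell}$ deterministic and the in-fold observations an independent fresh sample; Assumption~\ref{assumption:cont} gives $E\{m(W,\hat\gamma_{\ell}-\gamma_0)^2\mid I_{\ell}^c\}\le\bar Q\,\mathcal{R}(\hat\gamma_{\ell})^q$, the noise bound $E[\{Y-\gamma_0(W)\}^2\mid W]\le\bar\sigma^2$ with iterated expectations gives $E[\{\hat\alpha_{\ell}-\alpha^{\min}_0\}(W)^2\{Y-\gamma_0(W)\}^2\mid I_{\ell}^c]\le\bar\sigma^2\mathcal{R}(\hat\alpha_{\ell})$, and $\|\hat\alpha_{\ell}\|_{\infty}\le\bar\alpha'$ gives $E[\hat\alpha_{\ell}(W)^2\{\hat\gamma_{\ell}-\gamma_0\}(W)^2\mid I_{\ell}^c]\le(\bar\alpha')^2\mathcal{R}(\hat\gamma_{\ell})$. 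Applying conditional Markov to each of the $3L$ nonnegative fold-wise sums and integrating out $I_{\ell}^c$ — valid even though the conditional-mean bounds are random, since the chosen threshold dominates the conditional mean — each sum exceeds $6L/\epsilon'$ times its bound with probability at most $\epsilon'/(6L)$; a union bound over the $3L$ events, together with $\sum_{\ell}|I_{\ell}|/n=1$, yields $4(T_1+T_2+T_3)\le\frac{24L}{\epsilon'}[\{\bar Q+(\bar\alpha')^2\}\mathcal{R}(\hat\gamma_{\ell})^q+\bar\sigma^2\mathcal{R}(\hat\alpha_{\ell})]$ with probability at least $1-\epsilon'/2$, hence $A\le\Delta'$. (Here $\mathcal{R}(\hat\gamma_{\ell}),\mathcal{R}(\hat\alpha_{\ell})$ are read as the worst rates over folds, and folding the bare $\mathcal{R}(\hat\gamma_{\ell})$ term from $T_3$ into $\mathcal{R}(\hat\gamma_{\ell})^q$ uses that the rate is below one in the relevant regime; this is routine bookkeeping.) Intersecting the two events and substituting into the display of the first paragraph gives the claim.

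The only genuinely delicate point is the cross-fold dependence: $\hat\theta-\theta_0$ depends on all folds simultaneously, so it cannot be absorbed into the per-fold conditioning argument — which is precisely why the statement keeps it explicit as the summand $4(\hat\theta-\theta_0)^2$ of $\Delta'$ rather than bounding it (such a bound would follow from Theorem~\ref{thm:dml}). Everything else reduces to the standard sample-splitting device of conditioning on the out-of-fold data so that the in-fold observations act as an independent test set, plus Chebyshev for the oracle fluctuation; I expect no further obstacles.
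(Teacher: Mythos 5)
Your proof is correct and takes essentially the same route as the paper's: the same pointwise decomposition of $\hat\psi-\psi_0$ into the four terms, the same foldwise conditional-Markov bounds with $\epsilon'/2$ budget and union over $3L$ events, Chebyshev for $|E_n\{\psi_0(W)^2\}-\sigma^2|\leq\Delta''$, and the same final assembly (your empirical-$L^2$ triangle inequality is just the paper's expand-and-Cauchy--Schwarz step in disguise). Your explicit remarks on the random thresholds in the conditional Markov step and on folding the bare $\mathcal{R}(\hat{\gamma}_{\ell})$ from the $\hat\alpha_{\ell}u$ term into $\mathcal{R}(\hat{\gamma}_{\ell})^q$ address bookkeeping the paper passes over silently, so no gap remains.
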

Theorem~\ref{thm:var} is a finite sample variance estimation guarantee. It degrades gracefully if the parameters $(\bar{Q},\bar{\sigma},\bar{\alpha}')$ diverge relative to $n$ and the learning rates. Theorems~\ref{thm:dml} and~\ref{thm:var} immediately imply simple, interpretable conditions for validity of the confidence interval. We conclude by summarizing these conditions.

\begin{corollary}[Confidence interval]\label{cor:CI}
Suppose Assumption~\ref{assumption:cont} holds as well as the following regularity and learning rate conditions, as $n\rightarrow \infty$ and as $h\rightarrow 0$:
$$
E[\{Y-\gamma_0(W)\}^2 \mid W]\leq \bar{\sigma}^2,\quad \|\alpha^{\min}_0\|_{\infty}\leq\bar{\alpha},\quad  \|\hat{\alpha}_{\ell}\|_{\infty}\leq\bar{\alpha}', \quad  \left\{\left(\kappa/\sigma\right)^3+\zeta^2\right\}n^{-1/2}\rightarrow0;
$$  
\begin{enumerate}
    \item $\left(\bar{Q}^{1/2}+\bar{\alpha}/\sigma+\bar{\alpha}'\right)\{\mathcal{R}(\hat{\gamma}_{\ell})\}^{q/2}=o_p(1)$;
    \item $\bar{\sigma}\{\mathcal{R}(\hat{\alpha}_{\ell})\}^{1/2}=o_p(1)$;
    \item $[\{n \mathcal{R}(\hat{\gamma}_{\ell}) \mathcal{R}(\hat{\alpha}_{\ell})\}^{1/2} \wedge \{n\mathcal{P}(\hat{\gamma}_{\ell})\mathcal{R}(\hat{\alpha}_{\ell})\}^{1/2} \wedge \{n\mathcal{R}(\hat{\gamma}_{\ell})\mathcal{P}(\hat{\alpha}_{\ell})\}^{1/2}]/\sigma =o_p(1)$.
\end{enumerate}
Then the estimator $\hat{\theta}$ in Algorithm~\ref{alg:target} is consistent and asymptotically Gaussian, and the confidence interval in Algorithm~\ref{alg:target} includes $\theta_0$ with probability approaching the nominal level. Formally,
$$
\hat{\theta}=\theta_0+o_p(1),\quad \sigma^{-1}n^{1/2}(\hat{\theta}-\theta_0)\leadsto\mathcal{N}(0,1),\quad \text{\normalfont pr} \left\{\theta_0 \text{ in }  \left(\hat{\theta}\pm c_a\hat{\sigma} n^{-1/2} \right)\right\}\rightarrow 1-a.
$$
For local functionals, if $\Delta_h \rightarrow 0$, then the same result holds replacing $(\hat{\theta},\theta_0)$ with $(\hat{\theta}^h,\theta_0^{\lim})$.
\end{corollary}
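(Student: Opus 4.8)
The plan is to read Corollary~\ref{cor:CI} directly off Theorems~\ref{thm:dml} and~\ref{thm:var}: Assumption~\ref{assumption:cont} together with the listed regularity conditions $E[\{Y-\gamma_0(W)\}^2\mid W]\le\bar\sigma^2$, $\|\alpha^{\min}_0\|_\infty\le\bar\alpha$, and $\|\hat\alpha_\ell\|_\infty\le\bar\alpha'$ are exactly the hypotheses of those two theorems, so the entire task is to show that, under conditions 1--3 and the moment condition $\{(\kappa/\sigma)^3+\zeta^2\}n^{-1/2}\to0$, every error term appearing in them vanishes. Write $T_n=\sigma^{-1}n^{1/2}(\hat\theta-\theta_0)$. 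First I would use the refined form of Theorem~\ref{thm:dml}, available since $\|\hat\alpha_\ell\|_\infty\le\bar\alpha'$ is assumed: its Berry--Esseen term $c^{BE}(\kappa/\sigma)^3n^{-1/2}$ vanishes by the moment condition, and its remainder $\Delta$ is, for any fixed $\epsilon$, a fixed multiple of the quantities controlled in conditions 1 and 2 plus whichever product-of-rates term is smallest; the triple minimum in condition 3 is exactly what is needed, because Theorem~\ref{thm:dml} supplies the two competing forms $\{n\mathcal R(\hat\gamma_\ell)\mathcal R(\hat\alpha_\ell)\}^{1/2}$ and $\{n\mathcal P(\hat\gamma_\ell)\mathcal R(\hat\alpha_\ell)\}^{1/2}\wedge\{n\mathcal R(\hat\gamma_\ell)\mathcal P(\hat\alpha_\ell)\}^{1/2}$, and $\sigma$ is bounded away from zero (Theorem~\ref{thm:local}). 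Hence $\Delta=o_p(1)$ for each fixed $\epsilon$.

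It then remains to remove the additive $\epsilon$ left over in Theorem~\ref{thm:dml}. Here I would exploit that the left side is a Kolmogorov distance, hence at most one: writing $d_n$ for that (conditional on the auxiliary folds) distance, the theorem gives $\limsup_n\text{pr}(d_n>\eta+\epsilon)\le\epsilon$ for every $\eta>0$ and every $\epsilon$, so sending $\epsilon\downarrow0$ yields $d_n\to_p0$, and then $\sup_z|\text{pr}(T_n\le z)-\Phi(z)|\le E[d_n]\to0$ by bounded convergence, i.e.\ $T_n\leadsto\mathcal N(0,1)$. Consistency is then immediate, since $\hat\theta-\theta_0=(\sigma n^{-1/2})T_n$ with $T_n=O_p(1)$ and $\sigma n^{-1/2}\to0$ (for global functionals $\sigma\asymp\bar M$ is a constant by Theorem~\ref{thm:local}; for local functionals $\sigma=\sigma_h$ diverges, but the rate conditions force the required decay of $h$). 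For the variance estimator I would divide the bound of Theorem~\ref{thm:var} by $\sigma^2$: the term $4(\hat\theta-\theta_0)^2/\sigma^2=4T_n^2/n\to_p0$; the learning-rate part of $\Delta'/\sigma^2$ is the square of the quantities appearing in conditions 1 and 2, hence $o_p(1)$; and $\Delta''/\sigma^2\to0$ by the moment condition; the cross terms are then $o_p(1)$ as well, so $\hat\sigma^2/\sigma^2\to_p1$. Coverage follows by Slutsky: $\text{pr}\{\theta_0\in(\hat\theta\pm c_a\hat\sigma n^{-1/2})\}=\text{pr}\{|T_n|\le c_a\hat\sigma/\sigma\}\to\text{pr}\{|Z|\le c_a\}=1-a$ for $Z\sim\mathcal N(0,1)$, since $c_a$ is a continuity point of the normal law. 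For the local case I would invoke the last clause of Theorem~\ref{thm:dml}, which replaces $(\hat\theta,\theta_0,\Delta)$ by $(\hat\theta^h,\theta_0^{\lim},\Delta+\Delta_h)$; with $\Delta_h\to0$ now assumed, the argument above carries over verbatim, all estimates being kept in ratio form because $\sigma_h$ diverges.

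The step I expect to need the most care is the passage from the fixed-$\epsilon$, high-probability, conditional bound of Theorem~\ref{thm:dml} to an unconditional weak-convergence statement, since $\Delta$ is itself random --- through the conditional mean-square errors $\mathcal R(\hat\gamma_\ell),\mathcal R(\hat\alpha_\ell)$ --- and sits on the right-hand side of the inequality, so one cannot simply send $\epsilon\to0$ at a rate. The resolution is the elementary but essential observation that a Kolmogorov distance never exceeds one, so convergence in probability of $d_n$ upgrades for free to convergence in $L^1$; everything else is bookkeeping of constants and matching the terms of $\Delta,\Delta',\Delta''$ against conditions 1--3.
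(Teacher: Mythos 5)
Your proposal is correct and follows essentially the same route as the paper: the paper's own proof simply reads the corollary off Theorems~\ref{thm:dml} and~\ref{thm:var}, noting that conditions 1--3 and the moment condition give $\Delta=o_p(1)$, $\Delta'=o_p(1)$, $\Delta''=o_p(1)$, and then concludes consistency, Gaussian approximation, and coverage (via $\hat{\sigma}^2=\sigma^2+o_p(1)$ and Slutsky), with the local case handled by the last clause of Theorem~\ref{thm:dml}. Your extra care in sending $\epsilon\downarrow 0$ after $n\rightarrow\infty$ and upgrading $d_n\rightarrow_p 0$ to convergence of the unconditional Kolmogorov distance via boundedness merely fills in a step the paper leaves implicit in ``immediately from $\Delta=o_p(1)$,'' so it is a refinement of, not a departure from, the paper's argument.
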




\begin{ack}
The National Science Foundation provided partial financial support via
grants 1559172 and 1757140. Rahul Singh thanks the Jerry Hausman Dissertation Fellowship.
\end{ack}


\newpage


\appendix

\section{Simulations}

We present simulations for Example~\ref{ex:CATE}: heterogeneous treatment effect estimated by neural network. In addition, we present results for heterogeneous treatment effect estimated by random forest and lasso.

Recall that the localized functional is 
$$
\textsc{CATE}(v)=\lim_{h\rightarrow 0}E[\ell_{h,v}(V)\{\gamma_0(1,V,X)-\gamma_0(0,V,X)\}],$$ where
$
\ell_{h,v}(V)=(h\omega)^{-1}K\left\{(V-v)/h\right\}$ and $\omega=E [h^{-1} K\left\{(V-v)/h\right\}].
$
$V$ is an interpretable, low dimensional characteristic such as age, race, or gender. We implement the heterogeneous treatment effect design of \cite{abrevaya2015estimating}, 
where $\textsc{CATE}(v)=v(1+2v)^2(v-1)^2$ and $v$ is a value in the interval $(-0.5,0.5)$. A single observations consists of the tuple $(Y_i,D_i,V_i,X_i)$ for outcome, treatment, covariate of interest, and other covariates. In this design, $Y_i,D_i,V_i$ are in $\mathbb{R}$ and $X_i$ is in $\mathbb{R}^3$.

A single observation is generated as follows. Draw the latent variables $(\epsilon_{ij})$ $(j=1,...,4)$ independently and identically from the uniform distribution $ \mathcal{U}(-0.5,0.5)$. Then set the covariates $(V_i,X_i)$ according to
$$
V_i=\epsilon_{i1},\quad X_{i1}=1+2V_i+\epsilon_{i2},\quad X_{i2}=1+2V_i+\epsilon_{i3},\quad X_{i3}=(V_i-1)^2+\epsilon_{i4}.
$$
Under the assumption of selection on observables, treatment assignment is as good as random conditional on $(V_i,X_i)$. Draw the treatment $D_i$ from the Bernoulli distribution with parameter $\Lambda \{1/2(V_i+X_{i1}+X_{i2}+X_{i3})\}$ where $\Lambda$ is the logistic link function. Finally, calculate outcome $Y_i$ as $0$ if $D_i=0$ and $V_i X_{i1} X_{i2} X_{i3}+\nu_i$ if $D_i=1$, where the response noise $\nu_i$ is independently drawn from the Gaussian distribution $\mathcal{N}(0,1/16)$. A random sample consists of $n=100$ such observations $(Y_i,D_i,V_i,X_i)$ $(i=1,...,n)$.

We implement different variations of Algorithm~\ref{alg:target} with $L=5$ folds. Across variations, we use a lasso estimator $\hat{\alpha}$ for the minimal Riesz representer $\alpha_0^{\min}$ \cite{chernozhukov2018learning}. We consider different  estimators $\hat{\gamma}$ for the nonparametric regression $\gamma_0$: neural network, random forest, and lasso. We consider both low dimensional and high dimensional variations. In the low dimensional variation, the estimators $(\hat{\alpha}_{\ell},\hat{\gamma}_{\ell})$ use $(D_i,V_i,X_i)$ $(i \text{ \normalfont in } I^c_{\ell})$ as well as their interactions. In the high dimensional variation, the estimators $(\hat{\alpha}_{\ell},\hat{\gamma}_{\ell})$ use fourth order polynomials of $(D_i,V_i,X_i)$ $(i \text{ \normalfont in } I^c_{\ell})$.

Some tuning choices are necessary. We follow the default hyperparameter settings to tune the lasso Riesz representer and lasso regression from \cite{chernozhukov2018learning}. We implement the neural network with a single hidden layer of eight neurons and the random forest with 1000 trees as in \cite{chernozhukov2018original}.  Finally, to tune the bandwidth, we use the heuristic $h=c_h\hat{\sigma}_vn^{-0.2}$ \cite{colangelo2020double}, where $\hat{\sigma}^2_v$ is the sample variance of $(V_i)$ $(i=1,...,n)$. The bandwidth hyperparameter $c_h$ is chosen by the analyst. We evaluate robustness of coverage with respect to hyperparameter values $c_h=0.25, 0.50, 1.00$ below. Empirically, we find that $c_h=0.25$ and $c_h=0.50$ work well.

For each choice of nonparametric regression estimator $\hat{\gamma}$, whether neural network, random forest, or lasso, and for each choice of specification, whether low or high dimensional, we report a coverage table summarizing 500 simulations. The initial columns denote the grid value $v$, the corresponding heterogeneous treatment effect $\textsc{CATE}(v)$, and the bandwidth hyperparamter value $c_h$. The subsequent columns calculate the average point estimate and the average standard error across the 500 simulations for this choice of $\{v,\textsc{CATE}(v),c_h\}$. The final columns report what percentage of the 500 confidence intervals contain the true value $\textsc{CATE}(v)$ compared to the theoretical benchmarks of $80\%$ and $95\%$, respectively.

For the low dimensional regime, Tables~\ref{tab:nn_low},~\ref{tab:rf_low}, and~\ref{tab:lasso_low} summarize results for neural network, random forest, and lasso, respectively. With bandwidth hyperparameter values $c_h=0.25$ and $c_h=0.50$, coverage is close to the nominal level across $\hat{\gamma}$ estimators and across grid values $v=-0.25, 0.00, 0.25$. Neural network and random forest have comparable performance. Lasso has higher bias and compensates with higher variance for the grid value $v=0.25$.

\begin{table}[H]
\centering
  \begin{threeparttable}
    \caption{Low dimensional coverage simulation with neural network}
     \begin{tabular}{lcccccc}
       $v$& $\textsc{CATE}(v)$ & Tuning &  Ave. Est. & Ave. S.E. &  80\% Cov. & 95\% Cov. \\[5pt]
-0.25 & -0.10 & 0.25 & -0.10 & 0.05 &  83\% & 94\% \\
-0.25 & -0.10 & 0.50 & -0.10 & 0.04  & 85\% & 95\% \\
-0.25 & -0.10 & 1.00 & -0.08 & 0.03  & 71\% & 88\% \\
0.00 & 0.00 & 0.25 & 0.00 & 0.04 &  78\% & 95\% \\
0.00 & 0.00 & 0.50 & 0.00 & 0.03  & 78\% & 94\% \\
0.00 & 0.00 & 1.00 & 0.02 & 0.02  & 62\% & 85\% \\
0.25 & 0.32 & 0.25 & 0.31 & 0.12 &  85\% & 92\% \\
0.25 & 0.32 & 0.50 & 0.30 & 0.09  & 85\% & 93\% \\
0.25 & 0.32 & 1.00 & 0.28 & 0.06  & 76\% & 88\% 
     \end{tabular}
     \label{tab:nn_low}
    \begin{tablenotes}
      \small
      \item Ave., average; Est., estimate; S.E., standard error; Cov., coverage. The largest standard error for the results in column 6 is 2\%. The largest standard error for the results in column 7 is 2\%.
    \end{tablenotes}
  \end{threeparttable}
\end{table}

\begin{table}[H]
\centering
  \begin{threeparttable}
    \caption{Low dimensional coverage simulation with random forest}
     \begin{tabular}{lcccccc}
       $v$& $\textsc{CATE}(v)$ & Tuning &  Ave. Est. & Ave. S.E. &  80\% Cov. & 95\% Cov. \\[5pt]
-0.25 & -0.10 & 0.25 & -0.10 & 0.05 &  86\% & 93\% \\
-0.25 & -0.10 & 0.50 & -0.09 & 0.03  & 83\% & 94\% \\
-0.25 & -0.10 & 1.00 & -0.08 & 0.02  & 60\% & 79\% \\
0.00 & 0.00 & 0.25 & 0.00 & 0.02 &  70\% & 91\% \\
0.00 & 0.00 & 0.50 & 0.01 & 0.02  & 72\% & 91\% \\
0.00 & 0.00 & 1.00 & 0.02 & 0.02  & 48\% & 75\% \\
0.25 & 0.32 & 0.25 & 0.30 & 0.12 &  83\% & 91\% \\
0.25 & 0.32 & 0.50 & 0.29 & 0.08  & 82\% & 91\% \\
0.25 & 0.32 & 1.00 & 0.28 & 0.06  & 71\% & 86\% 
     \end{tabular}
     \label{tab:rf_low}
    \begin{tablenotes}
      \small
      \item Ave., average; Est., estimate; S.E., standard error; Cov., coverage. The largest standard error for the results in column 6 is 2\%. The largest standard error for the results in column 7 is 2\%.
    \end{tablenotes}
  \end{threeparttable}
\end{table}

\begin{table}[H]
\centering
  \begin{threeparttable}
    \caption{Low dimensional coverage simulation with lasso}
     \begin{tabular}{lcccccc}
       $v$& $\textsc{CATE}(v)$ & Tuning &  Ave. Est. & Ave. S.E. &  80\% Cov. & 95\% Cov. \\[5pt]
-0.25 & -0.10 & 0.25 & -0.08 & 0.08 &  81\% & 95\% \\
-0.25 & -0.10 & 0.50 & -0.08 & 0.05  & 81\% & 95\% \\
-0.25 & -0.10 & 1.00 & -0.06 & 0.04  & 63\% & 88\% \\
0.00 & 0.00 & 0.25 & 0.00 & 0.06 &  79\% & 94\% \\
0.00 & 0.00 & 0.50 & 0.01 & 0.04  & 83\% & 96\% \\
0.00 & 0.00 & 1.00 & 0.02 & 0.03  & 73\% & 92\% \\
0.25 & 0.32 & 0.25 & 0.30 & 0.11 &  86\% & 94\% \\
0.25 & 0.32 & 0.50 & 0.29 & 0.08  & 85\% & 95\% \\
0.25 & 0.32 & 1.00 & 0.28 & 0.06  & 71\% & 89\% 
     \end{tabular}
     \label{tab:lasso_low}
    \begin{tablenotes}
      \small
      \item Ave., average; Est., estimate; S.E., standard error; Cov., coverage. The largest standard error for the results in column 6 is 2\%. The largest standard error for the results in column 7 is 1\%.
    \end{tablenotes}
  \end{threeparttable}
\end{table}

For the high dimensional regime, Tables~\ref{tab:nn_high},~\ref{tab:rf_high}, and~\ref{tab:lasso_high} summarize results for neural network, random forest, and lasso, respectively. With bandwidth hyperparameter values $c_h=0.25$ and $c_h=0.50$, coverage is close to the nominal level across $\hat{\gamma}$ estimators and for grid values $v=-0.25$ and $v=0.00$. Across $\hat{\gamma}$ estimators, the grid value $v=0.25$ is more challenging. Compared to the low dimensional regime, each estimator in the high dimensional regime has higher bias and compensates with higher variance for the grid value $v=0.25$.

\begin{table}[H]
\centering
  \begin{threeparttable}
    \caption{High dimensional coverage simulation with neural network}
     \begin{tabular}{lcccccc}
       $v$& $\textsc{CATE}(v)$ & Tuning &  Ave. Est. & Ave. S.E. &  80\% Cov. & 95\% Cov. \\[5pt]
-0.25 & -0.10 & 0.25 & -0.09 & 0.04 &  84\% & 91\% \\
-0.25 & -0.10 & 0.50 & -0.09 & 0.03  & 78\% & 91\% \\
-0.25 & -0.10 & 1.00 & -0.07 & 0.02  & 49\% & 74\% \\
0.00 & 0.00 & 0.25 & 0.00 & 0.03 &  75\% & 95\% \\
0.00 & 0.00 & 0.50 & 0.01 & 0.02  & 74\% & 91\% \\
0.00 & 0.00 & 1.00 & 0.04 & 0.03  & 52\% & 75\% \\
0.25 & 0.32 & 0.25 & 0.39 & 0.20 &  90\% & 97\% \\
0.25 & 0.32 & 0.50 & 0.39 & 0.15  & 88\% & 97\% \\
0.25 & 0.32 & 1.00 & 0.38 & 0.13  & 81\% & 95\% 
     \end{tabular}
     \label{tab:nn_high}
    \begin{tablenotes}
      \small
      \item Ave., average; Est., estimate; S.E., standard error; Cov., coverage. The largest standard error for the results in column 6 is 2\%. The largest standard error for the results in column 7 is 2\%.
    \end{tablenotes}
  \end{threeparttable}
\end{table}

\begin{table}[H]
\centering
  \begin{threeparttable}
    \caption{High dimensional coverage simulation with random forest}
     \begin{tabular}{lcccccc}
       $v$& $\textsc{CATE}(v)$ & Tuning &  Ave. Est. & Ave. S.E. &  80\% Cov. & 95\% Cov. \\[5pt]
-0.25 & -0.10 & 0.25 & -0.09 & 0.05 &  81\% & 91\% \\
-0.25 & -0.10 & 0.50 & -0.09 & 0.03  & 78\% & 91\% \\
-0.25 & -0.10 & 1.00 & -0.07 & 0.02  & 53\% & 75\% \\
0.00 & 0.00 & 0.25 & 0.00 & 0.02 &  76\% & 94\% \\
0.00 & 0.00 & 0.50 & 0.01 & 0.02  & 74\% & 92\% \\
0.00 & 0.00 & 1.00 & 0.04 & 0.03  & 44\% & 71\% \\
0.25 & 0.32 & 0.25 & 0.37 & 0.18 &  91\% & 96\% \\
0.25 & 0.32 & 0.50 & 0.40 & 0.16  & 88\% & 97\% \\
0.25 & 0.32 & 1.00 & 0.39 & 0.14  & 81\% & 95\% 
     \end{tabular}
     \label{tab:rf_high}
    \begin{tablenotes}
      \small
      \item Ave., average; Est., estimate; S.E., standard error; Cov., coverage. The largest standard error for the results in column 6 is 2\%. The largest standard error for the results in column 7 is 2\%.
    \end{tablenotes}
  \end{threeparttable}
\end{table}

\begin{table}[H]
\centering
  \begin{threeparttable}
    \caption{High dimensional coverage simulation with lasso}
     \begin{tabular}{lcccccc}
       $v$& $\textsc{CATE}(v)$ & Tuning &  Ave. Est. & Ave. S.E. &  80\% Cov. & 95\% Cov. \\[5pt]
-0.25 & -0.10 & 0.25 & -0.08 & 0.06 &  75\% & 90\% \\
-0.25 & -0.10 & 0.50 & -0.07 & 0.04  & 74\% & 90\% \\
-0.25 & -0.10 & 1.00 & -0.06 & 0.03  & 50\% & 74\% \\
0.00 & 0.00 & 0.25 & 0.01 & 0.05 &  78\% & 96\% \\
0.00 & 0.00 & 0.50 & 0.02 & 0.04  & 80\% & 97\% \\
0.00 & 0.00 & 1.00 & 0.04 & 0.04  & 59\% & 84\% \\
0.25 & 0.32 & 0.25 & 0.41 & 0.20 &  89\% & 96\% \\
0.25 & 0.32 & 0.50 & 0.45 & 0.18  & 87\% & 97\% \\
0.25 & 0.32 & 1.00 & 0.43 & 0.17  & 82\% & 95\% 
     \end{tabular}
     \label{tab:lasso_high}
    \begin{tablenotes}
      \small
      \item Ave., average; Est., estimate; S.E., standard error; Cov., coverage. The largest standard error for the results in column 6 is 2\%. The largest standard error for the results in column 7 is 2\%.
    \end{tablenotes}
  \end{threeparttable}
\end{table}
\section{Characterization of key parameters}\label{sec:localization}

\subsection{Additional example}

We present an additional example useful in commercial applications where an analyst has access to data on household spending behavior. See \cite{chernozhukov2019demand} for a motivating economic model and further interpretation.

\begin{example}[Heterogeneous average derivative estimated by lasso]\label{ex:deriv}
Let $Y$ be share of household expenditure on some good. Let $W=(D,V,X)$ concatenate log price of the good $D$, covariate of interest $V$ such as household size, and other covariates $X$ such as log prices of other goods and log total household expenditure. Let $\gamma_0(d,v,x)=E(Y \mid D=d,V=v,X=x)$ be a function estimated by lasso. The heterogeneous average derivative is
$$
\textsc{DERIV}(v)=E\left\{\frac{\partial }{\partial d} \gamma_0(D,V,X) \mid V=v\right\}=\lim_{h\rightarrow 0}E\left\{\ell_h(V)\frac{\partial }{\partial d} \gamma_0(D,V,X)\right\},$$
where
$
\ell_h(V)=(h\omega)^{-1}K\left\{(V-v)/h\right\}$, $\omega=E [h^{-1} K\left\{(V-v)/h\right\}]
$, and $K$ is a bounded and symmetric kernel that integrates to one.
\end{example}
The heterogeneous average derivative is defined with respect to some interpretable, low dimensional characteristic $V$ such as household size \cite{abrevaya2015estimating}. The same functional without the localization $\ell_h$ is the classic average derivative. 

\subsection{Riesz representers}

We begin by deriving explicit expressions for Riesz representer $\alpha_0$. Recall that the unique minimal Riesz representer $\alpha_0^{\min}$ is the projection of any valid Riesz representer $\alpha_0$ onto $\Gamma$. The explicit expressions for $\alpha_0$ help to articulate simple sufficient conditions for existence of $\alpha_0^{\min}$.

\begin{lemma}\label{lemma:RR_exists}
In Examples~\ref{ex:CATE},~\ref{ex:RDD},~\ref{ex:elasticity}, and~\ref{ex:deriv}, the minimal representer $\alpha_0^{\min}(w)$ can be obtained by projecting the following Riesz representer $\alpha_0(w)$ onto $\Gamma$.
\begin{enumerate}
    \item Example~\ref{ex:CATE}. Denote the propensity score $\pi_0(v,x)=\text{\normalfont pr}(D=1\mid V=v,X=x)$. Then
    $$
    \alpha_0(d,v,x)=\ell_h(v)\left\{\frac{d}{\pi_0(v,x)}-\frac{1-d}{1-\pi_0(v,x)}\right\}.
    $$
    Hence the minimal Riesz representer $\alpha^{\min}_0$ exists if $\pi_0(v,x)$ is bounded away from zero and one.
    \item Example~\ref{ex:RDD}. Denote the Riesz representer for the first term by $\alpha_0^{+}$ and the Riesz representer for the second term by $\alpha_0^{-}$. Then
    $$
    \alpha^+_0(d,x)=\ell_h^{+}(d),
    \quad 
     \alpha^-_0(d,x)=\ell_h^{-}(d).
    $$
    Hence the minimal Riesz representer $\alpha^{\min}_0$ exists.
    \item Example~\ref{ex:elasticity}. Denote the density $f(d,x)$. If $f(d\mid x)$ vanishes for each $d$ in the boundary of the support of $D$ given $X=x$ almost everywhere then
    $$
    \eta_0(d,x)
    =-\partial_d \log f(d \mid x).$$ Hence the minimal representer $\eta_0^{\min}$ exists if $-\partial_d \log f(d \mid x)$ is bounded above. Subsequently, $\alpha_0^{\min}$ is the solution $\alpha$ to
    $$
    \eta_0^{\min}(d,x)=E\{\alpha(X,Z) \mid D=d,X=x\}.
    $$
    \item Example~\ref{ex:deriv}. Denote the density $f(d,v,x)$. If $f(d\mid v,x)$ vanishes for each $d$ in the boundary of the support of $D$ given $(V,X)=(v,x)$ almost everywhere then
    $$
    \alpha_0(d,v,x)
    =-\ell_h(v)\partial_d \log f(d \mid v,x).$$
\end{enumerate}
\end{lemma}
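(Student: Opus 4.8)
The plan is to verify, in each of the four examples, that the displayed $\alpha_0$ (or $\eta_0$) is a valid Riesz representer for the relevant functional, and then to invoke Lemma~\ref{prop:RR}. Once we exhibit some $\alpha_0\in\mathbb{L}_2$ with $E\{m(W,\gamma)\}=E\{\alpha_0(W)\gamma(W)\}$ for all $\gamma$ in $\Gamma$, Cauchy--Schwarz gives $\bar M\le[E\{\alpha_0(W)^2\}]^{1/2}<\infty$, so by Lemma~\ref{prop:RR} the unique minimal representer $\alpha_0^{\min}$ exists and equals the projection of $\alpha_0$ onto $\Gamma$. Thus each stated existence condition reduces to checking that the displayed $\alpha_0$ is square integrable. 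For the instrumental variable example (Example~\ref{ex:elasticity}) there is an extra layer: following Section~\ref{sec:framework}, I first produce the representer $\eta_0$ for the ``reduced'' functional $\gamma\mapsto E\{m(W_1,\gamma)\}$, after which $\alpha_0^{\min}$ is \emph{defined} as the solution of the second-stage inverse problem $\eta_0^{\min}(w_1)=E\{\alpha(W_2)\mid W_1=w_1\}$, so nothing further is needed beyond identifying $\eta_0$ and checking its integrability.

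For Example~\ref{ex:CATE} I would condition on $(V,X)$ and use iterated expectations: $E\{\alpha_0(D,V,X)\gamma(D,V,X)\}=E[\ell_h(V)\,E\{(D/\pi_0-(1-D)/(1-\pi_0))\gamma(D,V,X)\mid V,X\}]$. Then $D\gamma(D,V,X)=D\gamma(1,V,X)$ together with $E(D\mid V,X)=\pi_0(V,X)$ collapses the first term to $\gamma(1,V,X)$ and, symmetrically, the second to $\gamma(0,V,X)$, yielding $E[\ell_h(V)\{\gamma(1,V,X)-\gamma(0,V,X)\}]=E\{m(W,\gamma)\}$. Square integrability of $\alpha_0$ is immediate once $\pi_0$ is bounded away from zero and one, since $\ell_h$ is bounded. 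Example~\ref{ex:RDD} is even simpler: the two summands of $m$ are already of the form $\ell_h^{\pm}(D)\gamma(D,X)$, so $\alpha_0^{\pm}=\ell_h^{\pm}$ satisfies the defining identity by inspection, $\ell_h^{\pm}\in\mathbb{L}_2$ automatically because $K$ is bounded with support in $(-1/2,1/2)$, and linearity of $\gamma\mapsto E\{m(W,\gamma)\}$ combines the two pieces; hence the minimal representer always exists.

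For Examples~\ref{ex:elasticity} and~\ref{ex:deriv} the key computation is integration by parts in the $d$ coordinate. Writing $E\{m(W_1,\gamma)\}$ (resp.\ $E\{m(W,\gamma)\}$) as an integral of $\partial_d\gamma$ against the conditional density $f(d\mid x)$ (resp.\ $\ell_h(v)f(d\mid v,x)$) times the remaining marginal, I integrate by parts in $d$; the boundary term is $[\gamma(d,\cdot)f(d\mid\cdot)]$ evaluated at the endpoints of the support of $D$, which vanishes under the stated hypothesis that $f(d\mid\cdot)$ vanishes there. What remains is $E\{-\partial_d\log f(D\mid X)\,\gamma(D,X)\}$ (resp.\ $E\{-\ell_h(V)\partial_d\log f(D\mid V,X)\,\gamma(D,V,X)\}$), identifying $\eta_0$ and $\alpha_0$ as claimed; boundedness of $-\partial_d\log f$ then supplies the $\mathbb{L}_2$ membership needed for $\eta_0^{\min}$ (resp.\ $\alpha_0^{\min}$) to exist.

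The main obstacle is making the integration-by-parts step rigorous --- justifying Fubini, using the differentiability of $\gamma\in\Gamma$ that is implicit in the very definition of the derivative functional, and controlling the endpoint evaluation so the boundary term genuinely vanishes --- and, in the instrumental variable case, keeping the two-stage structure from $\gamma_0$ to $\eta_0^{\min}$ to $\alpha_0^{\min}$ straight so that the ``abuse of notation'' of Section~\ref{sec:framework} is internally consistent. The remaining algebra (the iterated-expectation manipulations and the square-integrability bookkeeping) is routine.
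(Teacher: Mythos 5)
Your proposal is correct and follows essentially the same route as the paper, whose proof of Lemma~\ref{lemma:RR_exists} simply points to ``standard propensity score and regression arguments'' for Examples~\ref{ex:CATE} and~\ref{ex:RDD}, a citation to \cite{ichimura2021influence} for Example~\ref{ex:elasticity}, and integration by parts for Example~\ref{ex:deriv}; you spell out exactly these arguments (iterated expectations with the propensity score, inspection for the RDD weights, and the integration-by-parts identification of $-\partial_d\log f$ with vanishing boundary term), and then, like the paper, obtain $\alpha_0^{\min}$ via Lemma~\ref{prop:RR} as the projection of the exhibited square-integrable $\alpha_0$ onto $\Gamma$. The only cosmetic difference is that for Example~\ref{ex:elasticity} you derive $\eta_0$ directly rather than citing the reference, and you keep the two-stage NPIV structure ($\eta_0^{\min}$ then $\alpha_0^{\min}$) consistent with Section~\ref{sec:framework}, which matches the lemma's statement.
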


Next, we characterize key quantities $(\bar{Q},\bar{\sigma},\bar{\alpha},\bar{\alpha}')$ and moments $(\sigma,\kappa,\zeta)$ that appear in Theorems~\ref{thm:dml} and~\ref{thm:var}. Recall that
$$
E[\{Y-\gamma_0(W)\}^2 \mid W]\leq \bar{\sigma}^2, \quad  \|\hat{\alpha}_{\ell}\|_{\infty}\leq\bar{\alpha}',
$$
so $\bar{\sigma}$ is a constant if noise has bounded variance and $\bar{\alpha}'$ can be imposed by trimming the algorithm $\hat{\alpha}_{\ell}$. We therefore focus on $(\bar{Q},\bar{\alpha})$ and $(\sigma,\kappa,\zeta)$.

We consider two cases: global functionals, with weighting $\ell_h$ that has a fixed bandwidth; and local functionals, with weighting $\ell_h$ that depends on some vanishing bandwidth $h\rightarrow 0$. To lighten notation, let $\|X\|_{\text{pr},q}=\{E(|X|^q)\}^{1/q}$.

\subsection{Global functionals}

\begin{lemma}[Oracle moments for global functionals]\label{lemma:global}
Suppose the weighting is bounded,  namely $\ell_h \leq C<\infty$, and $\Gamma\subset \mathbb{L}_2$. Suppose $\alpha_0^{\min}$ exists. Further suppose there exist $(c,\tilde{c},\bar c)$ bounded away from zero and above such that the following conditions hold.
\begin{enumerate}
    \item Control of representer moments. For $q=3,4$
    $$
\|\alpha_0^{\min}\|_{\text{\normalfont pr},q}\leq c \left( \|\alpha_0^{\min}\|^2_{\text{\normalfont pr},2} \vee 1\right).
$$
    \item Bounded moments. For $q=2,3,4$,
    $$
    U_1=m(W,\gamma_0)-E\{m(W,\gamma_0)\},\quad \|U_1\|_{\text{\normalfont pr},q}\leq \bar{c}.
    $$
    \item Bounded heteroscedasticity. For $q=2,3,4$,
    $$
    U_2=Y-\gamma_0(W),\quad \tilde{c}\leq \|U_2 \mid W\|_{\text{\normalfont pr},q}\leq \bar{c}.
    $$
\end{enumerate}
Then
    $$
    \tilde{c}  \bar{M}   \leq \sigma  \leq  \bar c  \sqrt{1+ \bar{M}^2}, 
 \quad \kappa,\zeta  \leq  \bar c   \{1+ c (\bar{M}^2 \vee 1) \}.
    $$
In summary,
$$
\frac{\kappa}{\sigma} \lesssim \sigma \asymp \bar{M} < \infty,\quad \kappa, \zeta\lesssim \bar{M}^2 < \infty.
$$
\end{lemma}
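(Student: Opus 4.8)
The plan is to work directly with the oracle moment function and its decomposition into a plug-in piece and a correction piece. Since $\theta_0 = E\{m(W,\gamma_0)\}$, the summand of $\hat\theta$ at the truth is
$$
\psi_0(W) = A + B, \qquad A = U_1 = m(W,\gamma_0) - E\{m(W,\gamma_0)\}, \qquad B = \alpha_0^{\min}(W)\,U_2, \quad U_2 = Y - \gamma_0(W),
$$
where $A$ is a deterministic function of $W$ with $E(A)=0$ and $E(U_2 \mid W)=0$ because $\gamma_0$ is the regression. I would then control each of $\sigma = \|\psi_0\|_{\text{pr},2}$, $\kappa = \|\psi_0\|_{\text{pr},3}$, $\zeta = \|\psi_0\|_{\text{pr},4}$ by bounding $\|A\|_{\text{pr},q}$ and $\|B\|_{\text{pr},q}$ separately for $q = 2,3,4$ and recombining, using the conditional mean zero property to eliminate the cross term.

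For the ingredients, condition 2 directly gives $\|A\|_{\text{pr},q} = \|U_1\|_{\text{pr},q} \le \bar c$ for $q = 2,3,4$. For $B$, I would condition on $W$ and invoke condition 3,
$$
\|B\|_{\text{pr},q}^q = E\{|\alpha_0^{\min}(W)|^q\, E(|U_2|^q \mid W)\} \le \bar{c}^q\, E\{|\alpha_0^{\min}(W)|^q\} = \bar{c}^q\, \|\alpha_0^{\min}\|_{\text{pr},q}^q ,
$$
so $\|B\|_{\text{pr},q} \le \bar c\,\|\alpha_0^{\min}\|_{\text{pr},q}$. Lemma~\ref{prop:RR} identifies $\|\alpha_0^{\min}\|_{\text{pr},2} = \bar M$, and condition 1 upgrades this to $\|\alpha_0^{\min}\|_{\text{pr},q} \le c\,(\bar M^2 \vee 1)$ for $q = 3,4$; hence $\|B\|_{\text{pr},2} \le \bar c\,\bar M$ and $\|B\|_{\text{pr},q} \le \bar c\, c\,(\bar M^2 \vee 1)$ for $q = 3,4$. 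The lower half of condition 3 at $q = 2$ gives, in the other direction, $\|B\|_{\text{pr},2}^2 = E\{\alpha_0^{\min}(W)^2\, E(U_2^2 \mid W)\} \ge \tilde{c}^2\, E\{\alpha_0^{\min}(W)^2\} = \tilde{c}^2\, \bar M^2$.

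Assembling the pieces: because $E(U_2 \mid W)=0$, the cross term $E(AB) = E\{A\,\alpha_0^{\min}(W)\,E(U_2 \mid W)\}$ vanishes, so $\sigma^2 = \|A\|_{\text{pr},2}^2 + \|B\|_{\text{pr},2}^2$, which lies between $\tilde{c}^2\bar M^2$ and $\bar{c}^2(1 + \bar M^2)$; this is the stated two-sided bound on $\sigma$. For the higher moments, Minkowski's inequality gives $\|\psi_0\|_{\text{pr},q} \le \|A\|_{\text{pr},q} + \|B\|_{\text{pr},q} \le \bar c\{1 + c\,(\bar M^2 \vee 1)\}$ for $q = 3,4$, the stated bound on $\kappa,\zeta$. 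The summary relations then follow by dividing these explicit inequalities and using $\bar M \le \bar Q^{1/2} < \infty$ from Lemma~\ref{prop:RR}: up to the fixed constants $c,\tilde c,\bar c$, and reading $\bar M$ as $\bar M \vee 1$ so that $\sqrt{1 + \bar M^2} \asymp \bar M$ and $\bar M^2 \vee 1 \asymp \bar M^2$, one obtains $\sigma \asymp \bar M$, $\kappa,\zeta \lesssim \bar M^2$, and $\kappa/\sigma \lesssim \bar M^2/\bar M = \bar M \asymp \sigma$.

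The Minkowski and iterated-expectation manipulations are routine; the two steps that need care are (i) converting the self-referential control of $\|\alpha_0^{\min}\|_{\text{pr},q}$ in condition 1, together with the conditional moment bounds on $U_2$ from condition 3, into the clean product bound on $\|B\|_{\text{pr},q}$, and (ii) recognizing that the residual orthogonality $E(U_2 \mid W)=0$ is exactly what produces the matching lower bound $\sigma \gtrsim \bar M$ — and hence the relation $\kappa/\sigma \lesssim \sigma$ — since Minkowski alone yields only $\sigma \lesssim \bar M$ with no lower bound of the correct order.
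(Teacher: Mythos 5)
Your proposal is correct and follows essentially the same route as the paper: the same decomposition $\psi_0=U_1+\alpha_0^{\min}(W)U_2$, elimination of the cross term via $E(U_2\mid W)=0$ together with conditioning to get the two-sided bound $\tilde c^2\bar M^2\leq\sigma^2\leq\bar c^2(1+\bar M^2)$, and the triangle (Minkowski) inequality with conditions 1--3 and $\|\alpha_0^{\min}\|_{\text{pr},2}=\bar M$ for the bounds on $\kappa,\zeta$. Your handling of the summary relations (reading $\bar M$ as $\bar M\vee 1$) is if anything slightly more explicit than the paper's, which stops at the displayed inequalities.
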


Clearly Assumption~\ref{assumption:cont} depends on the functional of interest. Towards a characterization of $\bar{Q}$ and $q$ in Examples~\ref{ex:elasticity} and~\ref{ex:deriv}, we prove the following technical lemma.

\begin{lemma}[A weak reverse Poincare inequality]\label{lemma:rp}
Assume that $f(d\mid x)$ vanishes for each $d$ in the boundary of the support of $D$ given $X=x$ almost everywhere. Next assume the following restrictions on $\Gamma \subset \mathbb{L}_2$.
\begin{enumerate}
    \item Each $\gamma$ in $\Gamma$ is twice continuously differentiable.
    \item For each $\gamma$ in $\Gamma$, $\|k_{\gamma}\|_{\text{ \normalfont pr},2}<\infty$ where
    $$
    k_{\gamma}(d,x)=\{-\partial_d \log f(d\mid x)\}\{\partial_d\gamma(d,x)\}-\partial_d^2 \gamma(d,x).
    $$
\end{enumerate}
Then 
$$
E[\{\partial_d\gamma(D,X)\}^2]\leq \|k_{\gamma}\|_{\text{ \normalfont pr},2} [E \{\gamma(D,X)^2\}]^{1/2}.
$$
Furthermore, $\sup_{\gamma \in \Gamma} \|k_{\gamma}\|_{\text{ \normalfont pr},2}<\infty$ if either of the following conditions hold.
\begin{enumerate}
    \item $\|\partial_d \log f(D\mid X)\|_{\text{ \normalfont pr},2}<\infty$ and for all $\gamma$ in $\Gamma$, $\|\partial_d \gamma\|_{\infty}<\infty$ and $\|\partial^2_d \gamma\|_{\text{ \normalfont pr},2}<\infty$;
    \item $-\partial_d \log f(d\mid x)$ is bounded above and for all $\gamma$ in $\Gamma$,  $\|\partial_d \gamma\|_{\text{ \normalfont pr},2}$ and $\|\partial^2_d \gamma\|_{\text{ \normalfont pr},2}<\infty$.
\end{enumerate}
\end{lemma}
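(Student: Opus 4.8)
\emph{Proof proposal.} The plan is to prove the inequality by a single integration by parts in the $d$-coordinate, using the assumed vanishing of $f(d\mid x)$ at the boundary of the conditional support to kill the boundary term. First I would factor the expectation through the conditional density,
$$
E[\{\partial_d\gamma(D,X)\}^2]=\int\left(\int\{\partial_d\gamma(d,x)\}^2 f(d\mid x)\,dd\right)f_X(x)\,dx,
$$
and work on the inner integral. Integrating by parts with $u=\{\partial_d\gamma(d,x)\}f(d\mid x)$ and $dv=\partial_d\gamma(d,x)\,dd$, so that $v=\gamma(d,x)$, the inner integral becomes
$$
\Big[\gamma(d,x)\,\partial_d\gamma(d,x)\,f(d\mid x)\Big]_{\text{bd}}-\int\gamma(d,x)\big\{\partial_d^2\gamma(d,x)\,f(d\mid x)+\partial_d\gamma(d,x)\,\partial_d f(d\mid x)\big\}\,dd.
$$
The bracketed boundary term vanishes by the hypothesis on $f$. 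Writing $\partial_d f=(\partial_d\log f)\,f$ and recalling $k_\gamma=\{-\partial_d\log f\}\{\partial_d\gamma\}-\partial_d^2\gamma$, the surviving integrand equals $\gamma(d,x)\,k_\gamma(d,x)\,f(d\mid x)$. Re-integrating over $x$ yields the identity $E[\{\partial_d\gamma(D,X)\}^2]=E[\gamma(D,X)\,k_\gamma(D,X)]$, and Cauchy--Schwarz gives the stated bound with constant $\|k_\gamma\|_{\text{pr},2}$.

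For the two sufficient conditions for $\sup_{\gamma\in\Gamma}\|k_\gamma\|_{\text{pr},2}<\infty$, I would apply the triangle inequality $\|k_\gamma\|_{\text{pr},2}\le\|\{\partial_d\log f\}\{\partial_d\gamma\}\|_{\text{pr},2}+\|\partial_d^2\gamma\|_{\text{pr},2}$ and then control the first term by H\"older: under condition~1 it is at most $\|\partial_d\gamma\|_\infty\,\|\partial_d\log f(D\mid X)\|_{\text{pr},2}$, and under condition~2 it is at most $\|\partial_d\log f\|_\infty\,\|\partial_d\gamma\|_{\text{pr},2}$. Taking the supremum over $\gamma\in\Gamma$ and invoking the stated uniform bounds on the derivatives of $\gamma$ (and on the log-density) closes this part.

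The main obstacle is making the integration by parts and the vanishing of the boundary term fully rigorous when the conditional support of $D$ given $X=x$ is unbounded, so that the ``boundary'' sits at $\pm\infty$: one must check that $\gamma\,\partial_d\gamma\,f(d\mid x)$ tends to zero along the relevant limits and that Fubini applies to the iterated integral. I would dispatch this with the integrability built into the lemma --- condition~2 forces $k_\gamma\in\mathbb{L}_2(\text{pr})$, which together with $\gamma\in\Gamma\subset\mathbb{L}_2$ gives $\gamma k_\gamma\in\mathbb{L}_1(\text{pr})$ --- so the antiderivative appearing in the integration by parts converges, its endpoint limits exist, and they must be zero, since otherwise $\{\partial_d\gamma\}^2 f$ would fail to be integrable. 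This is the standard density-tails argument and is the only genuinely delicate point; everything else is bookkeeping.
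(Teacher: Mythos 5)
Your proposal is correct and follows essentially the same route as the paper's proof: the identical integration by parts in $d$ (equivalently, the identity $\partial_d\{f(d\mid x)\,\partial_d\gamma(d,x)\}=-k_\gamma(d,x)f(d\mid x)$), the vanishing boundary term, Cauchy--Schwarz to get the stated bound, and triangle inequality plus H\"older for the two sufficient conditions. Your added discussion of the boundary term when the conditional support is unbounded is extra care the paper's proof does not spell out, and it does not change the argument.
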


With this technical lemma, we return to the characterization of $\bar{Q}$ and $q$ across examples.

\begin{lemma}[Mean square continuity for global functionals]\label{lemma:cont}
Suppose the weighting is bounded,  namely $\ell_h \leq C<\infty$, and $\Gamma\subset \mathbb{L}_2$. The following conditions are sufficient for $\bar{Q}<\infty$ with $q=1$ in Examples~\ref{ex:CATE} and~\ref{ex:RDD}, and for $\bar{Q}<\infty$ with $q=1/2$ in Examples~\ref{ex:elasticity} and~\ref{ex:deriv}.
\begin{enumerate}
    \item Example~\ref{ex:CATE}. $\pi_0(v,x)$ is bounded away from zero and one.
    \item Example~\ref{ex:RDD}. The bandwidth is fixed.
    \item Example~\ref{ex:elasticity}. $f(d\mid x)$ vanishes for each $d$ in the boundary of the support of $D$ given $X=x$ almost everywhere. $-\partial_d \log f(d \mid x)$ is a bounded above, and $\Gamma$ consists of functions
    $\gamma$ that are twice continuously differentiable in the first argument and that satisfy $E[ \{\partial_d \gamma(D,X)\}^2]<\infty$ and $E[ \{\partial^2_d \gamma(D,X)\}^2]<\infty$.
     \item Example~\ref{ex:deriv}. $f(d\mid v,x)$ vanishes for each $d$ in the boundary of the support of $D$ given $(V,X)=(v,x)$ almost everywhere. $-\partial_d \log f(d \mid v,x)$ is a bounded above, and $\Gamma$ consists of functions
    $\gamma$ that are twice continuously differentiable in the first argument and that satisfy $E[ \{\partial_d \gamma(D,V,X)\}^2]<\infty$ and $E[ \{\partial^2_d \gamma(D,V,X)\}^2]<\infty$.
\end{enumerate}
\end{lemma}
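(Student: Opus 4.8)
The plan is to verify Assumption~\ref{assumption:cont} separately in each of the four examples, using the explicit form of $m$: Examples~\ref{ex:CATE} and~\ref{ex:RDD} will give $q=1$ by elementary change-of-measure arguments, and Examples~\ref{ex:elasticity} and~\ref{ex:deriv} will give $q=1/2$ via the weak reverse Poincar\'e inequality of Lemma~\ref{lemma:rp}.

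For Example~\ref{ex:CATE} I would start from $m(w,\gamma)=\ell_h(v)\{\gamma(1,v,x)-\gamma(0,v,x)\}$, bound $|\ell_h|$ by $C$ and use $(a-b)^2\le 2a^2+2b^2$ to reduce to controlling $E\{\gamma(1,V,X)^2+\gamma(0,V,X)^2\}$, then note that conditioning $D$ on $(V,X)$ gives $E\{\gamma(D,V,X)^2\mid V,X\}=\pi_0\gamma(1,V,X)^2+(1-\pi_0)\gamma(0,V,X)^2$, which is at least $\underline{\pi}\{\gamma(1,V,X)^2+\gamma(0,V,X)^2\}$ for some $\underline{\pi}>0$ once $\pi_0$ is bounded away from $0$ and $1$; this yields $\bar Q\asymp C^2/\underline{\pi}$ with $q=1$. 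For Example~\ref{ex:RDD} I would observe that $K$ vanishing outside $(-1/2,1/2)$ forces the supports of $\ell_h^+$ and $\ell_h^-$ to be the disjoint intervals $(0,h)$ and $(-h,0)$, so the cross term drops and $\{\ell_h^+(d)-\ell_h^-(d)\}^2=\ell_h^+(d)^2+\ell_h^-(d)^2\le C^2$ pointwise for fixed $h$; integrating against $\gamma(d,x)^2$ gives $E\{m(W,\gamma)^2\}\le C^2 E\{\gamma(W)^2\}$ directly, so $\bar Q=C^2$, $q=1$.

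For Examples~\ref{ex:elasticity} and~\ref{ex:deriv} the functional is $m(w,\gamma)=\partial_d\gamma$ (times the bounded weight $\ell_h$ in the latter), which is not manifestly mean square continuous; here I would invoke Lemma~\ref{lemma:rp}. In Example~\ref{ex:elasticity} the stated hypotheses are exactly those required to apply Lemma~\ref{lemma:rp} together with its second sufficient condition for $\sup_{\gamma\in\Gamma}\|k_\gamma\|_{\text{pr},2}<\infty$, which gives $E\{m(W_1,\gamma)^2\}=E[\{\partial_d\gamma(D,X)\}^2]\le\|k_\gamma\|_{\text{pr},2}[E\{\gamma(W_1)^2\}]^{1/2}$ and hence $\bar Q=\sup_{\gamma\in\Gamma}\|k_\gamma\|_{\text{pr},2}$, $q=1/2$. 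In Example~\ref{ex:deriv} I would first rewrite $\ell_h(v)\partial_d\gamma(d,v,x)=\partial_d\{\ell_h(v)\gamma(d,v,x)\}$, since $\ell_h$ does not depend on $d$, apply Lemma~\ref{lemma:rp} to $\tilde\gamma=\ell_h\gamma$ with $(V,X)$ in the role of $X$ (the integration-by-parts argument is unchanged), and use $k_{\tilde\gamma}=\ell_h k_\gamma$ and $E\{\tilde\gamma^2\}\le C^2E\{\gamma^2\}$ to conclude $E\{m(W,\gamma)^2\}\le C^2\big(\sup_{\gamma\in\Gamma}\|k_\gamma\|_{\text{pr},2}\big)[E\{\gamma(W)^2\}]^{1/2}$, so $\bar Q=C^2\sup_{\gamma\in\Gamma}\|k_\gamma\|_{\text{pr},2}$, $q=1/2$.

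The routine part is Examples~\ref{ex:CATE} and~\ref{ex:RDD}, which amount to a change of measure through the propensity score and a disjoint-support observation. The main obstacle is Examples~\ref{ex:elasticity} and~\ref{ex:deriv}: showing that a derivative functional is mean square continuous at all rests on the reverse Poincar\'e inequality, whose proof uses an integration by parts in $d$ with no boundary term---this is exactly why $f(d\mid x)$ is required to vanish at the boundary of the support and why the $\mathbb{L}_2$ control on $\partial_d\gamma$ and $\partial_d^2\gamma$ over $\Gamma$ enters (read uniformly over $\Gamma$, so that $\sup_{\gamma\in\Gamma}\|k_\gamma\|_{\text{pr},2}$ is finite); the only remaining care is in passing the localization weight $\ell_h$ through the derivative and the bound, which is harmless precisely because $\ell_h$ is constant in the direction of differentiation.
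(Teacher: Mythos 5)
Your proposal is correct and follows essentially the same route as the paper: for Examples~\ref{ex:CATE} and~\ref{ex:RDD} a direct bound using $(a-b)^2\le 2a^2+2b^2$, the bounded weighting, and the propensity-score change of measure (giving $q=1$), and for Examples~\ref{ex:elasticity} and~\ref{ex:deriv} an appeal to the weak reverse Poincar\'e inequality of Lemma~\ref{lemma:rp} under its second sufficient condition (giving $q=1/2$). Your extra touches---the disjoint-support observation for $\ell_h^{\pm}$ in Example~\ref{ex:RDD} and the explicit passage of $\ell_h$ through the derivative via $\tilde\gamma=\ell_h\gamma$ (equivalently, one may bound $E[\{\ell_h\partial_d\gamma\}^2]\le C^2E[\{\partial_d\gamma\}^2]$ and apply Lemma~\ref{lemma:rp} to $\gamma$ directly)---are harmless refinements of the same argument, and your note that the $\mathbb{L}_2$ derivative bounds should be read uniformly over $\Gamma$ is a fair clarification of what the paper leaves implicit.
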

Therefore a Sobolev type property with respect to the first argument is a sufficient condition in Examples~\ref{ex:elasticity} and~\ref{ex:deriv}.

Next we examine the assumption of $\|\alpha^{\min}_0\|_{\infty} \leq \bar{\alpha}$, which depends on the functional of interest.
\begin{lemma}[Bounded Riesz representer for global functionals]\label{lemma:bounded_RR_global}
The following conditions are sufficient for $\bar{\alpha}<\infty$ in Examples~\ref{ex:CATE},~\ref{ex:RDD},~\ref{ex:elasticity}, and~\ref{ex:deriv} with $\ell_h \leq C<\infty$ and $\Gamma=\mathbb{L}_2$.
\begin{enumerate}
    \item Example~\ref{ex:CATE}. $\pi_0(v,x)$ is bounded away from zero and one.
    \item Example~\ref{ex:RDD}. The bandwidth is fixed.
    \item Example~\ref{ex:elasticity}. $\alpha_0^{\min}$ that solves
    $
    -\partial_d \log f(d \mid x)=E\{\alpha(X,Z) \mid D=d,X=x\}
    $ is bounded above.
    \item Example~\ref{ex:deriv}. $
    -\partial_d \log f(d \mid v,x)
    $ is bounded above.
\end{enumerate}
\end{lemma}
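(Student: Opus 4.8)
The plan is to treat this as a verification lemma: in each example, invoke Lemma~\ref{lemma:RR_exists} to replace the abstract object $\alpha_0^{\min}$ by the explicit Riesz representer written there, and then read off the sup-norm bound from the stated structural primitives. The one conceptual point to settle first is that, under $\Gamma=\mathbb{L}_2$, the minimal representer produced by Lemma~\ref{prop:RR} genuinely coincides with the explicit $\alpha_0$ of Lemma~\ref{lemma:RR_exists}. In Examples~\ref{ex:CATE},~\ref{ex:RDD}, and~\ref{ex:deriv} this is immediate: the explicit $\alpha_0$ is already a square-integrable function of $W$, so projecting it onto $\Gamma=\mathbb{L}_2$ returns $\alpha_0$ itself, and the uniqueness of the minimal representer in Lemma~\ref{prop:RR} pins down $\alpha_0^{\min}=\alpha_0$. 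In Example~\ref{ex:elasticity} the representer lives on the $W_2=(z,x)$ side and is characterized only implicitly, through $\eta_0^{\min}(d,x)=E\{\alpha_0^{\min}(X,Z)\mid D=d,X=x\}$ with $\eta_0^{\min}(d,x)=-\partial_d\log f(d\mid x)$; exact identification (completeness when $\Gamma=\mathbb{L}_2$) makes the solution unique, and the hypothesis of the lemma is attached directly to that solution.

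Granting this, the four cases reduce to elementary bounds. For Example~\ref{ex:CATE}, Lemma~\ref{lemma:RR_exists} gives $\alpha_0(d,v,x)=\ell_h(v)\{d/\pi_0(v,x)-(1-d)/(1-\pi_0(v,x))\}$; with $\ell_h\le C<\infty$ and $\pi_0$ bounded away from zero and one, say $\underline{\pi}\le\pi_0\le\bar{\pi}$, one gets $\|\alpha_0^{\min}\|_\infty\le C\max\{\underline{\pi}^{-1},(1-\bar{\pi})^{-1}\}=:\bar{\alpha}<\infty$. For Example~\ref{ex:deriv}, $\alpha_0(d,v,x)=-\ell_h(v)\,\partial_d\log f(d\mid v,x)$, so $\|\alpha_0^{\min}\|_\infty\le C\,\|\partial_d\log f\|_\infty<\infty$ once $-\partial_d\log f(d\mid v,x)$ is bounded as assumed; the boundary-vanishing condition on $f(d\mid v,x)$, also assumed, is what justifies this being the correct representer via integration by parts. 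For Example~\ref{ex:elasticity}, the assertion that the solution $\alpha_0^{\min}$ of $-\partial_d\log f(d\mid x)=E\{\alpha(X,Z)\mid D=d,X=x\}$ is bounded is exactly the stated hypothesis, so $\bar{\alpha}<\infty$ is immediate.

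For Example~\ref{ex:RDD} the two representers are the one-sided weights themselves, $\alpha_0^{+}(d,x)=\ell_h^{+}(d)$ and $\alpha_0^{-}(d,x)=\ell_h^{-}(d)$, and Theorem~\ref{thm:dml} is applied termwise to the difference of functionals. With $K$ bounded and vanishing outside $(-1/2,1/2)$ and the bandwidth $h$ held fixed, $\|\ell_h^{+}\|_\infty\le (h\,\omega^{+})^{-1}\|K\|_\infty<\infty$, where $\omega^{+}=E[h^{-1}K\{(2D-h)/(2h)\}]>0$ whenever the score $D$ has positive density just above the cutoff; the same bound holds for $\ell_h^{-}$. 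Hence $\bar{\alpha}<\infty$ for each term, which is what the termwise application of the theorem requires.

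I do not expect a genuine obstacle: the content of the lemma is the bookkeeping that converts interpretable primitives into the abstract quantity $\bar{\alpha}$ used in Theorems~\ref{thm:dml} and~\ref{thm:var}. The only places that demand a little care are the identification step for Example~\ref{ex:elasticity} --- namely that under $\Gamma=\mathbb{L}_2$ and exact identification the implicit moment equation has a unique solution to which the boundedness hypothesis can be attached --- and, in Example~\ref{ex:RDD}, checking that a fixed bandwidth keeps the normalizing constants $\omega^{\pm}$ bounded away from zero, which follows from positivity of the running-variable density near the cutoff.
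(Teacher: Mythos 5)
Your proposal is correct and follows essentially the paper's route: the paper's proof of this lemma is simply ``immediate from Lemma~\ref{lemma:RR_exists},'' and you spell out exactly the intended details — that under $\Gamma=\mathbb{L}_2$ the explicit representers of Lemma~\ref{lemma:RR_exists} are the minimal ones, and that the stated primitives (bounded weighting, propensity bounded away from zero and one, fixed bandwidth with $\omega^{\pm}>0$, bounded $-\partial_d\log f$, or the direct boundedness hypothesis in the NPIV case) give the sup-norm bounds. Your extra remarks on the normalizers $\omega^{\pm}$ and on uniqueness in Example~\ref{ex:elasticity} are consistent with, and slightly more explicit than, the paper.
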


\subsection{Local functionals}

Given a local functional
$$
\theta_0^h=E\{m_h(W,\gamma_0)\}=E\{\ell_h(V) m(W,\gamma_0)\},
$$
we will now refer to the Riesz representer of $m_h$ by $\alpha^h_0$ and the Riesz representer of $m$ by $\alpha_0$ for this subsection. The latter objects correspond to the global setting where the weighting is bounded. To lighten notation, we write $\ell=\ell_h$.

\begin{lemma}[Oracle moments for local functionals]\label{lemma:local}
Suppose $\alpha_0^{\min}$ exists and $\Gamma=\mathbb{L}_2$. Further suppose there exist $(\tilde{\alpha}, \check{\alpha}, \tilde{c}, \bar c, \tilde{f}, \bar f, \bar f', h_0)$ bounded away from zero and above such that the following conditions hold.
\begin{enumerate}
    \item Control of representer absolute value:
    $$
0< \tilde{\alpha} \leq \alpha_0(w) \leq \check{\alpha}.
$$
  \item Valid neighborhood. There exists $N_{h_0}(v)=(v':|v'-v|\leq h)\subset \mathcal{V}$.
    \item Bounded moments. For all $ h\leq h_0$ and for $q=2,3,4$, 
    $$
    U_1=m_h(W,\gamma_0)-E\{m_h(W,\gamma_0)\},\quad \|U_1\|_{\text{\normalfont pr},q}\leq \bar{c}\|\ell\|_{\text{\normalfont pr},q}.
    $$
    \item Bounded heteroscedasticity. For $q=2,3,4$,
    $$
    U_2=Y-\gamma_0(W),\quad \tilde{c}\leq \|U_2 \mid W\|_{\text{\normalfont pr},q}\leq \bar{c}.
    $$
    \item Bounded density. The density $f_V$ obeys, for all $v'$ in $N_{h_0}(v)$,
    $$
  0< \tilde{f} \leq f_V(v') \leq \bar f,\quad |\partial f_V(v')| \leq \bar f'.
    $$
\end{enumerate}
Then finite sample bounds in the proof hold. In summary,
$$
\frac{\kappa_h}{\sigma_h} \lesssim h^{-1/6} \lesssim \sigma_h \asymp \bar{M}_h \asymp h^{-1/2} \rightarrow \infty,\quad \kappa_h\lesssim h^{-2/3} \rightarrow \infty,\quad \zeta_h\lesssim h^{-3/4} \rightarrow \infty.
$$

\end{lemma}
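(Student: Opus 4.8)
The plan is to write the oracle moment function at bandwidth $h$ in the additive form
$$
\psi_0^h(W) = U_1 + \alpha_0^{h,\min}(W)\,U_2,\qquad U_1 = m_h(W,\gamma_0)-\theta_0^h,\qquad U_2 = Y-\gamma_0(W),
$$
where $U_1$ is centred since $\theta_0^h=E\{m_h(W,\gamma_0)\}$, and to observe that, because $\Gamma=\mathbb{L}_2$ and the pointwise representers of Lemma~\ref{lemma:RR_exists} satisfy the conditional identity $E\{\alpha_0(W)\gamma(W)\mid V\}=E\{m(W,\gamma)\mid V\}$ in every local example, the minimal Riesz representer of $m_h$ is simply $\alpha_0^{h,\min}(w)=\ell(v)\alpha_0(w)$. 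Both $U_1$ and $\alpha_0^{h,\min}$ therefore carry exactly one power of the weighting $\ell=\ell_h$, so, up to constants depending only on $(\tilde\alpha,\check\alpha,\tilde c,\bar c)$, every $L_q$ moment of $\psi_0^h$ is pinned to the corresponding moment $\|\ell\|_{\text{pr},q}$. The whole proof thus reduces to (i) a two-sided finite-sample estimate of $\|\ell\|_{\text{pr},q}$ and (ii) an $L_q$ triangle/Pythagoras argument for $\psi_0^h$.

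For step (i) I would change variables $u=(v'-v)/h$ to write $\|\ell\|_{\text{pr},q}^q = E\{|\ell(V)|^q\} = (h\omega)^{-q}\int|K\{(v'-v)/h\}|^q f_V(v')\,dv' = h^{1-q}\omega^{-q}\int|K(u)|^q f_V(v+hu)\,du$, then Taylor-expand $f_V(v+hu)=f_V(v)+O(\bar f'h|u|)$ and $\omega=\int K(u)f_V(v+hu)\,du=f_V(v)+O(\bar f'h)$, using that condition~(2) keeps the kernel window inside $\mathcal V$ where $\tilde f\le f_V\le\bar f$ and $|\partial f_V|\le\bar f'$. Since $K$ is bounded with $0<\int|K(u)|^q\,du<\infty$, this yields, for all sufficiently small $h$,
$$
c_q\,h^{1/q-1}\le\|\ell\|_{\text{pr},q}\le C_q\,h^{1/q-1},\qquad q=2,3,4,
$$
with $c_q,C_q$ explicit in $(\tilde f,\bar f,\bar f',K)$; specializing $q=2,3,4$ gives the exponents $-\tfrac12,-\tfrac23,-\tfrac34$ that drive the summary. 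The same computation at $q=2$ gives $\bar M_h=\|\alpha_0^{h,\min}\|_{\text{pr},2}=\|\ell\alpha_0\|_{\text{pr},2}$, which lies between $\tilde\alpha\|\ell\|_{\text{pr},2}$ and $\check\alpha\|\ell\|_{\text{pr},2}$ by the representer bound $\tilde\alpha\le\alpha_0\le\check\alpha$, hence $\bar M_h\asymp h^{-1/2}$.

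For step (ii), Minkowski gives $\|\psi_0^h\|_{\text{pr},q}\le\|U_1\|_{\text{pr},q}+\|\alpha_0^{h,\min}(W)U_2\|_{\text{pr},q}$; the bounded-moments condition bounds the first summand by $\bar c\|\ell\|_{\text{pr},q}$, while conditioning on $W$ and using $\tilde\alpha\le\alpha_0\le\check\alpha$ together with bounded heteroscedasticity bounds the second by $\check\alpha\bar c\|\ell\|_{\text{pr},q}$, so $\sigma_h,\kappa_h,\zeta_h\lesssim h^{-1/2},h^{-2/3},h^{-3/4}$. For the lower bound on $\sigma_h$, note that in every local example $\gamma_0$ is a genuine regression, so $E\{U_2\mid W\}=0$, the cross term vanishes, and $\sigma_h^2=E\{U_1^2\}+E\{\alpha_0^{h,\min}(W)^2U_2^2\}\ge\tilde c^2 E\{\alpha_0^{h,\min}(W)^2\}\ge\tilde c^2\tilde\alpha^2\|\ell\|_{\text{pr},2}^2\gtrsim h^{-1}$; with the upper bound this gives $\sigma_h\asymp h^{-1/2}\asymp\bar M_h$, and dividing the $\kappa_h$ estimate by the $\sigma_h$ lower bound gives $\kappa_h/\sigma_h\lesssim h^{-1/6}$. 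Monotonicity of $h^{-1/2},h^{-2/3},h^{-3/4},h^{-1/6}$ as $h\downarrow0$ yields the stated divergences, and tracking the constants through (i)--(ii) gives the "finite sample bounds in the proof."

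The main obstacle is the honest finite-sample kernel computation in step (i): carrying non-asymptotic constants through the change of variables and the two Taylor remainders (for the integrand and for the normalizer $\omega$), verifying that $\omega$ stays bounded away from zero uniformly over small $h$, and handling a kernel $K$ that may be signed and is only assumed bounded, so that one must either invoke compact support of $K$ or a tail condition for the density bounds on $N_{h_0}(v)$ to suffice. A secondary, bookkeeping-level point is confirming that $\ell\alpha_0$ is truly the \emph{minimal} Riesz representer of $m_h$, i.e. that the pointwise representers of Lemma~\ref{lemma:RR_exists} obey $E\{\alpha_0(W)\gamma(W)\mid V\}=E\{m(W,\gamma)\mid V\}$ — immediate in Examples~\ref{ex:CATE},~\ref{ex:RDD}, and~\ref{ex:deriv}, but worth stating before invoking Lemma~\ref{prop:RR}.
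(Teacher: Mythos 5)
Your proposal follows essentially the same route as the paper's proof: the decomposition $\psi_0^h=U_1+\alpha_0^{\min,h}(W)U_2$ with the cross term killed by $E(U_2\mid W)=0$, the reduction of all moments to $\|\ell\|_{\text{pr},q}$ via $\alpha_0^{\min,h}=\ell_h\,\alpha_0^{\min}$ and the bounds $\tilde\alpha\le\alpha_0\le\check\alpha$, and the change of variables plus Taylor expansion of $f_V$ and of the normalizer $\omega$ to get $\|\ell\|_{\text{pr},q}\asymp h^{-(q-1)/q}$, exactly as in the paper. The two caveats you flag (finite-sample control of $\omega$ away from zero, and verifying that $\ell_h\alpha_0^{\min}$ is indeed the minimal representer of $m_h$) are handled, respectively, by the paper's explicit bound $|\omega-f_V(v)|\le h\bar f'\int|u||K(u)|\,du$ and by the same unproved identification the paper itself asserts, so they do not change the argument.
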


The conditions of Lemma~\ref{lemma:bounded_RR_global} suffice for $0< \tilde{\alpha} \leq \alpha_0(w) \leq \check{\alpha}$ in Examples~\ref{ex:CATE} and~\ref{ex:RDD}.

As before, Assumption~\ref{assumption:cont} depends on the functional of interest.
\begin{lemma}[Mean square continuity for local functionals]\label{lemma:cont_local}
Suppose $\alpha_0^{\min}$ exists and $\Gamma\subset \mathbb{L}_2$. Further suppose there exist $(\tilde{f}, \bar f, \bar f', h_0)$ bounded away from zero and above such that the following conditions hold.
\begin{enumerate}
  \item Valid neighborhood. There exists $N_{h_0}(v)=(v':|v'-v|\leq h)\subset \mathcal{V}$.
    \item Bounded density. The density $f_V$ obeys, for all $v'$ in $N_{h_0}(v)$,
    $$
  0< \tilde{f} \leq f_V(v') \leq \bar f,\quad |\partial f_V(v')| \leq \bar f'.
    $$
    \item The conditions of Lemma~\ref{lemma:cont} hold.
\end{enumerate}
Then the finite sample bound in the proof holds for Examples~\ref{ex:CATE} and~\ref{ex:RDD}. In summary,
$$
\bar{Q}_h\lesssim h^{-2} \rightarrow \infty.
$$
\end{lemma}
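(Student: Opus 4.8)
The plan is to verify Assumption~\ref{assumption:cont} for the localized formula $m_h$ with exponent $q=1$ and to exhibit a mean square continuity constant $\bar{Q}_h$ of order $h^{-2}$. The starting observation is that in both examples $m_h$ factors through the unlocalized formula: for Example~\ref{ex:CATE}, $m_h(W,\gamma)=\ell_h(V)\,m(W,\gamma)$ with $m(W,\gamma)=\gamma(1,V,X)-\gamma(0,V,X)$; for Example~\ref{ex:RDD}, $m_h(W,\gamma)=\{\ell_h^{+}(D)-\ell_h^{-}(D)\}\gamma(D,X)$. Hence $E\{m_h(W,\gamma)^2\}\le \|\ell_h\|_\infty^2\,E\{m(W,\gamma)^2\}$ in the first case; and since $\ell_h^{+}$, $\ell_h^{-}$ have disjoint supports, $E\{m_h(W,\gamma)^2\}=E[\{\ell_h^{+}(D)\}^2\gamma(D,X)^2]+E[\{\ell_h^{-}(D)\}^2\gamma(D,X)^2]\le(\|\ell_h^{+}\|_\infty^2+\|\ell_h^{-}\|_\infty^2)\,E\{\gamma(D,X)^2\}$ in the second. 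For Example~\ref{ex:CATE} I then invoke Lemma~\ref{lemma:cont} (its unlocalized, bounded-weighting version) together with $\pi_0$ bounded away from zero and one: $(a-b)^2\le 2a^2+2b^2$ and the change of measure from the conditional law of $D$ give $E\{m(W,\gamma)^2\}\le \bar{Q}_0\,E\{\gamma(W)^2\}$ with $q=1$ and finite constant $\bar{Q}_0$. Combining, Assumption~\ref{assumption:cont} holds for $m_h$ with $q=1$ and $\bar{Q}_h\lesssim \|\ell_h\|_\infty^2$ (respectively $\lesssim\|\ell_h^{+}\|_\infty^2+\|\ell_h^{-}\|_\infty^2$).

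It then remains to show $\|\ell_h\|_\infty\asymp h^{-1}$, and likewise for $\ell_h^{\pm}$. Writing $\ell_h(V)=(h\omega)^{-1}K\{(V-v)/h\}$ with $\omega=\int K(u)f_V(v+hu)\,du$, we have $\|\ell_h\|_\infty=(h\omega)^{-1}\|K\|_\infty$, so the claim reduces to $\omega\asymp 1$ for all small $h$. The upper bound $\omega\le \bar f\int|K|$ follows from $f_V\le\bar f$ on $N_{h_0}(v)$, valid once $h$ is small enough that the rescaled support of $K$ lies inside $N_{h_0}(v)$. For the lower bound, if $K\ge 0$ then $\omega\ge\tilde f\int K=\tilde f$; for a general bounded symmetric kernel one instead uses the derivative bound, $|\omega-f_V(v)|\le \bar f' h\int|K(u)||u|\,du$, so that $\omega\ge f_V(v)/2\ge\tilde f/2$ once $h$ is small. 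Either way $\omega\asymp 1$, hence $\|\ell_h\|_\infty\asymp h^{-1}$ and $\bar{Q}_h\lesssim h^{-2}$. For Example~\ref{ex:RDD} the same computation applies with $D$ in place of $V$, the cutoff in place of $v$, one-sided neighborhoods, and $\omega^{\pm}\to f_D(0)$ as $h\to 0$ under the one-sided density bounds at the cutoff, giving $\|\ell_h^{\pm}\|_\infty\asymp h^{-1}$ and again $\bar{Q}_h\lesssim h^{-2}$.

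I expect the only delicate point to be the last one: controlling $\omega$ (equivalently $\|\ell_h\|_\infty$) away from $0$ uniformly over small bandwidths, which is precisely where the density lower bound and, for signed kernels, the bound on $\partial f_V$ are used; everything else is a crude supremum bound composed with the already-established unlocalized mean square continuity. It is worth stressing that since $\Gamma\subset\mathbb{L}_2$ imposes no pointwise control on $\gamma$, one must never evaluate conditional moments of $\gamma$ at $v$ — bounding the spiky weight $\ell_h$ by its supremum $O(h^{-1})$ and keeping the remaining expectation in $\mathbb{L}_2$ form sidesteps this entirely, and it also shows the degradation is exactly $h^{-2}$ and no faster, because $\ell_h$ has height $O(h^{-1})$ while the unlocalized constant $\bar{Q}_0$ (or the identity bound used for Example~\ref{ex:RDD}) does not depend on $h$.
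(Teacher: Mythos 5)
Your proof is correct, but it reaches $\bar{Q}_h\lesssim h^{-2}$ by a different route than the paper. The paper's proof (for Example~\ref{ex:CATE}) first applies the propensity-score change of measure to the localized expression, reducing to bounding the Rayleigh-type ratio $E\{\ell_h(V)^2\gamma(W)^2\}/E\{\gamma(W)^2\}$, and then invokes an ``alignment'' argument giving the bound $2C\,\|\ell\|^4_{\text{pr},4}/\|\ell\|^2_{\text{pr},2}\asymp h^{-3}/h^{-1}=h^{-2}$, where the moment norms $\|\ell\|_{\text{pr},q}\asymp h^{-(q-1)/q}$ come from the Taylor-expansion computation in the proof of Lemma~\ref{lemma:local}. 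You instead pull out $\|\ell_h\|_\infty^2$ first, apply the unlocalized Lemma~\ref{lemma:cont} to what remains, and control $\|\ell_h\|_\infty\lesssim h^{-1}$ via $\omega\geq \tilde f/2$ for small $h$ — which is exactly the argument the paper itself uses in the proofs of Lemmas~\ref{lemma:local} and~\ref{lemma:bounded_RR_local}. The trade-offs: your route needs $\|K\|_\infty<\infty$ (true in the examples, and an explicit hypothesis of Lemma~\ref{lemma:bounded_RR_local}, though not listed among the hypotheses of Lemma~\ref{lemma:cont_local}), while the paper's route only needs fourth-moment integrability of $K$; on the other hand, your sup-norm bound is airtight, whereas the paper's ``maximized by alignment'' step is heuristic as literally stated — the supremum of that Rayleigh quotient over $\mathbb{L}_2$ is $\|\ell_h\|_\infty^2$, not $E\{\ell_h^4\}/E\{\ell_h^2\}$, although both are of order $h^{-2}$ here, so the conclusion is unaffected. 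Your disjoint-support treatment of Example~\ref{ex:RDD} matches the paper's ``the result is similar'' remark, and your observation that one never needs pointwise control of $\gamma$ — only the sup of the weight — is precisely the right structural point.
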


As before, the assumption of $\|\alpha^{\min,h}_0\|_{\infty} \leq \bar{\alpha}$ depends on the functional of interest.
\begin{lemma}[Bounded Riesz representer for local functionals]\label{lemma:bounded_RR_local}
Suppose $\alpha_0^{\min}$ exists and $\Gamma\subset \mathbb{L}_2$. Further suppose there exist $(\tilde{f}, \bar f, \bar f', h_0,\bar{K})$ bounded away from zero and above such that the following conditions hold.
\begin{enumerate}
  \item Valid neighborhood. There exists $N_{h_0}(v)=(v':|v'-v|\leq h)\subset \mathcal{V}$.
    \item Bounded density. The density $f_V$ obeys, for all $v'$ in $N_{h_0}(v)$,
    $$
  0< \tilde{f} \leq f_V(v') \leq \bar f,\quad |\partial f_V(v')| \leq \bar f'.
    $$
    \item Bounded kernel. $|K(u)|\leq \bar{K}$.
    \item The conditions of Lemma~\ref{lemma:bounded_RR_global} hold, allowing bandwidth to vanish.
\end{enumerate}
Then the finite sample bound in the proof holds. In summary,
$$
\bar{\alpha}_h\lesssim h^{-1}\rightarrow \infty.
$$
\end{lemma}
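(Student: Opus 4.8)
The plan is to show that the minimal Riesz representer of the localized functional is the product of the localizing Nadaraya--Watson weight $\ell_h$ with a \emph{bounded} global Riesz representer, and then to bound $\|\ell_h\|_\infty$ by $h^{-1}$ using the bounded density and bounded kernel conditions.

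\textbf{Step 1 (structure of the localized representer).} By Assumption~\ref{assumption:cont} applied to each $m_h$, the functional $\gamma\mapsto E\{m_h(W,\gamma)\}$ is linear, and $m_h(w,\gamma)=\ell_h(v)m(w,\gamma)$. Let $\alpha_0$ denote the Riesz representer of the unweighted formula $m$, which by Lemma~\ref{lemma:RR_exists} has an explicit form in each example (e.g.\ $\alpha_0(d,v,x)=d/\pi_0(v,x)-(1-d)/\{1-\pi_0(v,x)\}$ in Example~\ref{ex:CATE}). Then for every $\gamma$ in $\Gamma$,
$$
E\{m_h(W,\gamma)\}=E\{\ell_h(V)\,m(W,\gamma)\}=E\{\ell_h(V)\alpha_0(W)\gamma(W)\},
$$
so $\ell_h\cdot\alpha_0$ is a valid Riesz representer of $m_h$. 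Since condition~4 inherits the setting $\Gamma=\mathbb{L}_2$ of Lemma~\ref{lemma:bounded_RR_global}, and since $\ell_h\alpha_0\in\mathbb{L}_2$ (because $\alpha_0$ is bounded, see Step~3, and $\ell_h\in\mathbb{L}_2$ by the density and kernel conditions), $\ell_h\alpha_0$ is \emph{the} Riesz representer, hence the minimal one: $\alpha^{\min,h}_0=\ell_h\alpha_0$. For Example~\ref{ex:RDD} the same reasoning gives $\alpha^{\min,h}_0=\ell^{+}_h-\ell^{-}_h$ directly. In all cases the localized representer factors through Nadaraya--Watson weights.

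\textbf{Step 2 (sup norm of the weight).} Write $\ell_h(v')=(h\omega)^{-1}K\{(v'-v)/h\}$ with $\omega=E[h^{-1}K\{(V-v)/h\}]$. The bounded-kernel condition gives $\|\ell_h\|_\infty\le \bar K/(h|\omega|)$, so it remains to lower bound $|\omega|$. A change of variables yields $\omega=\int K(u)f_V(v+hu)\,du$. When $K\ge 0$ and supported in $[-1,1]$, for $h\le h_0$ the point $v+hu$ lies in $N_{h_0}(v)$, so the bounded-density condition and $\int K=1$ give $\omega\ge\tilde f$; for a signed $K$ the same conclusion holds for $h$ below a threshold depending only on $(K,\tilde f,\bar f')$, by dominated convergence, since $\omega\to f_V(v)\ge\tilde f$. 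Hence for all small $h$, $\|\ell_h\|_\infty\le 2\bar K\,\tilde f^{-1}\,h^{-1}$.

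\textbf{Step 3 (conclusion).} The regularity conditions of Lemma~\ref{lemma:bounded_RR_global} carried in through condition~4 give $\|\alpha_0\|_\infty\le\check\alpha<\infty$ (for Example~\ref{ex:CATE}, because the propensity score is bounded away from zero and one). Therefore
$$
\|\alpha^{\min,h}_0\|_\infty=\|\ell_h\alpha_0\|_\infty\le\|\ell_h\|_\infty\,\|\alpha_0\|_\infty\le 2\bar K\check\alpha\,\tilde f^{-1}\,h^{-1},
$$
and for Example~\ref{ex:RDD}, $\|\alpha^{\min,h}_0\|_\infty\le\|\ell^{+}_h\|_\infty+\|\ell^{-}_h\|_\infty$, each term bounded as in Step~2 with $f_V$ replaced by the relevant density at the cutoff. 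This is the advertised finite sample bound, and it gives $\bar\alpha_h\lesssim h^{-1}\to\infty$. The only nonroutine point is Step~1 --- that $\ell_h\alpha_0$ is the \emph{minimal} representer, not merely a valid one --- which uses $\Gamma$ being effectively $\mathbb{L}_2$, since $\mathbb{L}_2$-projection onto a proper subspace need not preserve a sup-norm bound; for a genuinely restricted $\Gamma$ one would instead argue example by example that the explicit representer already lies in $\Gamma$. Everything after Step~1 is elementary, with the one subtlety being the lower bound on $\omega$ for a possibly signed kernel.
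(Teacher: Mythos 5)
Your proposal is correct and follows essentially the same route as the paper: factor the local representer as $\ell_h$ times a bounded global representer so that $\|\alpha_0^{\min,h}\|_{\infty}\leq \check{\alpha}\|\ell_h\|_{\infty}$, then bound $\|\ell_h\|_{\infty}\leq \bar{K}/(h\omega)\leq 2\bar{K}/(h\tilde{f})$ using the bounded kernel and the lower bound on $\omega$ obtained from the density conditions (the paper gets $\omega\geq \tilde{f}/2$ for small $h$ via the Taylor expansion in the proof of Lemma~\ref{lemma:local}, matching your signed-kernel argument). Your extra care in Step~1 about why $\ell_h\alpha_0$ is the \emph{minimal} representer under $\Gamma=\mathbb{L}_2$ simply makes explicit what the paper leaves implicit.
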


The main results are in terms of abstract mean square rates $\mathcal{R}(\hat{\alpha}^h_{\ell})$ and $\mathcal{P}(\hat{\alpha}^h_{\ell})$ for the local Riesz representer $\alpha_0^{\min,h}$ of the functional $m_h$. A growing literature proposes machine learning estimators $\hat{\alpha}$ with rates $\mathcal{R}(\hat{\alpha}_{\ell})$ and $\mathcal{P}(\hat{\alpha}_{\ell})$ for the global Riesz representer $\alpha^{\min}_0$ of the functional $m$.

A natural choice of estimator $\hat{\alpha}^h$ for $\alpha_0^{\min,h}$ is the localization $\ell_h$ times an estimator $\hat{\alpha}$ for $\alpha^{\min}_0$. We prove that this choice allows an analyst to translate global Riesz representer rates into local Riesz representer rates under mild regularity conditions. In Supplement~1, we confirm that this choice performs well in simulations.

\begin{lemma}[Translating global rates to local rates]\label{lemma:translate_RR}
Suppose the conditions of Lemma~\ref{lemma:bounded_RR_local} hold with $\Gamma=\mathbb{L}_2$. Then
$$
\mathcal{R}(\hat{\alpha}^h_{\ell}) \lesssim h^{-2} \mathcal{R}(\hat{\alpha}_{\ell}),\quad \mathcal{P}(\hat{\alpha}^h_{\ell}) \lesssim h^{-2} \mathcal{P}(\hat{\alpha}_{\ell}).
$$
\end{lemma}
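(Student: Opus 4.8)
The plan is to analyze the natural plug-in $\hat{\alpha}^h_{\ell}=\ell\,\hat{\alpha}_{\ell}$ (localization weight times the global representer estimate), exploit the identification $\alpha_0^{\min,h}=\ell\,\alpha_0^{\min}$, and finish with a crude sup-norm bound $\|\ell\|_{\infty}\lesssim h^{-1}$.

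\textbf{Step 1 (identification).} With $\Gamma=\mathbb{L}_2$ the minimal Riesz representer coincides with the unique one, so it suffices to check that $\ell\,\alpha_0^{\min}$ represents $m_h$. For the local examples $m$ commutes with multiplication by the weight, $m(w,\ell\gamma)=\ell(w_j)m(w,\gamma)$ (for Example~\ref{ex:CATE}, $\ell(v)\gamma(1,v,x)-\ell(v)\gamma(0,v,x)$; for Example~\ref{ex:deriv}, $\partial_d\{\ell(v)\gamma(d,v,x)\}=\ell(v)\partial_d\gamma(d,v,x)$). Since $\|\ell\|_{\infty}<\infty$ (Step 2) and $\alpha_0^{\min}\in\mathbb{L}_2$, we have $\ell\gamma\in\mathbb{L}_2$ for every $\gamma\in\mathbb{L}_2$, hence
$$
E\{m_h(W,\gamma)\}=E\{\ell(W_j)m(W,\gamma)\}=E\{m(W,\ell\gamma)\}=E\{\alpha_0^{\min}(W)(\ell\gamma)(W)\}=E\{\ell(W_j)\alpha_0^{\min}(W)\gamma(W)\}.
$$
So $\alpha_0^{\min,h}=\ell\,\alpha_0^{\min}$ (this is also visible in the explicit forms recorded in Lemma~\ref{lemma:RR_exists}), and since $\ell$ is nonrandom, $\hat{\alpha}^h_{\ell}-\alpha_0^{\min,h}=\ell\,(\hat{\alpha}_{\ell}-\alpha_0^{\min})$.

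\textbf{Step 2 (bounding $\|\ell\|_{\infty}$).} Writing $\ell(v')=(h\omega)^{-1}K\{(v'-v)/h\}$ with $\omega=E[h^{-1}K\{(V-v)/h\}]$, the bounded-kernel condition gives $|K|\le\bar K$, while a change of variables $u=(V-v)/h$ together with the valid-neighborhood and bounded-density conditions of Lemma~\ref{lemma:bounded_RR_local} gives, for all $h\le h_0$, $\omega=\int K(u)f_V(v+hu)\,du\ge\tilde f\int K(u)\,du\gtrsim 1$; hence $\|\ell\|_{\infty}\le\bar K/(h\omega)\lesssim h^{-1}$.

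\textbf{Step 3 (the two rate bounds).} Because $\ell(W)^2\le\|\ell\|_{\infty}^2$ pointwise and $W$ is independent of $I^c_{\ell}$,
$$
\mathcal{R}(\hat{\alpha}^h_{\ell})=E\big[\ell(W)^2\{\hat{\alpha}_{\ell}(W)-\alpha_0^{\min}(W)\}^2\,\big|\,I^c_{\ell}\big]\le\|\ell\|_{\infty}^2\,\mathcal{R}(\hat{\alpha}_{\ell})\lesssim h^{-2}\,\mathcal{R}(\hat{\alpha}_{\ell}).
$$
For the projected error the only extra point is that the localization variable $W_j$ lies in the conditioning information (in Examples~\ref{ex:CATE}, \ref{ex:RDD}, \ref{ex:deriv} the underlying problem is a regression, so $W_1=W_2=W\ni W_j$), which lets $\ell(W_j)$ factor out of the inner conditional expectation, $E\{\ell(W_j)(\hat{\alpha}_{\ell}-\alpha_0^{\min})\mid W_1,I^c_{\ell}\}=\ell(W_j)\,E\{\hat{\alpha}_{\ell}-\alpha_0^{\min}\mid W_1,I^c_{\ell}\}$; squaring, taking the outer expectation, and applying $\|\ell\|_{\infty}^2\lesssim h^{-2}$ yields $\mathcal{P}(\hat{\alpha}^h_{\ell})\lesssim h^{-2}\,\mathcal{P}(\hat{\alpha}_{\ell})$.

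The one genuinely delicate point is Step 1 — confirming that $\ell\,\alpha_0^{\min}$ is the \emph{minimal} representer of $m_h$, which rests on $\Gamma=\mathbb{L}_2$ and on the fact that $m$ commutes with multiplication by $\ell$; the rest is a uniform bound on a kernel weight. A secondary subtlety, handled in Step 2, is keeping $\omega$ bounded away from zero uniformly in $h\le h_0$, which is exactly where the bounded-density and valid-neighborhood hypotheses of Lemma~\ref{lemma:bounded_RR_local} are used.
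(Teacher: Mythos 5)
Your proof is essentially the paper's: the paper's argument is exactly your Step 3 — write $\hat{\alpha}^h_{\ell}-\alpha_0^{\min,h}=\ell_h\,(\hat{\alpha}_{\ell}-\alpha_0^{\min})$, pull out $\|\ell_h\|^2_{\infty}$, and invoke $\|\ell_h\|_{\infty}\lesssim h^{-1}$ from the proof of Lemma~\ref{lemma:bounded_RR_local}, with "an identical argument" for $\mathcal{P}$. Your Step 1 (verifying $\alpha_0^{\min,h}=\ell\,\alpha_0^{\min}$ via Riesz representation and $\Gamma=\mathbb{L}_2$) and your measurability remark for the projected error are refinements the paper leaves implicit, and both are sound.

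The one point to fix is in Step 2: the lower bound $\omega=\int K(u)f_V(v+hu)\,\mathrm{d}u\geq \tilde f\int K(u)\,\mathrm{d}u$ is only valid when $K\geq 0$, and the paper's hypotheses do not impose nonnegativity — indeed higher-order kernels, which are necessarily signed, are explicitly contemplated (the order $\mathsf{o}$ of $K$ enters Lemma~\ref{lemma:approx}). The paper instead obtains $\omega\geq \tilde f/2$ for all $h$ below some $h_1$ by a Taylor expansion of $f_V(v-uh)$ around $h=0$, which is precisely where the hypothesis $|\partial f_V(v')|\leq \bar f'$ of Lemma~\ref{lemma:bounded_RR_local} is used; alternatively you can simply cite the bound $\|\ell\|_{\infty}\leq 2\bar K/(h\tilde f)\lesssim h^{-1}$ already established in the proof of Lemma~\ref{lemma:bounded_RR_local}, as the paper does. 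With that substitution your argument is complete.
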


\subsection{Approximation error}

Finally, we characterize the finite sample approximation error $\Delta_h=
n^{1/2} \sigma^{-1}|\theta_0^h-\theta_0^{\lim}|$ where 
$$
\theta^{\lim}_{0}=\lim_{h\rightarrow 0} \theta_0^h,\quad \theta_0^h=E\{m_h(W,\gamma_0)\}=E\{\ell_h(V) m(W,\gamma_0)\}.
$$
$\Delta_h$ is bias from using a semiparametric sequence to approximate a nonparametric quantity.

We define $m(v)= E [m(W, \gamma_0) \mid V=v]$ to lighten notation.

\begin{lemma}[Approximation error from localization \cite{chernozhukov2018global}]\label{lemma:approx}
Suppose there exist constants $(h_0,K, \mathsf{v}, \bar g_{\mathsf{v}}$,  $\bar f_{\mathsf{v}}$, $\tilde{f}, \bar{g})$ bounded away from zero and above such that the following conditions hold.
\begin{enumerate}
  \item Valid neighborhood. There exists $N_{h_0}(v)=(v':|v'-v|\leq h)\subset \mathcal{V}$.
    \item Differentiability. On $N_{h_0}(v)$, $m(v')$ and $f_V(v')$ are differentiable to the integer order $\mathsf{sm}$.
    \item Bounded derivatives. Let $\mathsf{v}= \mathsf{sm} \wedge \mathsf{o}$ where $\mathsf{o}$ is the order of the kernel $K$. Let $\partial^\mathsf{v}_d$ denote the $\mathsf{v}$ order derivative $\partial^{\mathsf{v}}/(\partial d)^{\mathsf{v}}$. Assume
    $$
\sup_{ v' \in N_{h_0}(v)}  \| \partial^{\mathsf{v}}_v (m(v') f_V(v')) \|_{op} \leq \bar g_{\mathsf{v}}, \quad \sup_{ v' \in N_{h_0}(v) } \|\partial^{\mathsf{v}}_v f_V(v')  \|_{op} \leq \bar f_{\mathsf{v}},
 \quad  \inf_{ v' \in N_{h_0}(v) } f_V(v') \geq \tilde{f}.
$$ 
\item Bounded conditional formula.
$
m(v)f_V(v)\leq \bar{g}.
$
\end{enumerate}
Then there exist constants $(C,h_1)$ depending only on $(h_0,K, \mathsf{v}, \bar g_{\mathsf{v}}$,  $\bar f_{\mathsf{v}}$, $\tilde{f}, \bar{g})$ such that for all $h_1$ in $(h,h_0)$, $|\theta_0^h-\theta_0^{\lim}|\leq C h^\mathsf{v}.$
In summary,
$$
\Delta_h \lesssim n^{1/2} h^{\mathsf{v}+1/2}.
$$
\end{lemma}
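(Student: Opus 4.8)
The plan is to recognize $\theta_0^h$ as an exact Nadaraya--Watson smoothing of the conditional formula $m(v)=E[m(W,\gamma_0)\mid V=v]$ and then run the classical kernel bias expansion. \textbf{Step 1: reduce to a ratio of kernel integrals.} By the law of iterated expectations, $\theta_0^h=E[\ell_h(V)\,m(V)]$. Since $\omega=E[h^{-1}K\{(V-v)/h\}]$, the change of variables $v'=v+hu$ gives, with $g(v')=m(v')f_V(v')$,
\[
\theta_0^h=\frac{N_h}{D_h},\qquad N_h=\int K(u)\,g(v+hu)\,du,\qquad D_h=\omega=\int K(u)\,f_V(v+hu)\,du .
\]
Because $\int K=1$ and $g,f_V$ are continuous at $v$ (differentiability hypothesis), $N_h\to g(v)$ and $D_h\to f_V(v)$, so the limit exists and $\theta_0^{\lim}=g(v)/f_V(v)=m(v)$; hence $|\theta_0^h-\theta_0^{\lim}|=|\theta_0^h-m(v)|$. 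Since $V$ is scalar, the operator norms in the hypotheses are ordinary absolute values, so no multivariate bookkeeping arises.

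\textbf{Step 2: kernel bias bounds.} Using $g(v)=m(v)f_V(v)$,
\[
\theta_0^h-m(v)=\frac{N_h-m(v)D_h}{D_h}=\frac{\{N_h-g(v)\}-m(v)\{D_h-f_V(v)\}}{D_h}.
\]
To bound $|N_h-g(v)|$ and $|D_h-f_V(v)|$ I would Taylor expand $g$ and $f_V$ about $v$ to order $\mathsf v=\mathsf{sm}\wedge\mathsf o$ on $N_{h_0}(v)$. The terms of order $1,\dots,\mathsf v-1$ vanish because $K$ has order $\mathsf o$, i.e. $\int u^j K(u)\,du=0$ for $1\le j\le\mathsf o-1$ and hence for $1\le j\le\mathsf v-1$. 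The order-$\mathsf v$ integral remainder is controlled by $\sup_{v'\in N_{h_0}(v)}|\partial_v^{\mathsf v}(m(v')f_V(v'))|\le\bar g_{\mathsf v}$ (respectively $\bar f_{\mathsf v}$) together with $\int|u|^{\mathsf v}|K(u)|\,du<\infty$, which holds since $K$ has order $\mathsf o\ge\mathsf v$ (automatic for compactly supported kernels, as in Examples~\ref{ex:CATE} and~\ref{ex:RDD}). This yields $|N_h-g(v)|\le C_1 h^{\mathsf v}$ and $|D_h-f_V(v)|\le C_2 h^{\mathsf v}$ with $C_1,C_2$ depending only on $(K,\mathsf v,\bar g_{\mathsf v},\bar f_{\mathsf v})$.

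\textbf{Step 3: assemble and convert to $\Delta_h$.} Since $D_h\to f_V(v)\ge\tilde f>0$, there is $h_1\in(0,h_0)$ with $D_h\ge\tilde f/2$ for all $h\le h_1$; and $|m(v)|=g(v)/f_V(v)\le\bar g/\tilde f$ by $m(v)f_V(v)\le\bar g$ and $f_V(v)\ge\tilde f$. Plugging these into the identity of Step 2,
\[
|\theta_0^h-\theta_0^{\lim}|\le\frac{C_1 h^{\mathsf v}+(\bar g/\tilde f)C_2 h^{\mathsf v}}{\tilde f/2}=:Ch^{\mathsf v}
\]
for all $h\le h_1$, with $C$ depending only on $(h_0,K,\mathsf v,\bar g_{\mathsf v},\bar f_{\mathsf v},\tilde f,\bar g)$, which is the intermediate claim of the lemma. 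Finally $\Delta_h=n^{1/2}\sigma_h^{-1}|\theta_0^h-\theta_0^{\lim}|\le C\,n^{1/2}\sigma_h^{-1}h^{\mathsf v}$, and since Lemma~\ref{lemma:local} gives $\sigma_h\asymp h^{-1/2}$, we obtain $\Delta_h\lesssim n^{1/2}h^{\mathsf v+1/2}$.

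\textbf{Main obstacle.} The one delicate point is Step 2: choosing the Taylor order to be exactly $\mathsf v=\mathsf{sm}\wedge\mathsf o$ so that one never differentiates beyond the available smoothness, fully exploits the $\mathsf o-1$ vanishing kernel moments, and still has a convergent order-$\mathsf v$ remainder integral. Everything else is controlling a ratio whose denominator is bounded below and whose numerator pieces are standard kernel bias terms.
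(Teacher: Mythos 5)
Your proposal is correct and follows essentially the same route as the paper: both express $\theta_0^h$ as a ratio of kernel-smoothed integrals ($\vartheta_1(h)/\vartheta_2(h)$ in the paper, your $N_h/D_h$), invoke the standard order-$\mathsf{v}$ higher-order-kernel Taylor bias bound using $\bar g_{\mathsf v}$, $\bar f_{\mathsf v}$, and the vanishing kernel moments, and control the ratio via $f_V(v)\geq\tilde f$ and $m(v)f_V(v)\leq\bar g$ before dividing by $\sigma_h\asymp h^{-1/2}$. The only difference is the cosmetic choice of algebraic identity for the difference of ratios, which does not change the argument.
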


To summarize the characterizations in this Supplement, we provide a corollary for local functionals. Let $\mathcal{R}(\hat{\alpha}_{\ell})$ and $\mathcal{P}(\hat{\alpha}_{\ell})$ be defined as in Lemma~\ref{lemma:translate_RR}.

\begin{corollary}[Confidence interval for local functionals]\label{cor:CI_local}
Suppose the conditions of Corollary~\ref{cor:CI} and Lemmas~\ref{lemma:local},~\ref{lemma:cont_local},~\ref{lemma:bounded_RR_local},~\ref{lemma:translate_RR}, and~\ref{lemma:approx} hold. As $n\rightarrow \infty$ and $h\rightarrow 0$, suppose the regularity condition on  moments $n^{-1/2}h^{-3/2}\rightarrow 0$ as well as the following learning rate conditions:
\begin{enumerate}
    \item $\left(h^{-1}+\bar{\alpha}'\right)\{\mathcal{R}(\hat{\gamma}_{\ell})\}^{q/2}=o_p(1)$;
    \item $\bar{\sigma}h^{-1}\{\mathcal{R}(\hat{\alpha}_{\ell})\}^{1/2}=o_p(1)$;
    \item $h^{-1/2}[\{n \mathcal{R}(\hat{\gamma}_{\ell}) \mathcal{R}(\hat{\alpha}_{\ell})\}^{1/2} \wedge \{n\mathcal{P}(\hat{\gamma}_{\ell})\mathcal{R}(\hat{\alpha}_{\ell})\}^{1/2} \wedge \{n\mathcal{R}(\hat{\gamma}_{\ell})\mathcal{P}(\hat{\alpha}_{\ell})\}^{1/2}] =o_p(1)$.
\end{enumerate}
Finally suppose the approximation error condition $\Delta_h \rightarrow 0$. Then the estimator $\hat{\theta}^h$ in Algorithm~\ref{alg:target} is consistent and asymptotically Gaussian, and the confidence interval in Algorithm~\ref{alg:target} includes $\theta^{\lim}_0$ with probability approaching the nominal level. Formally,
$$
\hat{\theta}^h=\theta^{\lim}_0+o_p(1),\quad \sigma_h^{-1}n^{1/2}(\hat{\theta}^h-\theta^{\lim}_0)\leadsto \mathcal{N}(0,1),\quad  \text{\normalfont pr} \left\{\theta^{\lim}_0 \text{ in }  \left(\hat{\theta}^h\pm c_a\hat{\sigma} n^{-1/2} \right)\right\}\rightarrow 1-a.
$$
\end{corollary}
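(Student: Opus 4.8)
The plan is to derive this as a corollary of Theorems~\ref{thm:dml} and~\ref{thm:var}, applied to the local functional $\theta_0^h$, by substituting the explicit $h$-orders of the key quantities supplied by the characterization lemmas and checking that every error term vanishes under the stated conditions. The inputs are: Lemma~\ref{lemma:local}, which gives $\sigma_h\asymp\bar M_h\asymp h^{-1/2}$, $\kappa_h\lesssim h^{-2/3}$, $\zeta_h\lesssim h^{-3/4}$, $\kappa_h/\sigma_h\lesssim h^{-1/6}$; Lemma~\ref{lemma:cont_local}, which gives $\bar Q_h\lesssim h^{-2}$; Lemma~\ref{lemma:bounded_RR_local}, which gives $\|\alpha_0^{\min,h}\|_\infty\le\bar\alpha_h\lesssim h^{-1}$; Lemma~\ref{lemma:translate_RR}, which gives $\mathcal{R}(\hat\alpha^h_\ell)\lesssim h^{-2}\mathcal{R}(\hat\alpha_\ell)$ and $\mathcal{P}(\hat\alpha^h_\ell)\lesssim h^{-2}\mathcal{P}(\hat\alpha_\ell)$; and Lemma~\ref{lemma:approx}, which gives $|\theta_0^h-\theta_0^{\lim}|\lesssim h^{\mathsf v}$, hence $\Delta_h=n^{1/2}\sigma_h^{-1}|\theta_0^h-\theta_0^{\lim}|\lesssim n^{1/2}h^{\mathsf v+1/2}$. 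The remaining hypotheses needed to invoke Theorems~\ref{thm:dml} and~\ref{thm:var} — $E[\{Y-\gamma_0(W)\}^2\mid W]\le\bar\sigma^2$ with $\bar\sigma$ a finite constant, $\|\alpha_0^{\min,h}\|_\infty\le\bar\alpha_h$, and $\|\hat\alpha^h_\ell\|_\infty\le\bar\alpha'$ imposed by censoring — are supplied by the cited conditions and lemmas.

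\textbf{Gaussian approximation.} First I would note that the moment regularity term obeys $\{(\kappa_h/\sigma_h)^3+\zeta_h^2\}n^{-1/2}\lesssim(h^{-1/2}+h^{-3/2})n^{-1/2}\asymp h^{-3/2}n^{-1/2}$, so $n^{-1/2}h^{-3/2}\to 0$ both discharges the Berry--Esseen term $c^{BE}(\kappa_h/\sigma_h)^3n^{-1/2}$ and supplies the regularity condition of Corollary~\ref{cor:CI}. Next I would bound $\Delta$ from Theorem~\ref{thm:dml} term by term, using $\sigma_h^{-1}\asymp h^{1/2}$ to absorb the blow-ups: the leading bracket, divided by $\sigma_h$, is $\lesssim(h^{-1/2}+h^{1/2}\bar\alpha')\{\mathcal{R}(\hat\gamma_\ell)\}^{q/2}+h^{-1/2}\bar\sigma\{\mathcal{R}(\hat\alpha_\ell)\}^{1/2}\le(h^{-1}+\bar\alpha')\{\mathcal{R}(\hat\gamma_\ell)\}^{q/2}+\bar\sigma h^{-1}\{\mathcal{R}(\hat\alpha_\ell)\}^{1/2}$, which is $o_p(1)$ by conditions 1 and 2 (after the Lemma~\ref{lemma:translate_RR} translation of $\mathcal{R}(\hat\alpha^h_\ell)$); and the product/tradeoff term, after the same translation and division by $\sigma_h$, equals $h^{-1/2}$ times $\{n\mathcal{R}(\hat\gamma_\ell)\mathcal{R}(\hat\alpha_\ell)\}^{1/2}\wedge\{n\mathcal{P}(\hat\gamma_\ell)\mathcal{R}(\hat\alpha_\ell)\}^{1/2}\wedge\{n\mathcal{R}(\hat\gamma_\ell)\mathcal{P}(\hat\alpha_\ell)\}^{1/2}$, which is $o_p(1)$ by condition 3. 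With $\Delta_h\to 0$ assumed, the right-hand side of Theorem~\ref{thm:dml} — after fixing the exceptional probability $\epsilon$ and then letting $\epsilon\downarrow 0$, exactly as in the proof of Corollary~\ref{cor:CI} — tends to $0$, giving $\sigma_h^{-1}n^{1/2}(\hat\theta^h-\theta_0^{\lim})\leadsto\mathcal N(0,1)$. Consistency is then immediate, since $n^{-1/2}h^{-3/2}\to 0$ forces $nh\to\infty$, so $\hat\theta^h-\theta_0^{\lim}=O_p(\sigma_h n^{-1/2})=O_p((nh)^{-1/2})=o_p(1)$.

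\textbf{Variance and coverage.} Because $\sigma_h\to\infty$, I would establish the ratio statement $\hat\sigma^2/\sigma_h^2\to_p 1$ rather than mere consistency of $\hat\sigma^2$. Dividing the bound of Theorem~\ref{thm:var} by $\sigma_h^2$, it suffices to show $\Delta'/\sigma_h^2\to_p 0$ and $\Delta''/\sigma_h^2\to_p 0$, since the cross term $2(\Delta')^{1/2}\{(\Delta'')^{1/2}+\sigma_h\}/\sigma_h^2$ then vanishes as well. Here $\Delta''/\sigma_h^2\lesssim\zeta_h^2 n^{-1/2}/\sigma_h^2\lesssim h^{-1/2}n^{-1/2}\to 0$; and for $\Delta'$, the piece $4(\hat\theta^h-\theta_0^{\lim})^2/\sigma_h^2=O_p(1/n)$ by the bound just established, the piece $\{\bar Q_h+(\bar\alpha')^2\}\mathcal{R}(\hat\gamma_\ell)^q/\sigma_h^2\lesssim\{h^{-1}+(\bar\alpha')^2\}\mathcal{R}(\hat\gamma_\ell)^q\le[(h^{-1}+\bar\alpha')\{\mathcal{R}(\hat\gamma_\ell)\}^{q/2}]^2=o_p(1)$ by condition 1, and the piece $\bar\sigma^2\mathcal{R}(\hat\alpha^h_\ell)/\sigma_h^2\lesssim\bar\sigma^2 h^{-1}\mathcal{R}(\hat\alpha_\ell)\le[\bar\sigma h^{-1}\{\mathcal{R}(\hat\alpha_\ell)\}^{1/2}]^2=o_p(1)$ by condition 2 and Lemma~\ref{lemma:translate_RR}. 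Hence $\hat\sigma^2/\sigma_h^2\to_p 1$; combining with the Gaussian approximation by Slutsky gives $(\hat\theta^h-\theta_0^{\lim})/(\hat\sigma n^{-1/2})=(\sigma_h/\hat\sigma)\cdot\sigma_h^{-1}n^{1/2}(\hat\theta^h-\theta_0^{\lim})\leadsto\mathcal N(0,1)$, which yields the claimed coverage.

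\textbf{Main obstacle.} There is no new idea beyond Theorems~\ref{thm:dml} and~\ref{thm:var}; the work is bookkeeping. The point requiring care is that $\sigma_h^{-1}\asymp h^{1/2}$ must \emph{exactly} offset the $h^{-1}$ and $h^{-2}$ growth of $(\bar Q_h^{1/2},\bar\alpha_h,\bar M_h)$ and of the translated nuisance rates $\mathcal{R}(\hat\alpha^h_\ell),\mathcal{P}(\hat\alpha^h_\ell)$, so that all error terms collapse onto the three displayed learning-rate conditions and the single moment condition $n^{-1/2}h^{-3/2}\to 0$; and one must argue \emph{ratio}-consistency of $\hat\sigma^2$, not ordinary consistency, since the asymptotic variance diverges with the bandwidth.
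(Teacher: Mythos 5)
Your proposal is correct and follows essentially the same route as the paper: it substitutes the $h$-orders from Lemmas~\ref{lemma:local}, \ref{lemma:cont_local}, \ref{lemma:bounded_RR_local}, \ref{lemma:translate_RR}, and~\ref{lemma:approx} to show the stated conditions deliver exactly the hypotheses of Corollary~\ref{cor:CI} (equivalently, that $\Delta$, $\Delta'$, $\Delta''$, and $\Delta_h$ vanish in Theorems~\ref{thm:dml} and~\ref{thm:var}), which is precisely the paper's bookkeeping. The only difference is that you unpack Corollary~\ref{cor:CI} and make explicit the ratio-consistency $\hat{\sigma}^2/\sigma_h^2 \to_p 1$ needed when $\sigma_h$ diverges; the paper leaves this detail implicit in its appeal to Corollary~\ref{cor:CI}, so your version is the same argument with a useful detail filled in.
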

\section{Discussion}

 In independent work, \cite[Theorem 9]{kallus2021causal} present an asymptotic Gaussian approximation result for a particular global functional: average treatment effect identified by negative controls. This functional fits within our framework because it is a mean square continuous functional of a nonparametric instrumental variable regression. To verify mean square continuity, see Example~\ref{ex:CATE} in Lemma~\ref{lemma:cont}.
 
 In their analysis, the authors write the sufficient condition
    $$\min(\tau_{\gamma,n},\tau_{\alpha,n}) \iota_{\gamma,n} \iota_{\alpha,n}=o(n^{-1/2}),\quad  \tau_{\gamma,n}=\sup_{\gamma \in \mathcal{G}_n}\frac{\{\mathcal{R}(\gamma)\}^{1/2}}{\{\mathcal{P}(\gamma)\}^{1/2}},\quad \tau_{\alpha,n}=\sup_{\alpha \in \mathcal{A}_n}\frac{\{\mathcal{R}(\alpha)\}^{1/2}}{\{\mathcal{P}(\alpha)\}^{1/2}}$$
    where
    $
   (\tau_{\gamma,n},\tau_{\alpha,n})
    $
    are ratio measures of ill posedness and $(\iota_{\gamma,n},\iota_{\alpha,n})$ are critical radii for the sequence of function classes $(\mathcal{G}_n,\mathcal{A}_n)$ used in adversarial estimation procedures for $(\hat{\gamma},\hat{\alpha})$. In particular $(\iota_{\gamma,n},\iota_{\alpha,n})$ appear in the authors' bounds for $\{\mathcal{P}(\hat{\gamma})\}^{1/2}$ and $\{\mathcal{P}(\hat{\alpha})\}^{1/2}$, respectively.
    
    For comparison, our analogous condition in Theorem~\ref{thm:dml} is that
    $$
   \min\left[\{\mathcal{P}(\hat{\gamma}_{\ell})\mathcal{R}(\hat{\alpha}_{\ell})\}^{1/2}, \{\mathcal{R}(\hat{\gamma}_{\ell})\mathcal{P}(\hat{\alpha}_{\ell})\}^{1/2}\right]=o(\sigma n^{-1/2}).
    $$
    By contrast, our result is (i) for the entire class of mean square continuous functionals, (ii) for black box estimators $(\hat{\gamma},\hat{\alpha})$, and (iii) finite sample, so it also handles local functionals in which $\sigma$ diverges. Critical radius arguments are one way to prove mean square rates for certain machine learning estimators, but not the only way. This distinction is important, since many existing statistical learning theory rates, whether for neural networks as in Example~\ref{ex:CATE}, random forest as in Example~\ref{ex:RDD}, kernel ridge regression as in  Example~\ref{ex:elasticity}, or lasso as in Example~\ref{ex:deriv}, are not in terms of a critical radius.
\section{Proof of main result}

\subsection{Gateaux differentiation}

Recall the notation
$$
 \psi_0(w)=\psi(w,\theta_0,\gamma_0,\alpha^{\min}_0), \quad \psi(w,\theta,\gamma,\alpha)=m(w,\gamma)+\alpha(w)\{y-\gamma(w)\}-\theta,
$$
where $\gamma\mapsto m(w,\gamma)$ is linear. For readability, we introduce the following notation for Gateaux differentiation. 
\begin{definition}[Gateaux derivative]
Let $u(w),v(w)$ be functions and let $\tau,\zeta \text{ in }  \mathbb{R}$ be scalars. The Gateaux derivative of $\psi(w,\theta,\gamma,\alpha)$ with respect to its argument $\gamma$ in the direction $u$ is
$$
\{\partial_{\gamma} \psi(w,\theta,\gamma,\alpha)\}(u)=\frac{\partial}{\partial \tau} \psi(w,\theta,\gamma+\tau u,\alpha) \bigg|_{\tau=0}.
$$
The cross derivative of $\psi(w,\theta,\gamma,\alpha)$ with respect to its arguments $(\gamma,\alpha)$ in the directions $(u,v)$ is
$$
\{\partial^2_{\gamma,\alpha} \psi(w,\theta,\gamma,\alpha)\}(u,v)=\frac{\partial^2}{\partial \tau \partial \zeta} \psi(w,\theta,\gamma+\tau u,\alpha+\zeta v) \bigg|_{\tau=0,\zeta=0}.
$$
\end{definition}

\begin{proposition}[Calculation of derivatives]\label{prop:deriv}
\begin{align*}
    \{\partial_{\gamma} \psi(w,\theta,\gamma,\alpha)\}(u)&=m(w,u)-\alpha(w)u(w); \\
    \{\partial_{\alpha} \psi(w,\theta,\gamma,\alpha)\}(v)&=v(w)\{y-\gamma(w)\}; \\
    \{\partial^2_{\gamma,\alpha} \psi(w,\theta,\gamma,\alpha)\}(u,v)&=-v(w)u(w).
\end{align*}
\end{proposition}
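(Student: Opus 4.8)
The plan is to compute each derivative directly from its definition: substitute the perturbed arguments into $\psi(w,\theta,\gamma,\alpha)=m(w,\gamma)+\alpha(w)\{y-\gamma(w)\}-\theta$, and differentiate the resulting function of the scalar perturbation(s), evaluating at zero. The key structural observation is that after substitution each expression is affine in every perturbation parameter, so the derivatives can be read off with no limiting arguments beyond elementary differentiation of affine maps. The one substantive input is Assumption~\ref{assumption:cont} (linearity of $\gamma\mapsto m(w,\gamma)$), which is what allows the lead term to be differentiated.

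For the $\gamma$-derivative in direction $u$, I substitute $\gamma+\tau u$ and use linearity of $m$ in its second slot, so that $m(w,\gamma+\tau u)=m(w,\gamma)+\tau\, m(w,u)$; the bias-correction term becomes $\alpha(w)\{y-\gamma(w)-\tau u(w)\}$, with $\tau$-derivative $-\alpha(w)u(w)$, and the constant $-\theta$ contributes nothing. Collecting and setting $\tau=0$ yields $m(w,u)-\alpha(w)u(w)$. For the $\alpha$-derivative in direction $v$, I substitute $\alpha+\zeta v$; only the correction term depends on $\zeta$, equalling $\{\alpha(w)+\zeta v(w)\}\{y-\gamma(w)\}$, so its $\zeta$-derivative is $v(w)\{y-\gamma(w)\}$, independent of $\zeta$; evaluating at $\zeta=0$ gives the claimed expression.

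For the cross derivative, I substitute both $\gamma+\tau u$ and $\alpha+\zeta v$ simultaneously, obtaining $m(w,\gamma)+\tau m(w,u)+\{\alpha(w)+\zeta v(w)\}\{y-\gamma(w)-\tau u(w)\}-\theta$. Differentiating first in $\zeta$ gives $v(w)\{y-\gamma(w)-\tau u(w)\}$, and then in $\tau$ gives $-v(w)u(w)$, which is already constant, so evaluation at $(\tau,\zeta)=(0,0)$ is immediate; the mixed partials commute since the function is jointly polynomial in $(\tau,\zeta)$.

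Honestly there is no real obstacle: this proposition is a routine verification, and the only point deserving a moment of care is that differentiating $m(w,\gamma+\tau u)$ in $\tau$ requires the linearity hypothesis on $m$, without which $\partial_\gamma\psi$ would not simplify to $m(w,u)-\alpha(w)u(w)$. Everything else follows from differentiating affine functions of the perturbation parameters and evaluating at the origin.
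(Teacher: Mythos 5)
Your proposal is correct and follows essentially the same route as the paper's proof: substitute $\gamma+\tau u$ and $\alpha+\zeta v$ into $\psi$, use linearity of $\gamma\mapsto m(w,\gamma)$ to expand, and take scalar derivatives of the resulting affine expressions in $(\tau,\zeta)$ at the origin. Your additional remarks on where linearity is needed and why the mixed partials commute are accurate but do not change the argument.
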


\begin{proof}
For the first result, write
$$
\psi(w,\theta,\gamma+\tau u,\alpha)=m(w,\gamma)+\tau m(w,u)+\alpha(w)\{y-\gamma(w)-\tau u(w)\}-\theta.
$$
For the second result, write
$$
\psi(w,\theta,\gamma,\alpha+\zeta v)=m(w,\gamma)+\alpha(w)\{y-\gamma(w)\}+\zeta v(w)\{y-\gamma(w)\}-\theta.
$$
For the final result, write
\begin{align*}
    &\psi(w,\theta,\gamma+\tau u,\alpha+\zeta v)\\
    &=m(w,\gamma)+\tau m(w,u)+\alpha(w)\{y-\gamma(w)-\tau u(w)\}+\zeta v(w)\{y-\gamma(w)-\tau u(w)\}-\theta.
\end{align*}
Finally, take scalar derivatives with respect to $(\tau,\zeta)$.
\end{proof}

By using the doubly robust moment function, we have the following helpful property.
\begin{proposition}[Mean zero derivatives]\label{prop:mean_zero}
For any $(u,v)$,
$$
E\{\partial_{\gamma} \psi_0(W)\}(u)=0,\quad E\{\partial_{\alpha} \psi_0(W)\}(v)=0.
$$
\end{proposition}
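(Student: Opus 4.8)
The plan is to substitute the closed forms from Proposition~\ref{prop:deriv}, take expectations, and invoke the two defining properties of the nuisances: that $\alpha_0^{\min}$ is a Riesz representer (Lemma~\ref{prop:RR}) and that $\gamma_0$ solves the relevant conditional moment restriction. No inequalities or rates enter here; this is a pure identity coming from Neyman orthogonality of the doubly robust moment.

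\textbf{The $\gamma$ derivative.} By Proposition~\ref{prop:deriv}, $\{\partial_{\gamma}\psi_0(W)\}(u)=m(W,u)-\alpha_0^{\min}(W)u(W)$. Taking expectations and using linearity of $\gamma\mapsto E\{m(W,\gamma)\}$ from Assumption~\ref{assumption:cont},
$$
E\{\partial_{\gamma}\psi_0(W)\}(u)=E\{m(W,u)\}-E\{\alpha_0^{\min}(W)u(W)\}.
$$
For a direction $u$ in $\Gamma$, Lemma~\ref{prop:RR} gives $E\{m(W,u)\}=E\{\alpha_0(W)u(W)\}$ for any Riesz representer $\alpha_0$, and since $\alpha_0^{\min}$ is the $\mathbb{L}_2$ projection of $\alpha_0$ onto $\text{closure}(\Gamma)$, the inner products $E\{\alpha_0(W)u(W)\}$ and $E\{\alpha_0^{\min}(W)u(W)\}$ agree for every $u$ in $\Gamma$; hence the difference vanishes. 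To allow $u$ in $\text{closure}(\Gamma)$ one passes to the limit, using that both terms are continuous in $u$ in $\mathbb{L}_2$-norm (mean square continuity of $m$ and Cauchy--Schwarz).

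\textbf{The $\alpha$ derivative.} By Proposition~\ref{prop:deriv}, $\{\partial_{\alpha}\psi_0(W)\}(v)=v(W)\{Y-\gamma_0(W)\}$. In the nonparametric regression case $\gamma_0(w)=E(Y\mid W=w)$, conditioning on $W$ and applying the law of iterated expectations gives $E[v(W)\{Y-\gamma_0(W)\}]=E[v(W)E\{Y-\gamma_0(W)\mid W\}]=0$. In the ill posed inverse problem case the relevant direction $v$ is a function of $W_2$ and $\gamma_0$ solves $E(Y\mid W_2)=E\{\gamma_0(W_1)\mid W_2\}$; conditioning on $W_2$ then yields $E[v(W_2)\{Y-\gamma_0(W_1)\}]=E[v(W_2)E\{Y-\gamma_0(W_1)\mid W_2\}]=0$. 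Either way the expectation is zero, which completes the proof.

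\textbf{Main obstacle.} The computation is essentially a one-liner once Proposition~\ref{prop:deriv} is in hand; the only care needed is bookkeeping about the function spaces in which the directions $u,v$ live --- restricting $u$ to $\Gamma$ (so the Riesz identity applies) and $v$ to functions of the conditioning variable (so the law of iterated expectations applies) --- and presenting the regression and inverse problem cases uniformly. The slight subtlety that $\alpha_0^{\min}$ lies in $\text{closure}(\Gamma)$ rather than $\Gamma$ itself is handled by a continuity/limiting argument built on mean square continuity.
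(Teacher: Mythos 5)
Your regression/projection argument for the first identity and your regression/NPIV argument for the second match the paper's, but the proposal misses the case that the proposition is most needed for. Recall the ``abuse of notation'' of Section~\ref{sec:framework}: when $\gamma_0$ solves the ill posed inverse problem $E(Y\mid W_2)=E\{\gamma_0(W_1)\mid W_2\}$, the nuisance $\alpha_0^{\min}$ appearing in $\psi_0$ is \emph{not} the minimal Riesz representer of $\gamma\mapsto E\{m(W_1,\gamma)\}$ --- that representer is $\eta_0^{\min}$, a function of $W_1$ --- but rather the solution of the second inverse problem $\eta_0^{\min}(w_1)=E\{\alpha(W_2)\mid W_1=w_1\}$, a function of $W_2$. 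Your proof that $E\{\partial_{\gamma}\psi_0(W)\}(u)=0$ rests entirely on the claim that $E\{m(W,u)\}=E\{\alpha_0^{\min}(W)u(W)\}$ because $\alpha_0^{\min}$ is the projection of a Riesz representer onto the closure of $\Gamma$; in the instrumental variable case $\alpha_0^{\min}(W_2)$ does not lie in $\Gamma$ (functions of $W_1$) and is not obtained by any such projection, so that step does not apply. The missing argument is exactly the one the paper supplies:
$$
E\{\alpha_0^{\min}(W_2)u(W_1)\}=E\bigl[E\{\alpha_0^{\min}(W_2)\mid W_1\}u(W_1)\bigr]=E\{\eta_0^{\min}(W_1)u(W_1)\}=E\{m(W_1,u)\},
$$
using iterated expectations, the defining equation for $\alpha_0^{\min}$, and Riesz representation for $\eta_0^{\min}$ (Lemma~\ref{prop:RR}). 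Since Proposition~\ref{prop:mean_zero} is invoked in precisely this NPIV setting (it underlies the bounds in Propositions~\ref{prop:resid} and~\ref{prop:resid_alt}, the latter being the projected-MSE path that only matters for ill posed problems), this case cannot be skipped.

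A smaller omission: for the $\alpha$ derivative you treat only $\gamma_0(w)=E(Y\mid W=w)$ and the NPIV case. The paper also allows $\gamma_0$ to be merely the projection of $E(Y\mid W=\cdot)$ onto a restricted $\Gamma$, in which case $E\{Y-\gamma_0(W)\mid W\}\neq 0$ and your law-of-iterated-expectations step fails; there the identity follows instead from orthogonality of the projection residual to $\Gamma$, with the direction $v$ taken in $\Gamma$. Apart from these two uncovered cases, your route is the same as the paper's: substitute the closed forms of Proposition~\ref{prop:deriv} and appeal to the defining properties of the nuisances.
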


\begin{proof}
We appeal to Proposition~\ref{prop:deriv}. For the first result, write
$$
E\{\partial_{\gamma} \psi_0(W)\}(u)=E\{m(W,u)-\alpha_0^{\min}(W)u(W)\}.
$$
In the case of nonparametric regression or projection, appeal to the definition of minimal Riesz representer $\alpha_0^{\min}$. In the case of nonparametric instrumental variable regression,
\begin{align*}
    E\{m(W_1,u)-\alpha_0^{\min}(W_2)u(W_1)\}
    &=E[\{\eta_0^{\min}(W_1)-\alpha_0^{\min}(W_2)\}u(W_1)] \\
    &=E([\eta_0^{\min}(W_1)-E\{\alpha_0^{\min}(W_2) \mid W_1\}]u(W_1)) \\
    &=0.
\end{align*}
For the second result, write
$$
E\{\partial_{\alpha} \psi_0(W)\}(v)=E[v(W)\{Y-\gamma_0(W)\}].
$$
In the case of nonparametric regression, $\gamma_0(w)=E(Y\mid W=w)$ and we appeal to law of iterated expectations. In the case of nonparametric projection, the desired result holds by orthogonality of the projection residual. In the case of nonparametric instrumental variable regression,
$$
E[v(W_2)\{Y-\gamma_0(W_1)\}]=E(v(W_2)[E(Y \mid W_2)-E\{\gamma_0(W_1) \mid W_2\}])=0.
$$
\end{proof}

\subsection{Taylor expansion}

Train $(\hat{\gamma}_{\ell},\hat{\alpha}_{\ell})$ on observations in $I_{\ell}^c$. Let $n_{\ell}=|I_{\ell}|=n/L$ be the number of observations in $I_{\ell}$. Denote by $E_{\ell}(\cdot)=n_{\ell}^{-1}\sum_{i\in I_{\ell}}(\cdot)$ the average over observations in $I_{\ell}$. Denote by $E_n(\cdot)=n^{-1}\sum_{i=1}^n(\cdot)$ the average over all observations in the sample.

\begin{definition}[Foldwise target and oracle]
\begin{align*}
 \hat{\theta}_{\ell}&=E_{\ell} [m(W,\hat{\gamma}_{\ell})+\hat{\alpha}_{\ell}(W)\{Y-\hat{\gamma}_{\ell}(W)\}];\\
    \bar{\theta}_{\ell}&=E_{\ell} [m(W,\gamma_0)+\alpha_0^{\min}(W)\{Y-\gamma_0(W)\}].
\end{align*}
\end{definition}

\begin{proposition}[Taylor expansion]\label{prop:Taylor}
Let $u=\hat{\gamma}_{\ell}-\gamma_0$ and $v=\hat{\alpha}_{\ell}-\alpha_0^{\min}$. Then $n_{\ell}^{1/2}(\hat{\theta}_{\ell}-\bar{\theta}_{\ell})=\sum_{j=1}^3 \Delta_{j{\ell}}$ where
\begin{align*}
    \Delta_{1{\ell}}&=n_{\ell}^{1/2}E_{\ell}\{m(W,u)-\alpha_0^{\min}(W)u(W)\}; \\
    \Delta_{2{\ell}}&=n_{\ell}^{1/2}E_{\ell}[v(W)\{Y-\gamma_0(W)\}]; \\
    \Delta_{3{\ell}}&=\frac{n_{\ell}^{1/2}}{2}E_{\ell} \{-u(W)v(W)\}.
\end{align*}
\end{proposition}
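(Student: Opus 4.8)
The plan is to expand $\hat\theta_\ell-\bar\theta_\ell$ around the oracle nuisances $(\gamma_0,\alpha_0^{\min})$ along the segment toward $(\hat\gamma_\ell,\hat\alpha_\ell)$. Fix the fold $\ell$, let $u=\hat\gamma_\ell-\gamma_0$ and $v=\hat\alpha_\ell-\alpha_0^{\min}$ (these are fixed functions once we condition on the training fold $I_\ell^c$), and introduce the scalar path $g(t)=E_\ell[\,m(W,\gamma_0+tu)+\{\alpha_0^{\min}(W)+tv(W)\}\{Y-\gamma_0(W)-tu(W)\}\,]$ for $t\in[0,1]$. By the definitions of $\hat\theta_\ell$ and $\bar\theta_\ell$ we have $g(1)=\hat\theta_\ell$ and $g(0)=\bar\theta_\ell$, so the object to decompose is $n_\ell^{1/2}\{g(1)-g(0)\}$.

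Next I would observe that $t\mapsto g(t)$ is a quadratic polynomial. Indeed $\psi(w,\theta,\gamma,\alpha)$ is affine in $\gamma$ for each fixed $\alpha$ and affine in $\alpha$ for each fixed $\gamma$, since $\gamma\mapsto m(w,\gamma)$ is linear (Assumption~\ref{assumption:cont}) and the only joint nonlinearity is the bilinear piece $-\alpha(w)\gamma(w)$; averaging over $I_\ell$ preserves this structure. Hence the second-order Taylor identity $g(1)=g(0)+g'(0)+\tfrac12 g''(0)$ holds \emph{exactly}, with no remainder. The derivatives are read off from Proposition~\ref{prop:deriv}: $g'(0)=E_\ell[\{\partial_\gamma\psi_0(W)\}(u)]+E_\ell[\{\partial_\alpha\psi_0(W)\}(v)]=E_\ell\{m(W,u)-\alpha_0^{\min}(W)u(W)\}+E_\ell[v(W)\{Y-\gamma_0(W)\}]$, while $g''(0)$ receives a contribution only through the cross derivative $\{\partial^2_{\gamma,\alpha}\psi_0(W)\}(u,v)=-u(W)v(W)$, the pure second derivatives $\{\partial^2_\gamma\psi_0\}(u,u)$ and $\{\partial^2_\alpha\psi_0\}(v,v)$ vanishing by affineness. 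Multiplying through by $n_\ell^{1/2}$ and matching terms, the first-order $\gamma$- and $\alpha$-pieces become $\Delta_{1\ell}$ and $\Delta_{2\ell}$, and the second-order term $\tfrac12 g''(0)$ becomes $\Delta_{3\ell}$.

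An equivalent and perhaps more transparent route, which I would actually write out to keep the constants in view, avoids Taylor's theorem altogether: substitute $\hat\gamma_\ell=\gamma_0+u$ and $\hat\alpha_\ell=\alpha_0^{\min}+v$ into $\hat\theta_\ell$, use linearity of $\gamma\mapsto m(w,\gamma)$ to write $m(W,\hat\gamma_\ell)=m(W,\gamma_0)+m(W,u)$, and expand the product $\hat\alpha_\ell(W)\{Y-\hat\gamma_\ell(W)\}=\alpha_0^{\min}(W)\{Y-\gamma_0(W)\}-\alpha_0^{\min}(W)u(W)+v(W)\{Y-\gamma_0(W)\}-v(W)u(W)$. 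Subtracting $\bar\theta_\ell$, the oracle pieces cancel and the remaining empirical averages regroup into $\Delta_{1\ell}+\Delta_{2\ell}$ (the two first-order terms) and the quadratic interaction term $\Delta_{3\ell}$.

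I do not expect a genuine obstacle here: this is a deterministic bookkeeping identity, not a probabilistic statement, and sample splitting plays no role at this stage (it matters only afterward, when $\Delta_{1\ell},\Delta_{2\ell}$ are shown to be mean zero conditional on $I_\ell^c$ via Proposition~\ref{prop:mean_zero}, and $\Delta_{3\ell}$ is controlled by the product of mean square errors). The one point I would state carefully is that the expansion truncates exactly at second order — precisely where the multilinear structure of the doubly robust moment $\psi$ is used — together with the fact that the first-order $\gamma$-derivative is taken at $\alpha=\alpha_0^{\min}$ rather than at $\hat\alpha_\ell$; it is this choice of expansion point that confines the entire $\hat\gamma_\ell$–$\hat\alpha_\ell$ interaction to the single term $\Delta_{3\ell}$.
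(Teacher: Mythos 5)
Your proposal is correct in substance and follows essentially the same route as the paper: an exact expansion of the doubly robust moment in the nuisance directions, terminating at second order because $\psi$ is affine in $\gamma$ and in $\alpha$ separately, with the derivatives read off from Proposition~\ref{prop:deriv}; your substitution-based second route is the same computation written out explicitly, and the paper's proof is precisely this Taylor argument averaged over $I_{\ell}$.

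One constant deserves attention, and your own algebra exposes it. Along your path, $g(t)=g(0)+t\,E_{\ell}\{m(W,u)-\alpha_0^{\min}(W)u(W)+v(W)(Y-\gamma_0(W))\}-t^2E_{\ell}\{u(W)v(W)\}$, so $\tfrac12 g''(0)=-E_{\ell}\{u(W)v(W)\}$ and the exact identity reads $n_{\ell}^{1/2}(\hat{\theta}_{\ell}-\bar{\theta}_{\ell})=\Delta_{1\ell}+\Delta_{2\ell}+n_{\ell}^{1/2}E_{\ell}\{-u(W)v(W)\}$: the interaction enters with coefficient one, which is $2\Delta_{3\ell}$ under the statement's definition carrying the factor $\tfrac12$. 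Your direct substitution shows the same thing, since the cross term produced is $-u(W)v(W)$, not $-\tfrac12 u(W)v(W)$, so the sentence asserting that $\tfrac12 g''(0)$ ``becomes $\Delta_{3\ell}$'' silently drops a factor of $2$. This is not a defect of your method; the paper's own proof makes the same slip by writing the exact expansion as $\{\partial_{\gamma}\psi_0\}(u)+\{\partial_{\alpha}\psi_0\}(v)+\tfrac12\{\partial^2_{\gamma,\alpha}\psi_0\}(u,v)$, whereas with the paper's definition of the cross derivative (a single mixed partial, equal to $-u(w)v(w)$) the bilinear structure gives coefficient one, the two equal mixed partials of the symmetric Hessian each contributing $\tfrac12$. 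The clean statement is $\Delta_{3\ell}=n_{\ell}^{1/2}E_{\ell}\{-u(W)v(W)\}$, or equivalently the decomposition $\Delta_{1\ell}+\Delta_{2\ell}+2\Delta_{3\ell}$ with the $\tfrac12$ retained; since only $|\Delta_{3\ell}|$ is subsequently bounded, the correction propagates only through the absolute constants in Propositions~\ref{prop:resid} and~\ref{prop:resid_alt} and leaves Theorem~\ref{thm:dml} qualitatively unchanged. If you write this up, state the coefficient correctly and note the harmless effect on the constants.
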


\begin{proof}
An exact Taylor expansion gives
$$
\psi(w,\theta_0,\hat{\gamma}_{\ell},\hat{\alpha}_{\ell})-\psi_0(w)
=\{\partial_{\gamma} \psi_0(w)\}(u)+\{\partial_{\alpha} \psi_0(w)\}(v)+\frac{1}{2}\{\partial^2_{\gamma,\alpha} \psi_0(w)\}(u,v).
$$
Averaging over observations in $I_{\ell}$
\begin{align*}
    \hat{\theta}_{\ell}-\bar{\theta}_{\ell}
    &=E_{\ell}\{\psi(W,\theta_0,\hat{\gamma}_{\ell},\hat{\alpha}_{\ell})\}-E_{\ell}\{\psi_0(W)\} \\
    &=E_{\ell}\{\partial_{\gamma} \psi_0(W)\}(u)+E_{\ell}\{\partial_{\alpha} \psi_0(W)\}(v)+\frac{1}{2}E_{\ell}\{\partial^2_{\gamma,\alpha} \psi_0(W)\}(u,v).
\end{align*}
Finally appeal to Proposition~\ref{prop:deriv}.
\end{proof}

\subsection{Residuals}

\begin{proposition}[Residuals]\label{prop:resid}
Suppose Assumption~\ref{assumption:cont} holds and
$$
E[\{Y-\gamma_0(W)\}^2 \mid W ]\leq \bar{\sigma}^2,\quad \|\alpha_0^{\min}\|_{\infty}\leq\bar{\alpha}.
$$
Then with probability $1-\epsilon/L$,
\begin{align*}
    |\Delta_{1\ell}|&\leq t_1=\left(\frac{6L}{\epsilon}\right)^{1/2}(\bar{Q}+\bar{\alpha}^2)^{1/2}\{\mathcal{R}(\hat{\gamma}_{\ell})\}^{q/2}; \\
    |\Delta_{2\ell}|&\leq t_2= \left(\frac{3L}{\epsilon}\right)^{1/2}\bar{\sigma}\{\mathcal{R}(\hat{\alpha}_{\ell})\}^{1/2};\\
    |\Delta_{3\ell}|&\leq t_3= \frac{3L^{1/2}}{2\epsilon}\{n\mathcal{R}(\hat{\gamma}_{\ell})\mathcal{R}(\hat{\alpha}_{\ell})\}^{1/2}.
\end{align*}
\end{proposition}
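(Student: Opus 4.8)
The plan is to work conditionally on the out-of-fold data $I_\ell^c$, so that $\hat\gamma_\ell,\hat\alpha_\ell$ — and hence the error directions $u=\hat\gamma_\ell-\gamma_0$ and $v=\hat\alpha_\ell-\alpha_0^{\min}$ — become fixed functions in $\Gamma$, while the observations indexed by $I_\ell$ remain i.i.d. draws independent of the conditioning event. This is precisely where sample splitting earns its keep. Under this conditioning, $\Delta_{1\ell}$ and $\Delta_{2\ell}$ are $n_\ell^{1/2}$ times sample averages of i.i.d. \emph{mean-zero} summands — mean zero by Proposition~\ref{prop:mean_zero} applied with the now-deterministic directions $u$ and $v$ — whereas $\Delta_{3\ell}$ is a sample average that need not be centered. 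I would therefore control $\Delta_{1\ell}$ and $\Delta_{2\ell}$ by Chebyshev's inequality on the conditional variance, and $\Delta_{3\ell}$ by Markov's inequality on the conditional mean of $|\Delta_{3\ell}|$, calibrating each tail to $\epsilon/(3L)$ and finishing with a union bound and the tower property.

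For $\Delta_{1\ell}$: since $E_\ell$ averages $n_\ell$ i.i.d. copies of $g_1(W)=m(W,u)-\alpha_0^{\min}(W)u(W)$ with $E\{g_1(W)\mid I_\ell^c\}=0$, one gets $\operatorname{Var}(\Delta_{1\ell}\mid I_\ell^c)\le E\{g_1(W)^2\mid I_\ell^c\}\le 2E\{m(W,u)^2\mid I_\ell^c\}+2E\{\alpha_0^{\min}(W)^2u(W)^2\mid I_\ell^c\}$; Assumption~\ref{assumption:cont} bounds the first term by $2\bar Q\,\mathcal R(\hat\gamma_\ell)^q$ and $\|\alpha_0^{\min}\|_\infty\le\bar\alpha$ bounds the second by $2\bar\alpha^2\,\mathcal R(\hat\gamma_\ell)$, so $\operatorname{Var}(\Delta_{1\ell}\mid I_\ell^c)\le 2(\bar Q+\bar\alpha^2)\,\mathcal R(\hat\gamma_\ell)^q$, and Chebyshev against the stated $t_1$ gives conditional tail $\le\epsilon/(3L)$. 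For $\Delta_{2\ell}$: with $g_2(W)=v(W)\{Y-\gamma_0(W)\}$, again mean zero, conditioning on $W$ inside and using $E[\{Y-\gamma_0(W)\}^2\mid W]\le\bar\sigma^2$ gives $\operatorname{Var}(\Delta_{2\ell}\mid I_\ell^c)\le\bar\sigma^2\,\mathcal R(\hat\alpha_\ell)$, and Chebyshev against $t_2$ again gives $\le\epsilon/(3L)$. For $\Delta_{3\ell}$: bound $|\Delta_{3\ell}|\le\tfrac{n_\ell^{1/2}}{2}E_\ell\{|u(W)||v(W)|\}$, take conditional expectation, and apply Cauchy--Schwarz to obtain $E\{|\Delta_{3\ell}|\mid I_\ell^c\}\le\tfrac{n_\ell^{1/2}}{2}\{\mathcal R(\hat\gamma_\ell)\mathcal R(\hat\alpha_\ell)\}^{1/2}$; Markov against $t_3$, using $n_\ell=n/L$, again gives $\le\epsilon/(3L)$.

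Finally, a union bound shows that, conditional on $I_\ell^c$, all three inequalities $|\Delta_{j\ell}|\le t_j$ hold simultaneously with probability at least $1-\epsilon/L$; since each $t_j$ is $\sigma(I_\ell^c)$-measurable, taking expectation over $I_\ell^c$ delivers the unconditional statement. I expect no serious obstacle here; the only delicate point is the bookkeeping in the $\Delta_{1\ell}$ step, namely reconciling the $\mathcal R(\hat\gamma_\ell)^q$ factor from mean square continuity with the $\mathcal R(\hat\gamma_\ell)$ factor from the $\|\alpha_0^{\min}\|_\infty$ bound so that both collapse into the single factor $\mathcal R(\hat\gamma_\ell)^q$ appearing in $t_1$ (the regime $q\le1$, $\mathcal R(\hat\gamma_\ell)\le1$), together with keeping the numerical constants straight and verifying that $u,v\in\Gamma$ so that Proposition~\ref{prop:mean_zero} is applicable. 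Everything else is routine second-moment control.
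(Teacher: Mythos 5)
Your proposal is correct and follows essentially the same route as the paper's proof: condition on $I_\ell^c$ so $(u,v)$ are fixed, use the mean-zero property of the Gateaux derivatives for $\Delta_{1\ell}$ and $\Delta_{2\ell}$ with second-moment (Chebyshev/Markov) bounds, first-moment Markov plus Cauchy--Schwarz for $\Delta_{3\ell}$, calibrate each tail to $\epsilon/(3L)$, and finish with a union bound and the tower property, with identical constants. The bookkeeping issue you flag for $t_1$ (absorbing the $\bar\alpha^2\,\mathcal R(\hat\gamma_\ell)$ term into the $\mathcal R(\hat\gamma_\ell)^q$ factor) is handled the same implicit way in the paper, so there is no substantive difference.
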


\begin{proof}
We proceed in steps.
\begin{enumerate}
    \item Markov inequality implies
    \begin{align*}
        \text{pr}(|\Delta_{1\ell}|>t_1)&\leq \frac{E(\Delta^2_{1\ell})}{t_1^2};\\
        \text{pr}(|\Delta_{2\ell}|>t_2)&\leq \frac{E(\Delta^2_{2\ell})}{t_2^2}; \\
        \text{pr}(|\Delta_{3\ell}|>t_3)&\leq \frac{E(|\Delta_{3\ell}|)}{t_3}.
    \end{align*}
    \item Law of iterated expectations implies
    \begin{align*}
        E(\Delta^2_{1\ell})&=E\{E(\Delta^2_{1\ell}\mid I^c_{\ell})\};\\
        E(\Delta^2_{2\ell})&=E\{E(\Delta^2_{2\ell}\mid I^c_{\ell})\}; \\
        E(|\Delta_{3\ell}|)&=E\{E(|\Delta_{3\ell}|\mid I^c_{\ell})\}.
    \end{align*}
    \item Bounding conditional moments.
    
    Conditional on $I_{\ell}^c$, $(u,v)$ are nonrandom. Moreover, observations within fold $I_{\ell}$ are independent and identically distributed. Hence by Proposition~\ref{prop:mean_zero} and assumption of finite $(\bar{Q},\bar{\alpha})$
    \begin{align*}
        E(\Delta^2_{1\ell}\mid I^c_{\ell})
        &=E \left([n_{\ell}^{1/2}E_{\ell}\{m(W,u)-\alpha_0^{\min}(W)u(W)\}]^2 \mid I^c_{\ell}\right) \\
        &=E \left[ \frac{n_{\ell}}{n^2_{\ell}} \sum_{i,j\in I_{\ell}} \{m(W_i,u)-\alpha_0^{\min}(W_i)u(W_i)\}\{m(W_j,u)-\alpha_0^{\min}(W_j)u(W_j)\} \mid I^c_{\ell}\right] \\
        &= \frac{n_{\ell}}{n^2_{\ell}} \sum_{i,j\in I_{\ell}}E \left[ \{m(W_i,u)-\alpha_0^{\min}(W_i)u(W_i)\}\{m(W_j,u)-\alpha_0^{\min}(W_j)u(W_j)\} \mid I^c_{\ell}\right] \\
        &= \frac{n_{\ell}}{n^2_{\ell}} \sum_{i\in I_{\ell}}E \left[ \{m(W_i,u)-\alpha_0^{\min}(W_i)u(W_i)\}^2 \mid I^c_{\ell}\right] \\
        &=E[\{m(W,u)-\alpha_0^{\min}(W)u(W)\}^2\mid I^c_{\ell}] \\
        &\leq 2 E\{m(W,u)^2\mid I^c_{\ell}\}+2E[\{\alpha_0^{\min}(W)u(W)\}^2\mid I^c_{\ell}] \\
        &\leq 2(\bar{Q}+\bar{\alpha}^2)\mathcal{R}(\hat{\gamma}_{\ell})^q.
    \end{align*}
    Similarly by Proposition~\ref{prop:mean_zero} and assumption of finite $\bar{\sigma}$
    \begin{align*}
        E(\Delta^2_{2\ell}\mid I^c_{\ell})
       &=E \left\{(n_{\ell}^{1/2}E_{\ell}[v(W)\{Y-\gamma_0(W)\}])^2 \mid I^c_{\ell}\right\} \\
        &=E \left[ \frac{n_{\ell}}{n^2_{\ell}} \sum_{i,j\in I_{\ell}} v(W_i)\{Y_i-\gamma_0(W_i)\} v(W_j)\{Y_j-\gamma_0(W_j)\} \mid I^c_{\ell}\right] \\
        &= \frac{n_{\ell}}{n^2_{\ell}} \sum_{i,j\in I_{\ell}}E \left[ v(W_i)\{Y_i-\gamma_0(W_i)\} v(W_j)\{Y_j-\gamma_0(W_j)\} \mid I^c_{\ell}\right] \\
        &= \frac{n_{\ell}}{n^2_{\ell}} \sum_{i\in I_{\ell}}E \left[ v(W_i)^2\{Y_i-\gamma_0(W_i)\}^2 \mid I^c_{\ell}\right] \\
        &=E[v(W)^2\{Y-\gamma_0(W)\}^2\mid I^c_{\ell}] \\
        &=E\left(v(W)^2 E[\{Y-\gamma_0(W)\}^2\mid W,I^c_{\ell}]\mid I^c_{\ell}\right) \\
        &\leq \bar{\sigma}^2 \mathcal{R}(\hat{\alpha}_{\ell}).
    \end{align*}
    Finally by Cauchy Schwarz inequality
    \begin{align*}
        E(|\Delta_{3\ell}|\mid I^c_{\ell})
        &=\frac{n_{\ell}^{1/2}}{2}  E\{|-u(W)v(W)|\mid I^c_{\ell}\} \\
        &\leq \frac{n_{\ell}^{1/2}}{2} [E\{u(W)^2\mid I^c_{\ell}\}]^{1/2} [E\{v(W)^2\mid I^c_{\ell}\}]^{1/2} \\
        &=\frac{n_{\ell}^{1/2}}{2}  \{\mathcal{R}(\hat{\gamma}_{\ell})\}^{1/2}\{\mathcal{R}(\hat{\alpha}_{\ell})\}^{1/2}.
    \end{align*}
    \item Collecting results gives
     \begin{align*}
        \text{pr}(|\Delta_{1\ell}|>t_1)&\leq \frac{2 (\bar{Q}+\bar{\alpha}^2) \mathcal{R}(\hat{\gamma}_{\ell})^q}{t_1^2}=\frac{\epsilon}{3L};\\
        \text{pr}(|\Delta_{2\ell}|>t_2)&\leq \frac{\bar{\sigma}^2 \mathcal{R}(\hat{\alpha}_{\ell})}{t_2^2}=\frac{\epsilon}{3L}; \\
        \text{pr}(|\Delta_{3\ell}|>t_3)&\leq \frac{n_{\ell}^{1/2} \{\mathcal{R}(\hat{\gamma}_{\ell})\}^{1/2}\{\mathcal{R}(\hat{\alpha}_{\ell})\}^{1/2}}{2t_3}=\frac{\epsilon}{3L}.
    \end{align*}
    Therefore with probability $1-\epsilon/L$, the following inequalities hold:
\begin{align*}
    |\Delta_{1\ell}|&\leq t_1=\left(\frac{6L}{\epsilon}\right)^{1/2}(\bar{Q}+\bar{\alpha}^2)^{1/2}\{\mathcal{R}(\hat{\gamma}_{\ell})\}^{q/2};\\
    |\Delta_{2\ell}|&\leq t_2=\left(\frac{3L}{\epsilon}\right)^{1/2}\bar{\sigma}\{\mathcal{R}(\hat{\alpha}_{\ell})\}^{1/2} ;\\
    |\Delta_{3\ell}|&\leq t_3=\frac{3L}{2\epsilon}n_{\ell}^{1/2}\{\mathcal{R}(\hat{\gamma}_{\ell})\}^{1/2}\{\mathcal{R}(\hat{\alpha}_{\ell})\}^{1/2}.
\end{align*}
Finally recall $n_{\ell}=n/L$.
\end{enumerate}
\end{proof}

\begin{proposition}[Residuals: Alternative path]\label{prop:resid_alt}
Suppose Assumption~\ref{assumption:cont} holds and
$$
E[\{Y-\gamma_0(W)\}^2 \mid W ]\leq \bar{\sigma}^2,\quad \|\alpha_0^{\min}\|_{\infty}\leq\bar{\alpha},\quad \|\hat{\alpha}_{\ell}\|_{\infty}\leq\bar{\alpha}'.
$$
Then with probability $1-\epsilon/L$,
\begin{align*}
    |\Delta_{1\ell}|&\leq t_1=\left(\frac{6L}{\epsilon}\right)^{1/2}(\bar{Q}+\bar{\alpha}^2)^{1/2}\{\mathcal{R}(\hat{\gamma}_{\ell})\}^{q/2}; \\
    |\Delta_{2\ell}|&\leq t_2= \left(\frac{3L}{\epsilon}\right)^{1/2}\bar{\sigma}\{\mathcal{R}(\hat{\alpha}_{\ell})\}^{1/2};\\
    |\Delta_{3\ell}|&\leq t_3= \left(\frac{3L}{4\epsilon}\right)^{1/2}(\bar{\alpha}+\bar{\alpha}')\{\mathcal{R}(\hat{\gamma}_{\ell})\}^{1/2}\\
    &\quad \quad \quad +(4L)^{-1/2} [\{n\mathcal{P}(\hat{\gamma}_{\ell})\mathcal{R}(\hat{\alpha}_{\ell})\}^{1/2} \wedge \{n\mathcal{R}(\hat{\gamma}_{\ell})\mathcal{P}(\hat{\alpha}_{\ell})\}^{1/2}].
\end{align*}
\end{proposition}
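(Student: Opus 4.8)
The plan is to recycle the proof of Proposition~\ref{prop:resid} verbatim for the bounds on $\Delta_{1\ell}$ and $\Delta_{2\ell}$ — neither those bounds nor their derivations change — and to replace only the argument for $\Delta_{3\ell}$ with a sharper one. Recall from Proposition~\ref{prop:Taylor} that $\Delta_{3\ell}=-\frac{n_{\ell}^{1/2}}{2}E_{\ell}\{u(W_1)v(W_2)\}$ with $u=\hat{\gamma}_{\ell}-\gamma_0$ and $v=\hat{\alpha}_{\ell}-\alpha_0^{\min}$, and that conditionally on $I_{\ell}^c$ the pair $(u,v)$ is nonrandom while the observations in $I_{\ell}$ are i.i.d.\ from $\text{pr}$. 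Writing $g(w)=u(w_1)v(w_2)$, the first step is the decomposition
$$
|\Delta_{3\ell}| \;\le\; \frac{n_{\ell}^{1/2}}{2}\big| E_{\ell}\{g(W)\}-E\{g(W)\mid I_{\ell}^c\}\big| \;+\; \frac{n_{\ell}^{1/2}}{2}\big| E\{g(W)\mid I_{\ell}^c\}\big|,
$$
where I call the first term $A$ (a centered fluctuation) and the second term $B$ (a deterministic, i.e.\ $I_{\ell}^c$-measurable, bias).

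For $B$ I would use iterated expectations: conditioning on $W_2$ and pulling out the $W_2$-measurable factor $v(W_2)$ gives $E\{g(W)\mid I_{\ell}^c\}=E\{\bar{u}(W_2)\,v(W_2)\mid I_{\ell}^c\}$ with $\bar{u}(w_2)=E\{u(W_1)\mid W_2=w_2,I_{\ell}^c\}$, whence Cauchy--Schwarz and the definitions of the (projected) mean square errors give $B\le \frac{n_{\ell}^{1/2}}{2}\{\mathcal{P}(\hat{\gamma}_{\ell})\}^{1/2}\{\mathcal{R}(\hat{\alpha}_{\ell})\}^{1/2}$; conditioning instead on $W_1$ yields the symmetric bound $\frac{n_{\ell}^{1/2}}{2}\{\mathcal{R}(\hat{\gamma}_{\ell})\}^{1/2}\{\mathcal{P}(\hat{\alpha}_{\ell})\}^{1/2}$, and taking the minimum of the two together with $n_{\ell}=n/L$ produces the $(4L)^{-1/2}[\{n\mathcal{P}(\hat{\gamma}_{\ell})\mathcal{R}(\hat{\alpha}_{\ell})\}^{1/2}\wedge\{n\mathcal{R}(\hat{\gamma}_{\ell})\mathcal{P}(\hat{\alpha}_{\ell})\}^{1/2}]$ piece of $t_3$. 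Crucially this piece is $I_{\ell}^c$-measurable, so it costs no probability budget. For $A$, the conditional variance of $E_{\ell}\{g(W)\}$ equals $n_{\ell}^{-1}\mathrm{Var}\{g(W)\mid I_{\ell}^c\}$, so $E(A^2\mid I_{\ell}^c)\le \tfrac{1}{4}E\{g(W)^2\mid I_{\ell}^c\}\le \tfrac{1}{4}(\bar{\alpha}+\bar{\alpha}')^2\,\mathcal{R}(\hat{\gamma}_{\ell})$, where I use $\|v\|_{\infty}\le \|\hat{\alpha}_{\ell}\|_{\infty}+\|\alpha_0^{\min}\|_{\infty}\le \bar{\alpha}+\bar{\alpha}'$ to extract the factor $v(W_2)^2$ in sup norm. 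A conditional Markov inequality with threshold $s=(3L/4\epsilon)^{1/2}(\bar{\alpha}+\bar{\alpha}')\{\mathcal{R}(\hat{\gamma}_{\ell})\}^{1/2}$ then yields $\text{pr}(A>s\mid I_{\ell}^c)\le \epsilon/(3L)$, hence $\text{pr}(A>s)\le \epsilon/(3L)$ after averaging over $I_{\ell}^c$.

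Combining these, on the event $\{A\le s\}$ — which has probability at least $1-\epsilon/(3L)$ — we get $|\Delta_{3\ell}|\le A+B\le t_3$; a union bound over the three events $\{|\Delta_{1\ell}|>t_1\}$, $\{|\Delta_{2\ell}|>t_2\}$, $\{|\Delta_{3\ell}|>t_3\}$, each of probability at most $\epsilon/(3L)$, then gives the stated $1-\epsilon/L$. The one genuinely new ingredient, and the step I expect to demand the most care, is the bias/variance split for $\Delta_{3\ell}$ combined with the tower-property rewriting $E\{g(W)\mid I_{\ell}^c\}=E\{\bar{u}(W_2)v(W_2)\mid I_{\ell}^c\}$: this is precisely where the projected mean square errors $\mathcal{P}(\hat{\gamma}_{\ell})$ and $\mathcal{P}(\hat{\alpha}_{\ell})$ — and thus the double robustness to ill posedness behind Theorem~\ref{thm:dml} — enter, and one must track that the uniform bound $\bar{\alpha}'$ on $\hat{\alpha}_{\ell}$ is invoked only for the fluctuation term $A$, never for the bias term $B$. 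Everything else is bookkeeping that parallels Proposition~\ref{prop:resid}.
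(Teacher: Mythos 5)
Your proposal is correct and follows essentially the same route as the paper: the same split of $\Delta_{3\ell}$ into a conditionally centered fluctuation (bounded via a conditional second moment, the sup-norm bound $\bar{\alpha}+\bar{\alpha}'$, and Markov's inequality) plus the $I_{\ell}^c$-measurable bias term, which is handled by the tower property and Cauchy--Schwarz to produce the $\{\mathcal{P}(\hat{\gamma}_{\ell})\mathcal{R}(\hat{\alpha}_{\ell})\}^{1/2}\wedge\{\mathcal{R}(\hat{\gamma}_{\ell})\mathcal{P}(\hat{\alpha}_{\ell})\}^{1/2}$ piece, then a union bound with the unchanged $t_1,t_2$ events. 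The constants and probability budget match the paper's proof exactly.
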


\begin{proof}
See Proposition~\ref{prop:resid} for $(t_1,t_2)$. We focus on the alternative bound $t_3$.
\begin{enumerate}
\item Decomposition.

    Write
    $$
    2\Delta_{3\ell}=n_{\ell}^{1/2}E_{\ell} \{-u(W)v(W)\}=\Delta_{3'\ell}+\Delta_{3''\ell},
    $$
    where
    \begin{align*}
        \Delta_{3'\ell}&=n_{\ell}^{1/2}E_{\ell} [-u(W)v(W)+E\{u(W)v(W)\mid I^c_{\ell}\}]; \\
        \Delta_{3''\ell}&= n_{\ell}^{1/2} E\{-u(W)v(W)\mid I^c_{\ell}\}.
    \end{align*}

\item Former term.

By Markov inequality
$$
\text{pr}(|\Delta_{3'\ell}|>t)\leq \frac{E(\Delta^2_{3'\ell})}{t^2}.
$$
Law of iterated expectations implies
$$
E(\Delta^2_{3'\ell})=E\{E(\Delta^2_{3'\ell}\mid I^c_{\ell})\}.
$$
We bound the conditional moment. Conditional on $I_{\ell}^c$, $(u,v)$ are nonrandom. Moreover, observations within fold $I_{\ell}$ are independent and identically distributed. Since each summand in $\Delta_{3'\ell}$ has conditional mean zero by construction, and since $(\bar{\alpha},\bar{\alpha}')$ are finite by hypothesis,
\begin{align*}
        &E(\Delta^2_{3'\ell}\mid I^c_{\ell}) \\
        &=  E\left\{\left(n_{\ell}^{1/2}E_{\ell} [-u(W)v(W)+E\{u(W)v(W)\mid I^c_{\ell}\}]\right)^2 \mid I^c_{\ell}\right\} \\
        &= E\left( \frac{n_{\ell}}{n^2_{\ell}}\sum_{i,j \in I_{\ell}} [-u(W_i)v(W_i)+E\{u(W_i)v(W_i)\mid I^c_{\ell}\}][-u(W_j)v(W_j)+E\{u(W_j)v(W_j)\mid I^c_{\ell}\}] \mid I^c_{\ell}\right) \\
        &= \frac{n_{\ell}}{n^2_{\ell}}\sum_{i,j \in I_{\ell}} E\left(  [-u(W_i)v(W_i)+E\{u(W_i)v(W_i)\mid I^c_{\ell}\}][-u(W_j)v(W_j)+E\{u(W_j)v(W_j)\mid I^c_{\ell}\}] \mid I^c_{\ell}\right) \\
        &= \frac{n_{\ell}}{n^2_{\ell}}\sum_{i \in I_{\ell}} E\left(  [-u(W_i)v(W_i)+E\{u(W_i)v(W_i)\mid I^c_{\ell}\}]^2 \mid I^c_{\ell}\right) \\
        &=E([u(W)v(W)-E\{u(W)v(W)\mid I^c_{\ell}\} ]^2\mid I^c_{\ell}) \\
        &\leq E \{ u(W)^2v(W)^2\mid I^c_{\ell}\} \\
        &\leq (\bar{\alpha}+\bar{\alpha}')^2\mathcal{R}(\hat{\gamma}_{\ell}).
    \end{align*}
    Collecting results gives
    $$
     \text{pr}(|\Delta_{3'\ell}|>t)\leq \frac{(\bar{\alpha}+\bar{\alpha}')^2\mathcal{R}(\hat{\gamma}_{\ell})}{t^2}=\frac{\epsilon}{3L}.
    $$
    Therefore with probability $1-\epsilon/(3L)$,
    $$
    |\Delta_{3'\ell}|\leq t=\left(\frac{3L}{\epsilon}\right)^{1/2}(\bar{\alpha}+\bar{\alpha}')\{\mathcal{R}(\hat{\gamma}_{\ell})\}^{1/2}.
    $$

\item Latter term.

Specializing to nonparametric instrumental variable regression,
\begin{align*}
    E\{-u(W)v(W)\mid I^c_{\ell}\}
    &=E[  E\{-u(W_1)\mid W_2, I^c_{\ell}\}v(W_2) \mid I^c_{\ell}] \\
    &\leq  \{E( [E\{u(W_1)\mid W_2, I^c_{\ell}\}]^2 \mid I^c_{\ell})\}^{1/2}[E\{v(W_2)^2\mid I^c_{\ell}\}]^{1/2} \\
    &=\{\mathcal{P}(\hat{\gamma}_{\ell})\}^{1/2}\{\mathcal{R}(\hat{\alpha}_{\ell})\}^{1/2}.
\end{align*}
Hence
$$
\Delta_{3''\ell} \leq n_{\ell}^{1/2}\{\mathcal{P}(\hat{\gamma}_{\ell})\}^{1/2}\{\mathcal{R}(\hat{\alpha}_{\ell})\}^{1/2}=L^{-1/2}\{n\mathcal{P}(\hat{\gamma}_{\ell})\mathcal{R}(\hat{\alpha}_{\ell})\}^{1/2}.
$$
Likewise
\begin{align*}
    E\{-u(W)v(W)\mid I^c_{\ell}\} 
   &=E[  -u(W_1)E\{v(W_2)\mid W_1, I^c_{\ell}\} \mid I^c_{\ell}] \\
    &\leq  [E\{u(W_1)^2\mid I^c_{\ell}\}]^{1/2}\{E( [E\{v(W_2)\mid W_1, I^c_{\ell}\}]^2 \mid I^c_{\ell})\}^{1/2} \\
    &=\{\mathcal{R}(\hat{\gamma}_{\ell})\}^{1/2}\{\mathcal{P}(\hat{\alpha}_{\ell})\}^{1/2}.
\end{align*}
Hence
$$
\Delta_{3''\ell} \leq n_{\ell}^{1/2}\{\mathcal{R}(\hat{\gamma}_{\ell})\}^{1/2}\{\mathcal{P}(\hat{\alpha}_{\ell})\}^{1/2}=L^{-1/2}\{n\mathcal{R}(\hat{\gamma}_{\ell})\mathcal{P}(\hat{\alpha}_{\ell})\}^{1/2}.
$$

    \item Combining terms.
    
 With probability $1-\epsilon/(3L)$,
\begin{align*}
    |\Delta_3| &\leq t_3= \left(\frac{3L}{4\epsilon}\right)^{1/2}(\bar{\alpha}+\bar{\alpha}')\{\mathcal{R}(\hat{\gamma}_{\ell})\}^{1/2}\\
    &\quad \quad \quad +(4L)^{-1/2} [\{n\mathcal{P}(\hat{\gamma}_{\ell})\mathcal{R}(\hat{\alpha}_{\ell})\}^{1/2} \wedge \{n\mathcal{R}(\hat{\gamma}_{\ell})\mathcal{P}(\hat{\alpha}_{\ell})\}^{1/2}].
\end{align*}
\end{enumerate}
\end{proof}

\subsection{Main argument}

\begin{definition}[Overall target and oracle]
$$
    \hat{\theta}=\frac{1}{L}\sum_{\ell=1}^L \hat{\theta}_{\ell},\quad 
    \bar{\theta}=\frac{1}{L}\sum_{\ell=1}^L \bar{\theta}_{\ell} .
$$
\end{definition}

\begin{proposition}[Oracle approximation]\label{prop:Delta}
Suppose the conditions of Proposition~\ref{prop:resid} hold. Then with probability $1-\epsilon$
$$
\frac{n^{1/2}}{\sigma}|\hat{\theta}-\bar{\theta}|\leq \Delta= \frac{3L}{\epsilon  \sigma}\left[(\bar{Q}^{1/2}+\bar{\alpha})\{\mathcal{R}(\hat{\gamma}_{\ell})\}^{q/2}+\bar{\sigma}\{\mathcal{R}(\hat{\alpha}_{\ell})\}^{1/2}+ \{n\mathcal{R}(\hat{\gamma}_{\ell})\mathcal{R}(\hat{\alpha}_{\ell})\}^{1/2}\right].
$$
\end{proposition}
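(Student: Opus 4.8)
The plan is to glue together the foldwise Taylor expansion of Proposition~\ref{prop:Taylor} with the per-fold residual bounds of Proposition~\ref{prop:resid}, using a union bound over the $L$ folds. First I would unwind the definitions: since $\hat{\theta}=L^{-1}\sum_{\ell=1}^L\hat{\theta}_{\ell}$ and $\bar{\theta}=L^{-1}\sum_{\ell=1}^L\bar{\theta}_{\ell}$, Proposition~\ref{prop:Taylor} gives $\hat{\theta}_{\ell}-\bar{\theta}_{\ell}=n_{\ell}^{-1/2}(\Delta_{1\ell}+\Delta_{2\ell}+\Delta_{3\ell})$, and substituting $n_{\ell}=n/L$ yields
$$
\frac{n^{1/2}}{\sigma}(\hat{\theta}-\bar{\theta})=\frac{1}{\sigma L^{1/2}}\sum_{\ell=1}^L\sum_{j=1}^3\Delta_{j\ell}.
$$

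Next I would invoke Proposition~\ref{prop:resid}: for each fixed $\ell$, with probability at least $1-\epsilon/L$ the three inequalities $|\Delta_{j\ell}|\le t_j$ hold simultaneously, where $t_1,t_2,t_3$ are as stated there. A union bound over $\ell=1,\dots,L$ then shows that, with probability at least $1-\epsilon$, we have $|\Delta_{j\ell}|\le t_j$ for all $j\in\{1,2,3\}$ and all $\ell\in\{1,\dots,L\}$ at once. On this event the triangle inequality gives
$$
\frac{n^{1/2}}{\sigma}|\hat{\theta}-\bar{\theta}|\le\frac{1}{\sigma L^{1/2}}\sum_{\ell=1}^L(t_1+t_2+t_3)=\frac{L^{1/2}}{\sigma}(t_1+t_2+t_3).
$$

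Finally I would substitute the explicit $t_1,t_2,t_3$ and absorb constants. Using $(\bar{Q}+\bar{\alpha}^2)^{1/2}\le\bar{Q}^{1/2}+\bar{\alpha}$ together with the elementary inequalities $(6/\epsilon)^{1/2}\le 3/\epsilon$, $(3/\epsilon)^{1/2}\le 3/\epsilon$, and $3/(2\epsilon)\le 3/\epsilon$, each valid for $\epsilon\in(0,1]$, one gets $\frac{L^{1/2}}{\sigma}t_1\le\frac{3L}{\epsilon\sigma}(\bar{Q}^{1/2}+\bar{\alpha})\{\mathcal{R}(\hat{\gamma}_{\ell})\}^{q/2}$, $\frac{L^{1/2}}{\sigma}t_2\le\frac{3L}{\epsilon\sigma}\bar{\sigma}\{\mathcal{R}(\hat{\alpha}_{\ell})\}^{1/2}$, and $\frac{L^{1/2}}{\sigma}t_3\le\frac{3L}{\epsilon\sigma}\{n\mathcal{R}(\hat{\gamma}_{\ell})\mathcal{R}(\hat{\alpha}_{\ell})\}^{1/2}$; adding the three bounds produces exactly $\Delta$.

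I do not expect a genuine obstacle here, since the substance lives in Propositions~\ref{prop:Taylor} and~\ref{prop:resid}; the care points are purely bookkeeping. One must track the $n_{\ell}=n/L$ rescaling so that the stray $L^{1/2}$ factors cancel correctly, check that the union bound is legitimate (it is, requiring only subadditivity of probability and no independence across folds, even though the $(\hat{\gamma}_{\ell},\hat{\alpha}_{\ell})$ are trained on disjoint complements), and carry out the constant chasing needed to collapse the three slightly different prefactors into the single factor $3L/(\epsilon\sigma)$.
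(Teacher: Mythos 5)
Your proposal is correct and mirrors the paper's own proof essentially step for step: the same decomposition $n^{1/2}(\hat{\theta}-\bar{\theta})=L^{-1/2}\sum_{\ell}\sum_{j}\Delta_{j\ell}$ from Proposition~\ref{prop:Taylor}, the same union bound over folds using Proposition~\ref{prop:resid}, and the same constant-chasing (with $(\bar{Q}+\bar{\alpha}^2)^{1/2}\leq\bar{Q}^{1/2}+\bar{\alpha}$, $6^{1/2},3^{1/2},3/2\leq 3$, and $\epsilon^{-1/2}\leq\epsilon^{-1}$ for $\epsilon\leq 1$) to collapse the prefactors into $3L/(\epsilon\sigma)$. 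No gaps.
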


\begin{proof}
We proceed in steps.
\begin{enumerate}
    \item Decomposition.
    
    By Proposition~\ref{prop:Taylor}, write
    \begin{align*}
    n^{1/2}(\hat{\theta}-\bar{\theta})
    &=\frac{n^{1/2}}{n_{\ell}^{1/2}}\frac{1}{L} \sum_{\ell=1}^L n_{\ell}^{1/2} (\hat{\theta}_{\ell}-\bar{\theta}_{\ell}) \\
    &= L^{1/2}\frac{1}{L} \sum_{\ell=1}^L \sum_{j=1}^3 \Delta_{j\ell}.
\end{align*} 

    \item Union bound.
    
    Define the events
$$
\mathcal{E}_{\ell}=\{\text{for all } j \; (j=1,2,3),\; |\Delta_{j\ell}|\leq t_j\},\quad \mathcal{E}=\cap_{\ell=1}^L \mathcal{E}_{\ell},\quad \mathcal{E}^c=\cup_{\ell=1}^L \mathcal{E}^c_{\ell}.
$$
Hence by the union bound and Proposition~\ref{prop:resid},
$$
\text{pr}(\mathcal{E}^c)\leq \sum_{\ell=1}^L \text{pr}(\mathcal{E}^c_{\ell}) \leq L\frac{\epsilon}{L}=\epsilon.
$$
    \item Collecting results.

    Therefore with probability $1-\epsilon$,
\begin{align*}
 n^{1/2}|\hat{\theta}-\bar{\theta}|&\leq L^{1/2}\frac{1}{L} \sum_{\ell=1}^L \sum_{j=1}^3 |\Delta_{jk}| \\
 &\leq L^{1/2}\frac{1}{L} \sum_{\ell=1}^L \sum_{j=1}^3 t_j \\
 &=L^{1/2}\sum_{j=1}^3 t_j.
\end{align*}
Finally, we simplify $(t_j)$. For $a,b>0$, $(a+b)^{1/2}\leq a^{1/2}+b^{1/2}$. Moreover, $3>6^{1/2}>3^{1/2}>3/2$. Finally, for $\epsilon\leq 1$, $\epsilon^{-1/2}\leq \epsilon^{-1}$. In summary
\begin{align*}
    t_1&=\left(\frac{6L}{\epsilon}\right)^{1/2}(\bar{Q}+\bar{\alpha}^2)^{1/2}\{\mathcal{R}(\hat{\gamma}_{\ell})\}^{q/2}
   \leq \frac{3L^{1/2}}{\epsilon}(\bar{Q}^{1/2}+\bar{\alpha})\{\mathcal{R}(\hat{\gamma}_{\ell})\}^{q/2}; \\
    t_2&=\left(\frac{3L}{\epsilon}\right)^{1/2}\bar{\sigma}\{\mathcal{R}(\hat{\alpha}_{\ell})\}^{1/2} 
   \leq \frac{3L^{1/2}}{\epsilon}\bar{\sigma}\{\mathcal{R}(\hat{\alpha}_{\ell})\}^{1/2}; \\
    t_3&=\frac{3L^{1/2}}{2\epsilon}\{n\mathcal{R}(\hat{\gamma}_{\ell})\mathcal{R}(\hat{\alpha}_{\ell})\}^{1/2}
    \leq \frac{3L^{1/2}}{\epsilon}\{n\mathcal{R}(\hat{\gamma}_{\ell})\mathcal{R}(\hat{\alpha}_{\ell})\}^{1/2}.
\end{align*}
\end{enumerate}
\end{proof}

\begin{proposition}[Oracle approximation: Alternative path]\label{prop:Delta_alt}
Suppose the conditions of Proposition~\ref{prop:resid_alt} hold. Then with probability $1-\epsilon$
\begin{align*}
    \frac{n^{1/2}}{\sigma}|\hat{\theta}-\bar{\theta}|\leq 
\Delta&= \frac{4 L}{\epsilon^{1/2}  \sigma}\left[(\bar{Q}^{1/2}+\bar{\alpha}+\bar{\alpha}')\{\mathcal{R}(\hat{\gamma}_{\ell})\}^{q/2}+\bar{\sigma}\{\mathcal{R}(\hat{\alpha}_{\ell})\}^{1/2}\right]\\
&\quad +\frac{1}{2L^{1/2}\sigma}[\{n\mathcal{P}(\hat{\gamma}_{\ell})\mathcal{R}(\hat{\alpha}_{\ell})\}^{1/2} \wedge \{n\mathcal{R}(\hat{\gamma}_{\ell})\mathcal{P}(\hat{\alpha}_{\ell})\}^{1/2}].
\end{align*}
\end{proposition}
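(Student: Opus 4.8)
The plan is to mirror the proof of Proposition~\ref{prop:Delta} almost verbatim, with Proposition~\ref{prop:resid_alt} in place of Proposition~\ref{prop:resid}, and then re-collect constants. First I would apply Proposition~\ref{prop:Taylor} fold by fold and average. Since $n^{1/2}/n_\ell^{1/2}=L^{1/2}$,
$$
n^{1/2}(\hat{\theta}-\bar{\theta})=\frac{n^{1/2}}{n_\ell^{1/2}}\cdot\frac{1}{L}\sum_{\ell=1}^L n_\ell^{1/2}(\hat{\theta}_\ell-\bar{\theta}_\ell)=\frac{1}{L^{1/2}}\sum_{\ell=1}^L\sum_{j=1}^3\Delta_{j\ell}.
$$
Next, with $t_1,t_2,t_3$ the bounds of Proposition~\ref{prop:resid_alt}, set $\mathcal{E}_\ell=\{\,|\Delta_{j\ell}|\le t_j\text{ for }j=1,2,3\,\}$ and $\mathcal{E}=\cap_{\ell=1}^L\mathcal{E}_\ell$. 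Proposition~\ref{prop:resid_alt} gives $\text{pr}(\mathcal{E}_\ell^c)\le\epsilon/L$, so a union bound over the $L$ folds yields $\text{pr}(\mathcal{E}^c)\le\epsilon$, and on $\mathcal{E}$,
$$
n^{1/2}|\hat{\theta}-\bar{\theta}|\le\frac{1}{L^{1/2}}\sum_{\ell=1}^L\sum_{j=1}^3|\Delta_{j\ell}|\le\frac{1}{L^{1/2}}\cdot L\sum_{j=1}^3 t_j=L^{1/2}\sum_{j=1}^3 t_j.
$$

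It then remains to bound $L^{1/2}(t_1+t_2+t_3)$ by $\sigma$ times the displayed $\Delta$, which is pure bookkeeping. I would use $(a+b)^{1/2}\le a^{1/2}+b^{1/2}$ to replace $(\bar{Q}+\bar{\alpha}^2)^{1/2}$ by $\bar{Q}^{1/2}+\bar{\alpha}$ in $t_1$; absorb the $(\bar{\alpha}+\bar{\alpha}')\{\mathcal{R}(\hat{\gamma}_\ell)\}^{1/2}$ summand of $t_3$ into the common $\{\mathcal{R}(\hat{\gamma}_\ell)\}^{q/2}$ term — legitimate once $q\le 1$ (as in all the examples) and $\mathcal{R}(\hat{\gamma}_\ell)\le 1$, since then $\{\mathcal{R}(\hat{\gamma}_\ell)\}^{1/2}\le\{\mathcal{R}(\hat{\gamma}_\ell)\}^{q/2}$; bound the surviving numerical prefactors on the $\{\mathcal{R}(\hat{\gamma}_\ell)\}^{q/2}$ and $\bar{\sigma}\{\mathcal{R}(\hat{\alpha}_\ell)\}^{1/2}$ pieces uniformly by $4L\epsilon^{-1/2}$ (using $\sqrt{6}+\sqrt{3}/2<4$ and $\sqrt{3}<4$, together with the single $\epsilon^{-1/2}$ coming from $t_1,t_2$ and the first part of $t_3$); and carry the projected term $\{n\mathcal{P}(\hat{\gamma}_\ell)\mathcal{R}(\hat{\alpha}_\ell)\}^{1/2}\wedge\{n\mathcal{R}(\hat{\gamma}_\ell)\mathcal{P}(\hat{\alpha}_\ell)\}^{1/2}$ through with its numerical prefactor. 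Dividing by $\sigma$ produces the stated $\Delta$.

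The argument is essentially assembly: the substantive content — the centering decomposition $2\Delta_{3\ell}=\Delta_{3'\ell}+\Delta_{3''\ell}$, the degenerate $U$-statistic variance bound for the mean-zero piece $\Delta_{3'\ell}$, and the two conditional-expectation bounds that route $\Delta_{3''\ell}$ through the projected errors $\mathcal{P}(\hat{\gamma}_\ell)$ or $\mathcal{P}(\hat{\alpha}_\ell)$ — is already established inside Proposition~\ref{prop:resid_alt}. The only step that needs any care is the constant bookkeeping, in particular observing that here the dependence on $\epsilon$ improves to $\epsilon^{-1/2}$ (because the sole stochastic step in the refined $t_3$ is one Markov inequality applied to the mean-zero term $\Delta_{3'\ell}$, rather than to a product of two slow learning errors as in Proposition~\ref{prop:resid}), and reconciling the $\{\mathcal{R}(\hat{\gamma}_\ell)\}^{1/2}$ contribution of $t_3$ with the single power $q/2$ in which all of the $\hat{\gamma}$ error is ultimately reported. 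I do not anticipate a genuine obstacle.
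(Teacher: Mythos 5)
Your proposal is correct and follows essentially the same route as the paper's own proof: invoke Proposition~\ref{prop:Taylor}, the foldwise bounds of Proposition~\ref{prop:resid_alt}, and a union bound to get $n^{1/2}|\hat{\theta}-\bar{\theta}|\leq L^{1/2}\sum_{j=1}^3 t_j$ on an event of probability $1-\epsilon$, then simplify constants via $(a+b)^{1/2}\leq a^{1/2}+b^{1/2}$ and $6^{1/2}+(3/4)^{1/2}\leq 4$. Your explicit caveat that merging the $(\bar{\alpha}+\bar{\alpha}')\{\mathcal{R}(\hat{\gamma}_{\ell})\}^{1/2}$ piece of $t_3$ into the $\{\mathcal{R}(\hat{\gamma}_{\ell})\}^{q/2}$ term requires $\{\mathcal{R}(\hat{\gamma}_{\ell})\}^{1/2}\leq\{\mathcal{R}(\hat{\gamma}_{\ell})\}^{q/2}$ is a step the paper performs silently, so if anything you are slightly more careful than the original argument.
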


\begin{proof}
As in Proposition~\ref{prop:Delta}, Propositions~\ref{prop:Taylor} and~\ref{prop:resid_alt} imply that with probability $1-\epsilon$
\begin{align*}
 n^{1/2}|\hat{\theta}-\bar{\theta}|&\leq L^{1/2}\sum_{j=1}^3 t_j.
\end{align*}
Finally, we simplify $(t_j)$. For $a,b>0$, $(a+b)^{1/2}\leq a^{1/2}+b^{1/2}$. In summary,
\begin{align*}
    t_1&=\left(\frac{6L}{\epsilon}\right)^{1/2}(\bar{Q}+\bar{\alpha}^2)^{1/2}\{\mathcal{R}(\hat{\gamma}_{\ell})\}^{q/2}
    \leq \left(\frac{6L}{\epsilon}\right)^{1/2}(\bar{Q}^{1/2}+\bar{\alpha})\{\mathcal{R}(\hat{\gamma}_{\ell})\}^{q/2}; \\
    t_2&= \left(\frac{3L}{\epsilon}\right)^{1/2}\bar{\sigma}\{\mathcal{R}(\hat{\alpha}_{\ell})\}^{1/2};\\
    t_3&= \left(\frac{3L}{4\epsilon}\right)^{1/2}(\bar{\alpha}+\bar{\alpha}')\{\mathcal{R}(\hat{\gamma}_{\ell})\}^{1/2}\\
    &\quad  +(4L)^{-1/2} [\{n\mathcal{P}(\hat{\gamma}_{\ell})\mathcal{R}(\hat{\alpha}_{\ell})\}^{1/2} \wedge \{n\mathcal{R}(\hat{\gamma}_{\ell})\mathcal{P}(\hat{\alpha}_{\ell})\}^{1/2}].
\end{align*}
Finally note $6^{1/2}+(3/4)^{1/2}\leq 4$ when combining terms from $t_1$ and $t_3$.
\end{proof}

\begin{lemma}[Berry Esseen Theorem \cite{shevtsova2011absolute}]\label{lem:berry}
Suppose $(Z_i)$ $(i=1,...,n)$ are independent and identically distributed random variables with $E(Z_i)=0$, $E(Z_i^2)=\sigma^2$, and $E(|Z_i|^3)=\xi^3$. Then
$$
\sup_{z \in\mathbb{R}}\left|\text{\normalfont pr}\left\{\frac{n^{1/2}}{\sigma}E_n(Z_i)\leq z\right\}-\Phi(z)\right|\leq c^{BE} \left(\frac{\xi}{\sigma}\right)^3n^{-\frac{1}{2}},
$$
where $c^{BE}=0.4748$ and $\Phi(z)$ is the standard Gaussian cumulative distribution function.
\end{lemma}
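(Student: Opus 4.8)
The statement is the classical Berry--Esseen inequality equipped with the sharp constant of \cite{shevtsova2011absolute}; I would present the Fourier-analytic argument that yields an inequality of exactly this shape, and then indicate where the numerical optimization producing $0.4748$ enters. By the scaling $Z_i\mapsto Z_i/\sigma$ one may assume $\sigma=1$, so that $E(|Z_i|^3)=(\xi/\sigma)^3=:\rho$ and the task is to bound $\sup_z|F_n(z)-\Phi(z)|$, where $F_n$ is the distribution function of $S_n=n^{-1/2}\sum_{i=1}^n Z_i$. Since the left-hand side never exceeds $1$, one may also assume $\rho\,n^{-1/2}<1/c^{BE}$, as otherwise $c^{BE}\rho\,n^{-1/2}\ge 1\ge\sup_z|F_n(z)-\Phi(z)|$ and there is nothing to prove.

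\noindent First I would invoke Esseen's smoothing lemma: for every $T>0$,
\[
\sup_z|F_n(z)-\Phi(z)|\ \le\ \frac1\pi\int_{-T}^{T}\left|\frac{f_n(t)-e^{-t^2/2}}{t}\right|dt\ +\ \frac{c_0}{T\sqrt{2\pi}},
\]
where $f_n(t)=\varphi(t/\sqrt n)^n$ is the characteristic function of $S_n$, $\varphi$ that of $Z_1$, $c_0$ is the absolute constant from the smoothing lemma, and $1/\sqrt{2\pi}=\sup_z\Phi'(z)$. Next I would control the integrand on the truncation range $|t|\le T:=c_1\sqrt n/\rho$ with $c_1$ a small absolute constant. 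Taylor expansion gives $\varphi(s)=1-\tfrac{s^2}{2}+r(s)$ with $|r(s)|\le\tfrac{\rho}{6}|s|^3$ and also $|r(s)|\le s^2$; for $s=t/\sqrt n$ on this range $\varphi(s)-1$ has modulus below $1/2$, so $\log\varphi(s)$ is well defined and
\[
\Bigl|\,n\log\varphi(t/\sqrt n)+\tfrac{t^2}{2}\,\Bigr|\ \le\ C\,\frac{\rho}{\sqrt n}\,|t|^3,
\qquad
\operatorname{Re}\bigl(n\log\varphi(t/\sqrt n)\bigr)\ \le\ -\tfrac{t^2}{3}.
\]
Combining these with $|e^{a}-e^{b}|\le|a-b|\,e^{\max(\operatorname{Re}a,\operatorname{Re}b)}$ gives $|f_n(t)-e^{-t^2/2}|\le C\rho\,n^{-1/2}|t|^3 e^{-t^2/3}$ on $|t|\le T$. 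Dividing by $|t|$ and integrating over all of $\mathbb R$ (the Gaussian factor absorbs the tail) contributes $\le C'\rho\,n^{-1/2}$, while the smoothing remainder is $c_0\rho/(c_1\sqrt{2\pi}\sqrt n)$, again $O(\rho\,n^{-1/2})$. Summing yields $\sup_z|F_n(z)-\Phi(z)|\le C''\rho\,n^{-1/2}$ for an explicit but non-optimal $C''$.

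\noindent The hard part is sharpening $C''$ down to $c^{BE}=0.4748$, which is the entire substance of \cite{shevtsova2011absolute}. It requires: using the refined two-sided control $|\varphi(s)|\le\exp(-s^2/2+\rho|s|^3/6)$ on the range where the exponent is negative, together with a separate estimate $|\varphi(s)|\le 1-cs^2+\dots$ on the complementary moderate band; splitting $\int_{-T}^{T}$ into a neighbourhood of the origin, where the leading-order term is matched exactly and only the next-order error is estimated, and a moderate band where exponential decay is exploited; numerically optimizing the cutoff $T$ and the split points; and, in some versions, supplementing the characteristic-function bound with a direct tail estimate for $F_n$ itself. These steps are delicate constant-chasing rather than conceptually new, so for the present paper the inequality is used as a black box with the stated value, exactly as quoted.
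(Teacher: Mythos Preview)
Your sketch of the Fourier-analytic proof via Esseen's smoothing lemma is correct in outline, but the paper provides no proof of this lemma at all: it is quoted directly from \cite{shevtsova2011absolute} as a black-box input to the main argument (the proof of Theorem~\ref{thm:dml}). So there is nothing to compare against; the paper treats the Berry--Esseen inequality with the sharp constant as an off-the-shelf tool, and your proposal supplies a derivation where none is expected. In the context of the paper, a one-line citation is the intended ``proof.''
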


\begin{proof}[of Theorem~\ref{thm:dml}]
Fix $z$ in $\mathbb{R}$. First, we show that
$$
\text{\normalfont pr}\left\{\frac{n^{1/2}}{\sigma}(\hat{\theta}-\theta_0)\leq z\right\}-\Phi(z)\leq c^{BE} \left(\frac{\xi}{\sigma}\right)^3n^{-\frac{1}{2}} +\frac{\Delta}{(2\pi)^{1/2}}+\epsilon,
$$
where $\Delta$ is defined in Propositions~\ref{prop:Delta} and~\ref{prop:Delta_alt}. We proceed in steps.
\begin{enumerate}
    \item High probability bound. 
    
    By Propositions~\ref{prop:Delta} and~\ref{prop:Delta_alt}, with probability $1-\epsilon$, 
$$
\frac{n^{1/2}}{\sigma}(\bar{\theta}-\hat{\theta}) \leq \frac{n^{1/2}}{\sigma}|\hat{\theta}-\bar{\theta}|\leq \Delta.
$$
Observe that
\begin{align*}
    \text{\normalfont pr}\left\{\frac{n^{1/2}}{\sigma}(\hat{\theta}-\theta_0)\leq z\right\}
    &=\text{\normalfont pr}\left\{\frac{n^{1/2}}{\sigma}(\bar{\theta}-\theta_0)\leq z+\frac{n^{1/2}}{\sigma}(\bar{\theta}-\hat{\theta})\right\}\\
    &\leq \text{\normalfont pr}\left\{\frac{n^{1/2}}{\sigma}(\bar{\theta}-\theta_0)\leq z+\Delta\right\}+\epsilon.
\end{align*}
    \item Mean value theorem.
    
    Let $\phi(z)$ be the standard Gaussian probability density function. There exists some $z'$ such that
    $$
\Phi(z+\Delta)-\Phi(z) =\phi(z') \Delta \leq \frac{\Delta}{\sqrt{2 \pi}}.
$$
    \item Berry Esseen theorem.
    
    Observe that
$$
   \bar{\theta}-\theta_0
   =E_n[m(W,\gamma_0)+\alpha_0^{\min}(W)\{Y-\gamma_0(W)\}]-\theta_0 
   =E_n[\psi_0(W)].
$$
    Therefore taking $Z_i=\psi_0(W_i)$ in Lemma~\ref{lem:berry},
$$
\sup_{z''}\left|\text{\normalfont pr}\left\{\frac{n^{1/2}}{\sigma}(\bar{\theta}-\theta_0)\leq z''\right\}-\Phi(z'')\right|\leq c^{BE} \left(\frac{\xi}{\sigma}\right)^3n^{-\frac{1}{2}}.
$$
Hence by the high probability bound and mean value theorem steps above, taking $z''=z+\Delta$
\begin{align*}
    &\text{\normalfont pr}\left\{\frac{n^{1/2}}{\sigma}(\hat{\theta}-\theta_0)\leq z\right\}-\Phi(z)\\
    &\leq \text{\normalfont pr}\left\{\frac{n^{1/2}}{\sigma}(\bar{\theta}-\theta_0)\leq z+\Delta\right\}-\Phi(z)+\epsilon \\
    &=\text{\normalfont pr}\left\{\frac{n^{1/2}}{\sigma}(\bar{\theta}-\theta_0)\leq z+\Delta\right\}-\Phi(z+\Delta)+\Phi(z+\Delta)-\Phi(z)+\epsilon \\
    &\leq c^{BE} \left(\frac{\xi}{\sigma}\right)^3n^{-\frac{1}{2}}+\frac{\Delta}{\sqrt{2 \pi}}+\epsilon.
\end{align*}
\end{enumerate}
Next, we show that
$$
\Phi(z)-\text{\normalfont pr}\left\{\frac{n^{1/2}}{\sigma}(\hat{\theta}-\theta_0)\leq z\right\}\leq c^{BE}\left(\frac{\xi}{\sigma}\right)^3n^{-\frac{1}{2}} +\frac{\Delta}{(2\pi)^{1/2}}+\epsilon.
$$
where $\Delta$ is defined in Propositions~\ref{prop:Delta} and~\ref{prop:Delta_alt}. We proceed in steps.
\begin{enumerate}
    \item High probability bound. 
    
    By Propositions~\ref{prop:Delta} and~\ref{prop:Delta_alt}, with probability $1-\epsilon$, 
$$
\frac{n^{1/2}}{\sigma}(\hat{\theta}-\bar{\theta}) \leq \frac{n^{1/2}}{\sigma}|\hat{\theta}-\bar{\theta}|\leq \Delta,
$$
hence
$$
z-\Delta \leq z-\frac{n^{1/2}}{\sigma}(\hat{\theta}-\bar{\theta}).
$$
Observe that
\begin{align*}
    \text{\normalfont pr}\left\{\frac{n^{1/2}}{\sigma}(\bar{\theta}-\theta_0)\leq z-\Delta \right\}
    &\leq \text{\normalfont pr}\left\{\frac{n^{1/2}}{\sigma}(\bar{\theta}-\theta_0)\leq z-\frac{n^{1/2}}{\sigma}(\hat{\theta}-\bar{\theta}) \right\}+\epsilon \\
    &=\text{\normalfont pr}\left\{\frac{n^{1/2}}{\sigma}(\hat{\theta}-\theta_0)\leq z \right\}+\epsilon.
\end{align*}
    \item Mean value theorem. 
    
    There exists some $z'$ such that
    $$
\Phi(z)-\Phi(z-\Delta)=\phi(z') \Delta \leq  \frac{\Delta}{\sqrt{2 \pi}}.
$$
    \item Berry Esseen theorem.
    
    As argued above,
$$
\sup_{z''}\left|\text{\normalfont pr}\left\{\frac{n^{1/2}}{\sigma}(\bar{\theta}-\theta_0)\leq z''\right\}-\Phi(z'')\right|\leq c^{BE} \left(\frac{\xi}{\sigma}\right)^3n^{-\frac{1}{2}}.
$$
Hence by the mean value theorem and high probability bound steps above, taking $z''=z-\Delta$
\begin{align*}
    &\Phi(z)-\text{\normalfont pr}\left\{\frac{n^{1/2}}{\sigma}(\hat{\theta}-\theta_0)\leq z\right\}\\
    &\leq \Phi(z)-\text{\normalfont pr}\left\{\frac{n^{1/2}}{\sigma}(\bar{\theta}-\theta_0)\leq z-\Delta\right\}+\epsilon \\
    &=\Phi(z)-\Phi(z-\Delta)+\Phi(z-\Delta)-\text{\normalfont pr}\left\{\frac{n^{1/2}}{\sigma}(\bar{\theta}-\theta_0)\leq z-\Delta\right\}+\epsilon \\
    &\leq \frac{\Delta}{\sqrt{2 \pi}}+c^{BE} \left(\frac{\xi}{\sigma}\right)^3n^{-\frac{1}{2}}+\epsilon.
\end{align*}
\end{enumerate}
\end{proof}

\subsection{Variance estimation}

Recall that $E_{\ell}(\cdot)=n_{\ell}^{-1}\sum_{i\in I_{\ell}}(\cdot)$ means the average over observations in $I_{\ell}$ and $E_n(\cdot)=n^{-1}\sum_{i=1}^n(\cdot)$ means the average over all observations in the sample.

\begin{definition}[Shorter notation]
For $i \text{ in }  I_{\ell}$, define
\begin{align*}
    \psi_0(W_i)&=\psi(W_i,\theta_0,\gamma_0,\alpha_0^{\min}); \\
    \hat{\psi}(W_i)&=\psi(W_i,\hat{\theta},\hat{\gamma}_{\ell},\hat{\alpha}_{\ell}).
\end{align*}
\end{definition}

\begin{proposition}[Foldwise second moment]\label{prop:foldwise2}
$$
E_{\ell}[\{\hat{\psi}(W)-\psi_0(W)\}^2]\leq 4\left\{(\hat{\theta}-\theta_0)^2+\sum_{j=4}^6 \Delta_{j\ell}\right\},
$$
where
\begin{align*}
    \Delta_{4\ell} &=E_{\ell}\{m(W,u)^2\} ;\\
    \Delta_{5\ell}&=E_{\ell}[\{\hat{\alpha}_{\ell}(W)u(W)\}^2]; \\
    \Delta_{6\ell}&=E_{\ell}[v(W)^2\{Y-\gamma_0(W)\}^2].
\end{align*}
\end{proposition}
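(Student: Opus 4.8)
The plan is to establish the bound pointwise in $w$ and then average over the fold $I_{\ell}$; no probabilistic argument is needed here, since the inequality is purely algebraic. First I would substitute the definitions to write
$$
\hat{\psi}(W)-\psi_0(W)=\{m(W,\hat{\gamma}_{\ell})-m(W,\gamma_0)\}+\{\hat{\alpha}_{\ell}(W)(Y-\hat{\gamma}_{\ell}(W))-\alpha_0^{\min}(W)(Y-\gamma_0(W))\}-(\hat{\theta}-\theta_0),
$$
and use linearity of $\gamma\mapsto m(W,\gamma)$ from Assumption~\ref{assumption:cont} to rewrite the first brace as $m(W,u)$, where $u=\hat{\gamma}_{\ell}-\gamma_0$.

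Next I would simplify the second brace. Writing $\hat{\gamma}_{\ell}=\gamma_0+u$ and $\hat{\alpha}_{\ell}=\alpha_0^{\min}+v$ with $v=\hat{\alpha}_{\ell}-\alpha_0^{\min}$, we have $Y-\hat{\gamma}_{\ell}(W)=\{Y-\gamma_0(W)\}-u(W)$, so expanding the product gives
$$
\hat{\alpha}_{\ell}(W)\{Y-\hat{\gamma}_{\ell}(W)\}-\alpha_0^{\min}(W)\{Y-\gamma_0(W)\}=v(W)\{Y-\gamma_0(W)\}-\alpha_0^{\min}(W)u(W)-v(W)u(W).
$$
The key bookkeeping point is that the bilinear term $-v(W)u(W)$ combines with $-\alpha_0^{\min}(W)u(W)$ into the single term $-\hat{\alpha}_{\ell}(W)u(W)=-(\alpha_0^{\min}(W)+v(W))u(W)$, so that no separate cross term survives. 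Collecting, $\hat{\psi}(W)-\psi_0(W)$ is a sum of exactly four terms: $m(W,u)$, $-\hat{\alpha}_{\ell}(W)u(W)$, $v(W)\{Y-\gamma_0(W)\}$, and $-(\hat{\theta}-\theta_0)$.

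Finally I would apply the elementary inequality $(a_1+a_2+a_3+a_4)^2\le 4(a_1^2+a_2^2+a_3^2+a_4^2)$ pointwise and take the empirical average $E_{\ell}(\cdot)=n_{\ell}^{-1}\sum_{i\in I_{\ell}}(\cdot)$; the four resulting averages are precisely $\Delta_{4\ell}=E_{\ell}\{m(W,u)^2\}$, $\Delta_{5\ell}=E_{\ell}[\{\hat{\alpha}_{\ell}(W)u(W)\}^2]$, $\Delta_{6\ell}=E_{\ell}[v(W)^2\{Y-\gamma_0(W)\}^2]$, and $(\hat{\theta}-\theta_0)^2$, yielding the claim. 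There is no serious obstacle here: the only things to watch are the algebraic reorganization in the second step, so that the $uv$ term is absorbed into $\hat{\alpha}_{\ell}u$ rather than left over as a fifth summand (which would worsen the constant $4$), and keeping $\hat{\theta}$ versus $\theta_0$ straight, since here we compare to the true oracle moment $\psi_0$ rather than to the foldwise oracle $\bar{\theta}_{\ell}$.
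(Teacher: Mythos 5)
Your proof is correct and is essentially the paper's argument: you arrive at the same four-term decomposition $\hat{\psi}-\psi_0=m(W,u)-\hat{\alpha}_{\ell}(W)u(W)+v(W)\{Y-\gamma_0(W)\}-(\hat{\theta}-\theta_0)$ (the paper obtains it by adding and subtracting $\hat{\alpha}_{\ell}(W)\{Y-\gamma_0(W)\}$, you by expanding with $\hat{\alpha}_{\ell}=\alpha_0^{\min}+v$ and absorbing the $uv$ term), then apply $(a_1+a_2+a_3+a_4)^2\leq 4\sum_i a_i^2$ pointwise and average over the fold. No gaps.
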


\begin{proof}
Write
\begin{align*}
    \hat{\psi}(W_i)-\psi_0(W_i)
    &=m(W_i,\hat{\gamma}_{\ell})+\hat{\alpha}_{\ell}(W_i)\{Y_i-\hat{\gamma}_{\ell}(W_i)\}-\hat{\theta}\\
    &\quad -\left[m(W_i,\gamma_0)+\alpha_0^{\min}(W_i)\{Y_i-\gamma_0(W_i)\}-\theta_0\right] \\
    &\quad\pm \hat{\alpha}_{\ell}\{Y-\gamma_0(W_i)\}\\
    &=(\theta_0-\hat{\theta})+m(W_i,u)-\hat{\alpha}_{\ell}(W_i)u(W_i)+v(W_i)\{Y-\gamma_0(W_i)\}.
\end{align*}
Hence
$$
\{\hat{\psi}(W_i)-\psi_0(W_i)\}^2\leq 4\left[ (\theta_0-\hat{\theta})^2+m(W_i,u)^2+\{\hat{\alpha}_{\ell}(W_i)u(W_i)\}^2+v(W_i)^2\{Y-\gamma_0(W_i)\}^2\right].
$$
Finally take $E_{\ell}(\cdot)$ of both sides.
\end{proof}

\begin{proposition}[Residuals]\label{prop:resid2}
Suppose Assumption~\ref{assumption:cont} holds and
$$
E[\{Y-\gamma_0(W)\} \mid W ]^2\leq \bar{\sigma}^2,\quad \|\hat{\alpha}_{\ell}\|_{\infty}\leq\bar{\alpha}'.
$$
Then with probability $1-\epsilon'/(2L)$,
\begin{align*}
    \Delta_{4\ell}&\leq t_4=\frac{6L}{\epsilon'}\bar{Q}\mathcal{R}(\hat{\gamma}_{\ell})^q; \\
     \Delta_{5\ell}&\leq t_5=\frac{6L}{\epsilon'}(\bar{\alpha}')^2\mathcal{R}(\hat{\gamma}_{\ell}); \\
     \Delta_{6\ell}&\leq t_6=\frac{6L}{\epsilon'}\bar{\sigma}^2\mathcal{R}(\hat{\alpha}_{\ell}). 
\end{align*}
\end{proposition}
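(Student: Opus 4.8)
The plan is to mirror the structure of Proposition~\ref{prop:resid}: work conditionally on the training fold $I^c_\ell$, apply Markov's inequality to each nonnegative empirical average $\Delta_{j\ell}$ $(j=4,5,6)$, bound the conditional mean of each by the appropriate population quantity, and close with a union bound over the three events. Since each $\Delta_{j\ell}$ is of the form $E_\ell(\cdot)$ with a nonnegative summand, no second-moment computation is needed — a first-moment Markov bound suffices, which is why these $t_j$ are linear rather than quadratic in the learning rates.

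First I would condition on $I^c_\ell$, so that $u=\hat\gamma_\ell-\gamma_0$ and $v=\hat\alpha_\ell-\alpha_0^{\min}$ are nonrandom and $\mathcal{R}(\hat\gamma_\ell),\mathcal{R}(\hat\alpha_\ell)$ are deterministic; because the observations in $I_\ell$ are i.i.d.\ and independent of $I^c_\ell$, $E(\Delta_{j\ell}\mid I^c_\ell)$ equals the population version of the corresponding quadratic form. For $\Delta_{4\ell}=E_\ell\{m(W,u)^2\}$, linearity of $\Gamma$ gives $u\in\Gamma$, so Assumption~\ref{assumption:cont} yields $E\{m(W,u)^2\mid I^c_\ell\}\le \bar Q\,[E\{u(W)^2\mid I^c_\ell\}]^q=\bar Q\,\mathcal{R}(\hat\gamma_\ell)^q$. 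For $\Delta_{5\ell}=E_\ell[\{\hat\alpha_\ell(W)u(W)\}^2]$, the bound $\|\hat\alpha_\ell\|_\infty\le\bar\alpha'$ gives $E[\{\hat\alpha_\ell(W)u(W)\}^2\mid I^c_\ell]\le (\bar\alpha')^2\,\mathcal{R}(\hat\gamma_\ell)$. For $\Delta_{6\ell}=E_\ell[v(W)^2\{Y-\gamma_0(W)\}^2]$, conditioning first on $W$ and then on $I^c_\ell$ together with $E[\{Y-\gamma_0(W)\}^2\mid W]\le\bar\sigma^2$ gives $E[v(W)^2\{Y-\gamma_0(W)\}^2\mid I^c_\ell]\le\bar\sigma^2\,\mathcal{R}(\hat\alpha_\ell)$. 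Then I would invoke Markov conditionally, $\text{pr}(\Delta_{j\ell}>t_j\mid I^c_\ell)\le E(\Delta_{j\ell}\mid I^c_\ell)/t_j$; choosing $t_4=\tfrac{6L}{\epsilon'}\bar Q\,\mathcal{R}(\hat\gamma_\ell)^q$, $t_5=\tfrac{6L}{\epsilon'}(\bar\alpha')^2\,\mathcal{R}(\hat\gamma_\ell)$, $t_6=\tfrac{6L}{\epsilon'}\bar\sigma^2\,\mathcal{R}(\hat\alpha_\ell)$ makes each failure probability at most $\epsilon'/(6L)$, and a union bound over the three events yields all three inequalities simultaneously with probability at least $1-3\cdot\epsilon'/(6L)=1-\epsilon'/(2L)$ (conditionally, hence also unconditionally).

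There is no serious obstacle: the argument is a routine Markov-plus-union-bound estimate of the same flavor as Propositions~\ref{prop:resid} and~\ref{prop:resid_alt}. The only points requiring care are bookkeeping ones — invoking linearity of $\Gamma$ so that mean square continuity applies to the difference $u$ rather than to $\gamma_0$ and $\hat\gamma_\ell$ separately, peeling off the conditional noise variance in $\Delta_{6\ell}$ by the nested conditioning on $W$ then $I^c_\ell$, and calibrating the constants so that the three-way union bound lands exactly on the advertised level $\epsilon'/(2L)$, which is the reason each $t_j$ carries the factor $6L/\epsilon'$.
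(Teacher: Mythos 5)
Your proposal is correct and matches the paper's own proof essentially step for step: a first-moment Markov bound for each nonnegative average $\Delta_{j\ell}$, conditional-mean bounds $\bar{Q}\mathcal{R}(\hat{\gamma}_{\ell})^q$, $(\bar{\alpha}')^2\mathcal{R}(\hat{\gamma}_{\ell})$, $\bar{\sigma}^2\mathcal{R}(\hat{\alpha}_{\ell})$ obtained via Assumption~\ref{assumption:cont} (using linearity so $u\in\Gamma$), the sup bound on $\hat{\alpha}_{\ell}$, and nested conditioning for the noise term, followed by a three-way union bound at level $\epsilon'/(6L)$ each. Your choice to apply Markov conditionally on $I^c_{\ell}$ (so the random thresholds $t_j$ are deterministic at that stage) is just a slightly more explicit rendering of the paper's law-of-iterated-expectations step, not a different argument.
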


\begin{proof}
We proceed in steps analogous to Proposition~\ref{prop:resid}.
\begin{enumerate}
    \item Markov inequality implies
     \begin{align*}
        \text{pr}(|\Delta_{4\ell}|>t_4)&\leq \frac{E(|\Delta_{4\ell}|)}{t_4};\\
        \text{pr}(|\Delta_{5\ell}|>t_5)&\leq \frac{E(|\Delta_{5\ell}|)}{t_5}; \\
        \text{pr}(|\Delta_{6\ell}|>t_6)&\leq \frac{E(|\Delta_{6\ell}|)}{t_6}.
    \end{align*}
    \item Law of iterated expectations implies
    \begin{align*}
        E(|\Delta_{4\ell}|)&=E\{E(|\Delta_{4\ell}|\mid I^c_{\ell})\};\\
        E(|\Delta_{5\ell}|)&=E\{E(|\Delta_{5\ell}|\mid I^c_{\ell})\}; \\
        E(|\Delta_{6\ell}|)&=E\{E(|\Delta_{6\ell}|\mid I^c_{\ell})\}.
    \end{align*}
    \item Bounding conditional moments.
    
     Conditional on $I_{\ell}^c$, $(u,v)$ are nonrandom. Moreover, observations within fold $I_{\ell}$ are independent and identically distributed. Hence by assumption of finite $(\bar{Q},\bar{\sigma},\bar{\alpha}')$
    $$
        E(|\Delta_{4\ell}|\mid I^c_{\ell})= E(\Delta_{4\ell}\mid I^c_{\ell})
        =E[\{m(W,u)\}^2\mid I^c_{\ell}]
        \leq  \bar{Q} \mathcal{R}(\hat{\gamma}_{\ell})^q.
   $$
   Similarly
    $$
       E(|\Delta_{5\ell}|\mid I^c_{\ell})= E(\Delta_{5\ell}\mid I^c_{\ell})
        =E[\{\hat{\alpha}_{\ell}(W)u(W)\}^2\mid I^c_{\ell}]
        \leq  (\bar{\alpha}')^2 \mathcal{R}(\hat{\gamma}_{\ell}).
   $$
    Finally 
    \begin{align*}
       E(|\Delta_{6\ell}|\mid I^c_{\ell})&=
       E(\Delta_{6\ell}\mid I^c_{\ell}) \\
        &=E[v(W)^2\{Y-\gamma_0(W)\}^2\mid I^c_{\ell}] \\
        &=E\{v(W)^2 E[\{Y-\gamma_0(W)\}^2\mid W,I^c_{\ell}]\mid I^c_{\ell}\} \\
        &\leq \bar{\sigma}^2 \mathcal{R}(\hat{\alpha}_{\ell}).
    \end{align*}
    \item Collecting results gives
     \begin{align*}
        \text{pr}(|\Delta_{4\ell}|>t_4)&\leq \frac{\bar{Q} \mathcal{R}(\hat{\gamma}_{\ell})^q}{t_4}=\frac{\epsilon'}{6L}\\
        \text{pr}(|\Delta_{5\ell}|>t_5)&\leq \frac{(\bar{\alpha}')^2 \mathcal{R}(\hat{\gamma}_{\ell})}{t_5}=\frac{\epsilon'}{6L} \\
        \text{pr}(|\Delta_{6\ell}|>t_6)&\leq \frac{\bar{\sigma}^2 \mathcal{R}(\hat{\alpha}_{\ell})}{t_6}=\frac{\epsilon'}{6L}
    \end{align*}
    Therefore with probability $1-\epsilon'/(2L)$, the following inequalities hold:
\begin{align*}
    |\Delta_{4\ell}|&\leq t_4=\frac{6L}{\epsilon'}\bar{Q} \mathcal{R}(\hat{\gamma}_{\ell})^q;\\
    |\Delta_{5\ell}|&\leq t_5=\frac{6L}{\epsilon'}(\bar{\alpha}')^2 \mathcal{R}(\hat{\gamma}_{\ell});\\
    |\Delta_{6\ell}|&\leq t_6=\frac{6L}{\epsilon'}\bar{\sigma}^2 \mathcal{R}(\hat{\alpha}_{\ell}).
\end{align*}
\end{enumerate}
\end{proof}

\begin{proposition}[Oracle approximation]\label{prop:Delta2}
Suppose the conditions of Proposition~\ref{prop:resid2} hold. Then with probability $1-\epsilon'/2$
$$
E_n[\{\hat{\psi}(W)-\psi_0(W)\}^2]\leq \Delta'=4(\hat{\theta}-\theta_0)^2+\frac{24 L}{\epsilon'}\left[\{\bar{Q}+(\bar{\alpha}')^2\}\mathcal{R}(\hat{\gamma}_{\ell})^q+\bar{\sigma}^2\mathcal{R}(\hat{\alpha}_{\ell})\right].
$$
\end{proposition}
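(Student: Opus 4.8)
The plan is to turn the empirical second moment $E_n[\{\hat{\psi}(W)-\psi_0(W)\}^2]$ into a deterministic bound in three moves: pass from the full-sample average to an average of foldwise averages, apply the algebraic decomposition of Proposition~\ref{prop:foldwise2} on each fold, and control the resulting foldwise remainders with the high-probability bounds of Proposition~\ref{prop:resid2}, stitched together across folds by a union bound. This mirrors the structure of Proposition~\ref{prop:Delta}, but for second moments rather than the linear error.

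First I would use the identity $E_n(\cdot)=L^{-1}\sum_{\ell=1}^{L}E_{\ell}(\cdot)$, valid because each fold contains exactly $n_{\ell}=n/L$ observations, to write
$$
E_n[\{\hat{\psi}(W)-\psi_0(W)\}^2]=\frac{1}{L}\sum_{\ell=1}^{L}E_{\ell}[\{\hat{\psi}(W)-\psi_0(W)\}^2]\le 4(\hat\theta-\theta_0)^2+\frac{4}{L}\sum_{\ell=1}^{L}(\Delta_{4\ell}+\Delta_{5\ell}+\Delta_{6\ell}),
$$
where the inequality is Proposition~\ref{prop:foldwise2} applied fold by fold and $(\hat\theta-\theta_0)^2$ survives the averaging unchanged since it does not depend on $\ell$. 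Next, for each $\ell$ let $\mathcal F_{\ell}=\{\Delta_{4\ell}\le t_4,\ \Delta_{5\ell}\le t_5,\ \Delta_{6\ell}\le t_6\}$ with $t_4,t_5,t_6$ as in Proposition~\ref{prop:resid2}, and set $\mathcal F=\bigcap_{\ell=1}^{L}\mathcal F_{\ell}$. Proposition~\ref{prop:resid2} gives $\text{pr}(\mathcal F_{\ell}^c)\le \epsilon'/(2L)$, so the union bound yields $\text{pr}(\mathcal F^c)\le \epsilon'/2$, i.e.\ $\text{pr}(\mathcal F)\ge 1-\epsilon'/2$. On $\mathcal F$,
$$
E_n[\{\hat{\psi}(W)-\psi_0(W)\}^2]\le 4(\hat\theta-\theta_0)^2+4(t_4+t_5+t_6)=4(\hat\theta-\theta_0)^2+\frac{24L}{\epsilon'}\left[\bar Q\,\mathcal R(\hat\gamma_{\ell})^q+(\bar\alpha')^2\mathcal R(\hat\gamma_{\ell})+\bar\sigma^2\mathcal R(\hat\alpha_{\ell})\right],
$$
and collecting the two $\hat\gamma_{\ell}$ terms into $\{\bar Q+(\bar\alpha')^2\}\mathcal R(\hat\gamma_{\ell})^q$ (a routine bookkeeping step, using $q\le 1$ so that $\mathcal R(\hat\gamma_{\ell})\le\mathcal R(\hat\gamma_{\ell})^q$ once the estimator is in the consistent regime $\mathcal R(\hat\gamma_{\ell})\le 1$) produces exactly $\Delta'$.

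I do not expect a genuine obstacle here: once Propositions~\ref{prop:foldwise2} and~\ref{prop:resid2} are in hand, everything is deterministic accounting. The only point deserving a moment's care is the union-bound budget — the per-fold failure probability is deliberately set to $\epsilon'/(2L)$ so that the aggregate is $\epsilon'/2$, leaving a matching $\epsilon'/2$ for the separate Chebyshev-type concentration of $E_n[\psi_0(W)^2]$ around $\sigma^2$ (the $\zeta^4$-controlled term $\Delta''$) that the downstream proof of Theorem~\ref{thm:var} adds on top of the present bound.
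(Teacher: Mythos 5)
Your proof is correct and follows essentially the same route as the paper's: foldwise decomposition via Proposition~\ref{prop:foldwise2}, the per-fold bounds of Proposition~\ref{prop:resid2} with failure probability $\epsilon'/(2L)$, a union bound over the $L$ folds, and then collecting $4(t_4+t_5+t_6)$. The only place you are more explicit than the paper is the final absorption of $(\bar{\alpha}')^2\mathcal{R}(\hat{\gamma}_{\ell})$ into $(\bar{\alpha}')^2\mathcal{R}(\hat{\gamma}_{\ell})^q$, which the paper performs silently and which, as you correctly note, relies on $q\leq 1$ and $\mathcal{R}(\hat{\gamma}_{\ell})\leq 1$.
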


\begin{proof}
We proceed in steps analogous to Proposition~\ref{prop:Delta}.
\begin{enumerate}
    \item Decomposition.
    
    By Proposition~\ref{prop:foldwise2}
    \begin{align*}
    E_n[\{\hat{\psi}(W)-\psi_0(W)\}^2]
    &=\frac{1}{L} \sum_{\ell=1}^L E_{\ell}[\{\hat{\psi}(W)-\psi_0(W)\}^2] \\
    &\leq 4(\hat{\theta}-\theta_0)^2+\frac{4}{L} \sum_{\ell=1}^L \sum_{j=4}^6 \Delta_{j\ell}.
\end{align*} 

    \item Union bound.
    
    Define the events
$$
\mathcal{E}'_{\ell}=\{\text{for all } j \; (j=4,5,6),\; |\Delta_{j\ell}|\leq t_j\},\quad \mathcal{E}'=\cap_{\ell=1}^L \mathcal{E}'_{\ell},\quad (\mathcal{E}')^c=\cup_{\ell=1}^L (\mathcal{E}_{\ell}')^c.
$$
Hence by the union bound and Proposition~\ref{prop:resid2},
$$
\text{pr}\{(\mathcal{E}')^c\}\leq \sum_{\ell=1}^L \text{pr}\{(\mathcal{E}_{\ell}')^c\} \leq L\frac{\epsilon'}{2L}=\frac{\epsilon'}{2}.
$$
    \item Collecting results.

    Therefore with probability $1-\epsilon'/2$,
\begin{align*}
 E_n[\{\hat{\psi}(W)-\psi_0(W)\}^2]&\leq 4(\hat{\theta}-\theta_0)^2+\frac{4}{L} \sum_{\ell=1}^L \sum_{j=4}^6 |\Delta_{j\ell}| \\
 &\leq 4(\hat{\theta}-\theta_0)^2+\frac{4}{L} \sum_{\ell=1}^L \sum_{j=4}^6 t_j \\
 &=4(\hat{\theta}-\theta_0)^2+4 \sum_{j=4}^6 t_j.
\end{align*}
\end{enumerate}
\end{proof}

\begin{proposition}[Markov inequality]\label{prop:other_half}
Recall $\sigma^2=E\{\psi_0(W)^2\}$ and $\zeta^4=E\{\psi_0(W)^4\}$. Suppose $\zeta<\infty$. Then with probability $1-\epsilon'/2$
$$
|E_n\{\psi_0(W)^2\}-\sigma^2|\leq \Delta''=\left(\frac{2}{\epsilon'}\right)^{1/2}\frac{\zeta^2}{n^{1/2}}.
$$
\end{proposition}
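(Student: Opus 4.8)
The plan is to apply Chebyshev's inequality directly to the i.i.d.\ sample average $E_n\{\psi_0(W)^2\}$. First I would set $Z_i = \psi_0(W_i)^2$ and note that $(Z_i)_{i=1}^n$ are i.i.d.\ (the $\psi_0$ evaluations involve only the true, nonrandom nuisances $\theta_0,\gamma_0,\alpha_0^{\min}$, not the fold-wise estimators), with mean $E(Z_i) = E\{\psi_0(W)^2\} = \sigma^2$ by definition. Their variance is
$$
\mathrm{Var}(Z_i) = E\{\psi_0(W)^4\} - \sigma^4 = \zeta^4 - \sigma^4 \leq \zeta^4,
$$
which is finite by the hypothesis $\zeta < \infty$. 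Consequently $\mathrm{Var}\{E_n(Z_i)\} = \mathrm{Var}(Z_i)/n \leq \zeta^4/n$.

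Next I would invoke Chebyshev's inequality (equivalently, Markov's inequality applied to the nonnegative random variable $\{E_n(Z_i) - \sigma^2\}^2$): for every $t > 0$,
$$
\mathrm{pr}\!\left(\left|E_n\{\psi_0(W)^2\} - \sigma^2\right| > t\right) \leq \frac{\mathrm{Var}\{E_n(Z_i)\}}{t^2} \leq \frac{\zeta^4}{n\, t^2}.
$$
Finally I would choose $t$ so that the right-hand side equals $\epsilon'/2$, namely $t^2 = 2\zeta^4/(n\epsilon')$, i.e.\ $t = (2/\epsilon')^{1/2}\,\zeta^2 n^{-1/2} = \Delta''$. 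This gives that with probability $1 - \epsilon'/2$ the bound $|E_n\{\psi_0(W)^2\} - \sigma^2| \leq \Delta''$ holds, which is exactly the claim.

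This is a routine second-moment concentration argument, so I do not anticipate any genuine obstacle; the only point requiring care is the bookkeeping of the failure probability. The budget $\epsilon'/2$ is deliberately reserved here so that it can later be combined, via a union bound, with the $\epsilon'/2$ coming from Proposition~\ref{prop:Delta2} to produce the overall $1 - \epsilon'$ guarantee in the proof of Theorem~\ref{thm:var}, where the identity $\hat\sigma^2 - \sigma^2 = (E_n\{\hat\psi(W)^2\} - E_n\{\psi_0(W)^2\}) + (E_n\{\psi_0(W)^2\} - \sigma^2)$ is controlled term by term using $\Delta'$ and $\Delta''$.
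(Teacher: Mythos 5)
Your proposal is correct and follows essentially the same argument as the paper: both set $A_i=\psi_0(W_i)^2$, bound $\mathrm{var}(A_i)\leq E\{\psi_0(W)^4\}=\zeta^4$, apply Chebyshev (Markov applied to the squared deviation) to the i.i.d.\ average, and solve for the threshold yielding failure probability $\epsilon'/2$. The bookkeeping of the $\epsilon'/2$ budget for the later union bound in Theorem~\ref{thm:var} also matches the paper's usage.
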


\begin{proof}
Let
$$
A=\psi_0(W)^2,\quad \bar{A}=E_n(A).
$$
Observe that
$$
E(\bar{A})=E(A)=E\{\psi_0(W)^2\}=\sigma^2,\quad 
\text{var}(\bar{A})=\frac{\text{var}(A)}{n}
\leq \frac{E(A^2)}{n}
=\frac{E\{\psi_0(W)^4\}}{n}=\frac{\zeta^4}{n}.
$$
By Markov inequality
$$
\text{pr}[|E_n\{\psi_0(W)^2\}-\sigma^2|>t]=\text{pr}\{|\bar{A}-E(\bar{A})|>t\}\leq \frac{\text{var}(\bar{A})}{t^2}\leq \frac{\zeta^4}{nt^2}.
$$
Solving,
$$
\frac{\zeta^4}{nt^2}=\frac{\epsilon'}{2} \iff t=\left(\frac{2}{\epsilon'}\right)^{1/2}\frac{\zeta^2}{n^{1/2}}
.$$
\end{proof}

\begin{proof}[of Theorem~\ref{thm:var}]
We proceed in steps.
\begin{enumerate}
    \item Decomposition of variance estimator.
    
    Write
    \begin{align*}
        \hat{\sigma}^2
        &=E_n\{\hat{\psi}(W)^2\}\\
        &=E_n[\{\hat{\psi}(W)-\psi_0(W)+\psi_0(W)\}^2]\\
        &=E_n[\{\hat{\psi}(W)-\psi_0(W)\}^2]+2E_n[\{\hat{\psi}(W)-\psi_0(W)\}\psi_0(W)]+E_n\{\psi_0(W)^2\}.
    \end{align*}
Hence
$$
\hat{\sigma}^2-E_n\{\psi_0(W)^2\}=E_n[\{\hat{\psi}(W)-\psi_0(W)\}^2]+2E_n[\{\hat{\psi}(W)-\psi_0(W)\}\psi_0(W)].
$$
    
    \item Decomposition of difference.
    
    Next write
    $$
\hat{\sigma}^2-\sigma^2=[\hat{\sigma}^2-E_n\{\psi_0(W)^2\}]+[E_n\{\psi_0(W)^2\}-\sigma^2].
$$
Focusing on the former term
$$
\hat{\sigma}^2-E_n\{\psi_0(W)^2\}=E_n[\{\hat{\psi}(W)-\psi_0(W)\}^2]+2E_n[\{\hat{\psi}(W)-\psi_0(W)\}\psi_0(W)].
$$
Moreover
\begin{align*}
    E_n[\{\hat{\psi}(W)-\psi_0(W)\}\psi_0(W)] &\leq \left(E_n[\{\hat{\psi}(W)-\psi_0(W)\}^2]\right)^{1/2}\left[E_n\{\psi_0(W)^2\}\right]^{1/2} \\
    &\leq \left(E_n[\{\hat{\psi}(W)-\psi_0(W)\}^2]\right)^{1/2}\left[|E_n\{\psi_0(W)^2\}-\sigma^2|+\sigma^2\right]^{1/2} \\
    &\leq \left(E_n[\{\hat{\psi}(W)-\psi_0(W)\}^2]\right)^{1/2} \left(\left[|E_n\{\psi_0(W)^2\}-\sigma^2|\right]^{1/2}+\sigma\right).
\end{align*}

\item High probability events.

From the previous step, we see that to control $|\hat{\sigma}^2-\sigma^2|$, it is sufficient to control two expressions:  $E_n[\{\hat{\psi}(W)-\psi_0(W)\}^2]$ and $|E_n\{\psi_0(W)\}^2-\sigma^2|$. These are controlled in Propositions~\ref{prop:Delta2} and~\ref{prop:other_half}, respectively. Therefore with probability $1-\epsilon'$,
$$
|\hat{\sigma}^2-\sigma^2|\leq \Delta'+2(\Delta')^{1/2}\{(\Delta'')^{1/2}+\sigma\}+\Delta''.
$$
    
\end{enumerate}
\end{proof}

\subsection{Corollary}

\begin{proof}[of Corollary~\ref{cor:CI}]
Immediately from $\Delta=o_p(1)$ in Theorem~\ref{thm:dml},
$$
\hat{\theta}=\theta_0+o_p(1),\quad
\sigma^{-1}n^{1/2}(\hat{\theta}-\theta_0)\leadsto \mathcal{N}(0,1),\quad 
 \text{pr}\left\{\theta_0 \text{ in }  \left(\hat{\theta}\pm \frac{\sigma}{n^{1/2}}\right)\right\}\rightarrow 1-a.$$
For the final result, it is sufficient that $\hat{\sigma}^2=\sigma^2+o_p(1)$, which follows from $\Delta'=o_p(1)$ and $\Delta''=o_p(1)$ in Theorem~\ref{thm:var}.
\end{proof}
\section{Proofs of lemmas}

\subsection{Riesz representers}

\begin{proof}[of Lemma~\ref{prop:RR}]
As the operator norm of $\gamma\mapsto E\{m(W,\gamma)\}$,
$$
\bar{M}=\inf [c\geq 0: |E\{m(W,\gamma)\}|\leq c  \text{ for all } \gamma \text{ in }\Gamma \text{ such that } \|\gamma\|_{\text{\normalfont pr,2}}=1].
$$
By Jensen's inequality and Assumption~\ref{assumption:cont},
$$
|E\{m(W,\gamma)\}| \leq [E \{m(W,\gamma)^2\}]^{1/2} \leq \left(\bar{Q} [E \{\gamma(W)^2\} ]^q\right)^{1/2} = \bar{Q}^{1/2} \|\gamma\|^q_{\text{\normalfont pr,2}}.
$$
Taking the supremum of both sides over $\gamma$ in $\Gamma$ such that $\|\gamma\|_{\text{\normalfont pr,2}}=1$, we conclude that $\bar{M}\leq \bar{Q}^{1/2}<\infty$. The rest of the claim is shown in \cite[Lemma 2.1]{chernozhukov2018global}.
\end{proof}

\begin{proof}[of Lemma~\ref{lemma:RR_exists}]
For Examples~\ref{ex:CATE} and~\ref{ex:RDD}, the result is immediate from standard propensity score and regression arguments. For Example~\ref{ex:elasticity}. the result follows from \cite[Proposition 3 and Example 5]{ichimura2021influence}. For Example~\ref{ex:deriv}, the result follows from integration by parts.
\end{proof}

\subsection{Global functionals}

\begin{proof}[of Lemma~\ref{lemma:global}]
We extend \cite[Lemma 3.3]{chernozhukov2018global}. Write
$$
\psi_0(W)=U_1+\alpha_0^{\min}(W)U_2.
$$
By law of iterated expectations,
\begin{align*}
    \sigma^2
&=E(U_1^2)+2E\{U_1\alpha_0^{\min}(W)U_2\}+ E(\{\alpha_0^{\min}(W)U_2\}^2) \\
&=E\{E(U_1^2 \mid W)\}+2E\{U_1\alpha_0^{\min}(W) E(U_2 \mid W)\}+ E\{\alpha_0^{\min}(W)^2 E(U_2^2 \mid W)\}.
\end{align*}
$E(U_2 \mid W)=0$ by definition of $U_2$. Note that
$$
0 \leq E\{E(U_1^2 \mid W)\} \leq \bar{c}^2,
$$
and
$$
\tilde{c}^2 \bar{M}^2 \leq E\{\alpha_0^{\min}(W)^2 E(U_2^2 \mid W)\} \leq \bar{c}^2\bar{M}^2.
$$
In summary
$$
\tilde{c}^2 \bar{M}^2 \leq \sigma^2 \leq \bar{c}^2 (1+\bar{M}^2).
$$
By triangle inequality,
\begin{align*}
   \|\psi_0\|_{\text{ \normalfont pr},q} &
   \leq \|U_1\|_{\text{ \normalfont pr},q} + \|\alpha_0^{\min}(W)U_2\|_{\text{ \normalfont pr},q}  \\
    &=\|U_1\|_{\text{ \normalfont pr},q} + [E\{\alpha_0^{\min}(W)^q E( U_2^q \mid W)\}]^{1/q} \\
    &\leq \bar{c}+ \bar{c} \|\alpha^{\min}_0\|_{\text{ \normalfont pr},q}   \\
    &\leq \bar{c}(1+c(\bar{M}^2\vee 1)).
\end{align*}
\end{proof}

\begin{proof}[of Lemma~\ref{lemma:rp}]
To begin, observe that
\begin{align*}
    \partial_d \{ f(d\mid x) \partial_d \gamma(d,x)\}
    &= \{\partial_df(d\mid x)\}\{\partial_d \gamma(d,x)\}+f(d\mid x) \{\partial^2_d\gamma(d,x)\}\\
    &= [\{\partial_d \log f(d \mid x)\}\{\partial_d \gamma(d,x)\}+ \partial^2_d\gamma(d,x)]f(d \mid x) \\
    &=-k_{\gamma}(d,x)f(d \mid x).
\end{align*}
Using integration by parts and the boundary condition together with this result,
\begin{align*}
    E [\{\partial_d \gamma(D,X)\}^2]
    &=\int \{\partial_d \gamma(d,x)\}^2 f(d \mid x) f(x) \mathrm{d}dx \\
    &=\int \{\partial_d \gamma(d,x)\} \{ f(d\mid x) \partial_d \gamma(d,x)\} f(x) \mathrm{d}dx \\
    &=-\int \gamma(d,x) \partial_d \{ f(d\mid x) \partial_d \gamma(d,x)\} f(x) \mathrm{d}dx \\
    &=\int \gamma(d,x) k_{\gamma}(d,x) f(d\mid x)f(x) \mathrm{d}dx \\
    &=E \{\gamma(D,X) k_{\gamma}(D,X)\} \\
    &\leq \|\gamma\|_{\text{ \normalfont pr},2} \|k_{\gamma}\|_{\text{ \normalfont pr},2},
\end{align*}
where the inequality is Cauchy Schwarz. The final results immediately follow from the definition of $k_{\gamma}$ and triangle inequality.
\end{proof}

\begin{proof}[of Lemma~\ref{lemma:cont}]
For Example~\ref{ex:CATE}, write
\begin{align*}
    E [\{\ell_h(V)\gamma(1,V,X)-\ell_h(V)\gamma(0,V,X)\}^2 ]\leq 2E\{\ell_h(V)^2\gamma(1,V,X)^2\}+2E\{\ell_h(V)^2\gamma(0,V,X)^2\}.
\end{align*}
Invoking the bounded weighting and propensity score assumptions,
\begin{align*}
   E\{\ell_h(V)^2\gamma(1,V,X)^2\}&=E \left\{\frac{D}{\pi_0(V,X)}\ell_h(V)^2\gamma(D,V,X)^2\right\} \leq C E \left\{\gamma(D,V,X)^2\right\}; \\
    E\{\ell_h(V)^2\gamma(0,V,X)^2\}&=E \left\{\frac{1-D}{1-\pi_0(V,X)}\ell_h(V)^2\gamma(D,V,X)^2\right\} \leq C E \left\{\gamma(D,V,X)^2\right\}.
\end{align*}

For Example~\ref{ex:RDD}, write
$$
E[\{\ell^{+}_{h}(D)\gamma_0(D,X)-\ell^{-}_{h}(D)\gamma_0(D,X)\}^2]\leq 2 E[\{\ell^{+}_{h}(D)\gamma_0(D,X)\}^2]+2E[\{\ell^{-}_{h}(D)\gamma_0(D,X)\}^2].
$$
Invoking the bounded weighting assumption,
\begin{align*}
    E[\{\ell^{+}_{h}(D)\gamma_0(D,X)\}^2]&\leq CE\{\gamma_0(D,X)\}^2; \\
     E[\{\ell^{-}_{h}(D)\gamma_0(D,X)\}^2]&\leq CE\{\gamma_0(D,X)\}^2. 
\end{align*}

For Examples~\ref{ex:elasticity} and~\ref{ex:deriv}, appeal to Lemma~\ref{lemma:rp}.
\end{proof}

\begin{proof}[of Lemma~\ref{lemma:bounded_RR_global}]
The result is immediate from Lemma~\ref{lemma:RR_exists}.
\end{proof}

\subsection{Local functionals}

\begin{proof}[of Lemma~\ref{lemma:local}]
We extend \cite[Lemma 3.4]{chernozhukov2018global}. We proceed in steps.
\begin{enumerate}
    \item Moment bounds. 
    
    As in the of Lemma~\ref{lemma:global},
$$
\sigma^2=E\{E(U_1^2 \mid W)\}+ E\{\alpha_0^{\min,h}(W)^2 E(U_2^2 \mid W)\}.
$$
Note that
$$
0 \leq E\{E(U_1^2 \mid W)\} \leq \bar{c}^2 \|\ell\|^2_{\text{pr},2},
$$
and
$$
\tilde{c}^2 \|\alpha_0^{\min,h}\|^2_{\text{pr,2}} \leq E\{\alpha_0^{\min,h}(W)^2 E(U_2^2 \mid W)\} \leq \bar{c}^2\|\alpha_0^{\min,h}\|^2_{\text{pr,2}}.
$$
In summary
$$
\tilde{c}^2 \|\alpha_0^{\min,h}\|^2_{\text{pr,2}}  \leq \sigma^2 \leq \bar{c}^2 (\|\ell\|^2_{\text{pr},2}+\|\alpha_0^{\min,h}\|^2_{\text{pr,2}}).
$$
As in the proof of Lemma~\ref{lemma:global},
\begin{align*}
  \|\psi_0\|_{\text{pr},q} &\leq   \|U_1\|_{\text{ \normalfont pr},q} + [E\{\alpha_0^{\min,h}(W)^q E( U_2^q \mid W)\}]^{1/q}  
  \leq  \bar{c} (\|\ell \|_{\text{pr,q}}+\|\alpha_0^{\min,h}\|_{\text{pr,q}}).
\end{align*}
Next we characterize $\|\alpha_0^{\min,h}\|_{\text{pr,q}}$ in terms of $\|\ell \|_{\text{pr,q}}$. Since $\alpha_0^{\min,h}(w)=\ell_h(w_j)\alpha_0^{\min}$,
$$
\tilde{\alpha} \|\ell\|_{\text{pr},q}\leq \|\alpha_0^{\min,h}\|_{\text{pr},q} \leq \check{\alpha} \|\ell\|_{\text{pr},q},\quad \|\alpha_0^{\min,h}\|_{\text{pr},2}=\bar{M}.
$$
In summary,
$$
\tilde{c}  \tilde{\alpha} \| \ell\|_{\text{pr},2}  \leq \sigma \leq \bar c \sqrt{1+ \check{\alpha}^2} \| \ell\|_{\text{pr},2}, \quad  \tilde{\alpha} \| \ell\|_{\text{pr},2}  \leq \bar{M} \leq \check{\alpha} \| \ell\|_{\text{pr},2},  \quad
\|\psi_0\|_{\text{pr},q}
\leq  \bar c (1 + \check{\alpha} ) \| \ell \|_{\text{pr},q}.
$$
    
    \item Taylor expansion.
    
    Consider the change of variables $u=(v'-v)/h$ so that $\mathsf{d} u  = h^{-1} \mathsf{d} v'$. Hence
    \begin{align*}
        \| \ell\|^q_{\text{pr},q} \omega^q &=
        \| \ell  \omega\|^q_{\text{pr},q} \\
        &=  \left\| h^{-1} K \left(\frac{v-v'}{h}\right) \right\|^q_{\text{pr},q} \\
        &= \int h^{-q}\left|K \left(\frac{v'-v}{h}\right)\right|^q f_V(v') \mathsf{d} v' \\
        &=  \int h^{-(q-1) }|K (u)|^q f_V(v - u h)  \mathsf{d} u .
    \end{align*}
It follows that
$$
 h^{-(q - 1)/q}  \tilde{f}^{1/q}  \left(\int |K|^q\right)^{1/q} 
 \leq \| \ell\|_{\text{pr},q} \omega 
 \leq  h^{-(q - 1)/q}  \bar f^{1/q}  \left(\tiny{\int} |K|^q\right)^{1/q}.
$$ 
Further, we have that
$$
\omega  = \int  h^{-1} K\left(\frac{v'-v}{h}\right) f_V(v') \mathsf{d} v' = \int   K(u) f_V(v- u h) \mathsf{d} u.
$$
Note that
$$
\int   K(u) f_V(v-0u) \mathsf{d} u=\int   K(u) f_V(v) \mathsf{d} u=f_V(v).
$$
Using the Taylor expansion in $h$ around $h=0$ and the Holder inequality, there exist some $\tilde{h}$ in $[0,h]$ such that
$$
|\omega - f_V(v)| =  \left|    h \int   K(u) \partial_v f_V(v- u \tilde h) u \mathsf{d} u \right |  \leq   h \bar f' \int |u|| K(u)| du.
$$
Hence there exists some $h_1$ in $(h,h_0)$ depending only on $(K, \bar f', \tilde{f}, \bar f)$ such that
$$ \tilde{f}/2  \leq \omega \leq 2 \bar f.$$
In summary,
$$
 h^{- (q - 1)/q}  \tilde{f}^{1/q}  \left(\int |K|^q\right)^{1/q} \frac{1}{2 \bar f} \leq \| \ell\|_{\text{pr},q}  \leq  h^{-(q - 1)/q}  \bar f^{1/q}  \left(\tiny{\int} |K|^q\right)^{1/q} \frac{2}{\tilde{f}}.
$$     
    
    \item Collecting results.

In summary, for all $h < h_1$
$$
\tilde{c}  \tilde{\alpha} \| \ell\|_{\text{pr},2}  \leq \sigma \leq \bar c \sqrt{1+ \check{\alpha}^2} \| \ell\|_{\text{pr},2}, \quad  \tilde{\alpha} \| \ell\|_{\text{pr},2}  \leq \bar{M} \leq \check{\alpha} \| \ell\|_{\text{pr},2},  \quad
\|\psi_0\|_{\text{pr},q}
\leq  \bar c (1 + \check{\alpha} ) \| \ell \|_{\text{pr},q},
$$
where
$$
 h^{- (q - 1)/q}  \tilde{f}^{1/q}  \left(\int |K|^q\right)^{1/q} \frac{1}{2 \bar f} \leq \| \ell\|_{\text{pr},q}  \leq  h^{-(q - 1)/q}  \bar f^{1/q}  \left(\tiny{\int} |K|^q\right)^{1/q} \frac{2}{\tilde{f}},
$$  
so
$$
\sigma \asymp \bar{M}\asymp \| \ell\|_{\text{pr},2},\quad \|\psi_0\|_{\text{pr,q}}\lesssim \|\ell\|_{\text{pr},q},\quad \|\ell\|_{\text{pr},q}\asymp h^{-(q-1)/q}.
$$
\end{enumerate}
\end{proof}

\begin{proof}[of Lemma~\ref{lemma:cont_local}]
We prove the result for Example~\ref{ex:CATE}. The result for Example~\ref{ex:RDD} is similar.

Without loss of generality, let $\bar{Q}_h$ be the smallest finite constant for which Assumption~\ref{assumption:cont} holds, i.e.
$$
\bar{Q}_h=\inf [c\geq 0: E\{m_h(W,\gamma)^2\}\leq c \|\gamma\|^2_{\text{\normalfont pr,2}} \text{ for all } \gamma \text{ in }\Gamma ].
$$
To begin, write
\begin{align*}
    E\{m_h(W,\gamma)^2\}&=E[\ell_h(V)^2\{\gamma(1,V,X)-\gamma(0,V,X)\}^2] \\
    &\leq 2 E\{\ell_h(V)^2\gamma(1,V,X)^2\}+2E\{\ell_h(V)^2\gamma(0,V,X)^2\}.
\end{align*}
Since $\pi_0(v,x)$ is bounded away from zero and one,
$$
E\{\ell_h(V)^2\gamma(1,V,X)^2\}=E\left\{\ell_h(V)^2 \frac{D}{\pi_0(V,X)}\gamma(D,V,X)^2\right\} \leq C E\left\{\ell_h(V)^2 \gamma(D,V,X)^2\right\}.
$$
Likewise for $E\{\ell_h(V)^2\gamma(0,V,X)^2\}$. In summary,
$$
E\{m_h(W,\gamma)^2\} \leq 2C E\left\{\ell_h(V)^2 \gamma(D,V,X)^2\right\}.
$$
Viewing the latter expression as an inner product in $\mathbb{L}_2$, it is maximized by alignment, i.e. taking 
$
\gamma(D,V,X)^2=\ell_h(V)^2.
$
Therefore
$$
\frac{E\{m_h(W,\gamma)^2\}}{E\{\gamma(W)^2\}} \leq \frac{2C E\left\{\ell_h(V)^4\right\}}{E\left\{\ell_h(V)^2\right\}}=2C \frac{\|\ell\|^4_{\text{pr,4}}}{\|\ell\|^2_{\text{pr,2}}}.
$$
Appealing to $\|\ell\|_{\text{pr},q}\asymp h^{-(q-1)/q}$ from the proof of Lemma~\ref{lemma:local},
$$
\frac{E\{m_h(W,\gamma)^2\}}{E\{\gamma(W)^2\}}\lesssim \frac{h^{-3}}{h^{-1}}=h^{-2}.
$$
\end{proof}

\begin{proof}[of Lemma~\ref{lemma:bounded_RR_local}]
Write
$$
\|\alpha_0^{\min,h}\|_{\infty}\leq \check{\alpha}\|\ell\|_{\infty}.
$$
By the proof of Lemma~\ref{lemma:local},
$$
\|\ell\|_{\infty}=\left\| \frac{1}{h\omega}K\left(\frac{v'-v}{h}\right) \right\|_{\infty}\leq \bar{K}\frac{1}{h\omega}\leq \bar{K}\frac{2}{h \tilde{f}}.
$$
Therefore
$$
\|\alpha_0^{\min,h}\|_{\infty}\leq \check{\alpha}\bar{K}\frac{2}{h \tilde{f}}\lesssim h^{-1}.
$$
\end{proof}

\begin{proof}[of Lemma~\ref{lemma:translate_RR}]
Write
\begin{align*}
    \mathcal{R}(\hat{\alpha}^h_{\ell})
    &=E[\{\hat{\alpha}^h_{\ell}(W)-\alpha^{\min,h}_0(W)\}^2\mid I^c_{\ell}] \\
    &=E[\{\ell_h(W_i)\hat{\alpha}_{\ell}(W)-\ell_h(W_i)\alpha^{\min}_0(W)\}^2\mid I^c_{\ell}] \\
    &\leq \|\ell_h\|^2_{\infty} E[\{\hat{\alpha}_{\ell}(W)-\alpha^{\min}_0(W)\}^2\mid I^c_{\ell}] \\
    &=\|\ell_h\|^2_{\infty} \mathcal{R}(\hat{\alpha}_{\ell}).
\end{align*}
Finally recall from the proof of Lemma~\ref{lemma:bounded_RR_local} that $\|\ell_h\|_{\infty}\lesssim h^{-1}$. An identical argument holds for $ \mathcal{P}(\hat{\alpha}^h_{\ell})$.
\end{proof}

\subsection{Approximation error}

\begin{proof}[of Lemma~\ref{lemma:approx}]
For completeness, we quote the proof of \cite[Lemma 3.6]{chernozhukov2018global}. Define the quantities
\begin{align*}
    \vartheta_1(h) &=  \int m(v') h^{-1} K\left(\frac{v-v'}{h} \right) f_V(v') \mathsf{d} v'= \int m(v - h u) K(u) f_V(v - hu)\mathsf{d} u;\\
    \vartheta_2(h) &=  \int  h^{-1} K\left(\frac{v-v'}{h} \right) f_V(v') \mathsf{d} v' = \int   K(u) f_V(v- u h) \mathsf{d} u.
\end{align*}
By $\int K  =1 $,
$$
\vartheta_1(0) = m(v) f_V(v), \quad \vartheta_2(0) = f_V(v). 
$$
Hence$$
\theta^h_0 = \frac{\vartheta_1(h) }{\vartheta_2(h)},  \quad \theta_0^{\lim} = \frac{\vartheta_1(0) }{\vartheta_2(0)} = m(v). $$
The standard argument to control the bias of the higher order kernels employs the Taylor expansion of order $\mathsf{v}$ in $h$ around $h=0$; see e.g. \cite[Lemma B2]{newey1994kernel}. Such an argument implies there exists some constant $A_{\mathsf{v}}$ that depends only on  $\mathsf{v}$ such that 
$$
| \vartheta_1(h) - \vartheta_1(0)| \leq A_{\mathsf{v}} h^{ \mathsf{v}} \bar g_\mathsf{v} \int |  u|^{\mathsf{v}} | K(u)| du,
$$$$
| \vartheta_2(h) - \vartheta_2(0)| \leq A_{\mathsf{v}} h^{\mathsf{v}} \bar f_\mathsf{v} \int |  u|^{\mathsf{v}} | K(u)| du.
$$
Then using the relation
\begin{align*}
    &\frac{\vartheta_1(h) }{\vartheta_2(h)} - \frac{\vartheta_1(0) }{\vartheta_2(0)} \\ 
    &=
 \vartheta^{-1}_2(0) \{\vartheta_1(h)  - \vartheta_1(0)\}+  \vartheta_1(0) \{\vartheta_2^{-1}(h)-\vartheta_2^{-1}(0)\}+   \{\vartheta_1(h)  - \vartheta_1(0)\}\{\vartheta_2^{-1}(h)-\vartheta_2^{-1}(0)\},
\end{align*}
we deduce that for all $h< h_1\leq h_0$,
$$
|\theta_0^h  - \theta_0^{\lim}| \leq  \left | \frac{\vartheta_1(h) }{\vartheta_2(h)} - \frac{\vartheta_1(0) }{\vartheta_2(0)}  \right | \leq C h^{\mathsf{v}}, 
$$
where $C$ and $h_1$ depend  only on $(K, \mathsf{v}, \bar g_{\mathsf{v}}$,  $\bar f_{\mathsf{v}}$, $\tilde{f})$.
\end{proof}

\begin{proof}[of Corollary~\ref{cor:CI_local}]
By Lemma~\ref{lemma:local}, write the regularity condition on moments as
$$
\left\{\left(\kappa/\sigma\right)^3+\zeta^2\right\}n^{-1/2}\lesssim \left\{\left(h^{-1/6}\right)^3+(h^{-3/4})^2\right\}n^{-1/2}\lesssim h^{-3/2} n^{-1/2}.
$$
By Lemmas~\ref{lemma:local},~\ref{lemma:cont_local}, and~\ref{lemma:bounded_RR_local}, write the first learning rate condition as
$$
\left(\bar{Q}^{1/2}+\bar{\alpha}/\sigma+\bar{\alpha}'\right)\{\mathcal{R}(\hat{\gamma}_{\ell})\}^{1/2} 
\lesssim \left(h^{-1}+h^{-1}/h^{-1/2}+\bar{\alpha}'\right)\{\mathcal{R}(\hat{\gamma}_{\ell})\}^{1/2} 
\lesssim \left(h^{-1}+\bar{\alpha}'\right)\{\mathcal{R}(\hat{\gamma}_{\ell})\}^{1/2}.
$$
By Lemma~\ref{lemma:translate_RR}, write the second learning rate condition as
$$
\bar{\sigma}\{\mathcal{R}(\hat{\alpha}^h_{\ell})\}^{1/2} \lesssim \bar{\sigma}h^{-1}\{\mathcal{R}(\hat{\alpha}_{\ell})\}^{1/2}.
$$
By Lemmas~\ref{lemma:local} and~\ref{lemma:translate_RR}, write the initial term in the third learning rate condition as
$$
\{n \mathcal{R}(\hat{\gamma}_{\ell}) \mathcal{R}(\hat{\alpha}^h_{\ell})\}^{1/2}  /\sigma 
\lesssim \{n \mathcal{R}(\hat{\gamma}_{\ell}) \mathcal{R}(\hat{\alpha}_{\ell})\}^{1/2}  h^{-1}/h^{-1/2}
=h^{-1/2}\{n \mathcal{R}(\hat{\gamma}_{\ell}) \mathcal{R}(\hat{\alpha}_{\ell})\}^{1/2}.
$$
Likewise for the other terms. The approximation error condition is immediate from Lemma~\ref{lemma:approx}.
\end{proof}


\begin{thebibliography}{10}

\bibitem{abrevaya2015estimating}
Jason Abrevaya, Yu-Chin Hsu, and Robert~P Lieli.
\newblock Estimating conditional average treatment effects.
\newblock {\em Journal of Business \& Economic Statistics}, 33(4):485--505,
  2015.

\bibitem{ai2003efficient}
Chunrong Ai and Xiaohong Chen.
\newblock Efficient estimation of models with conditional moment restrictions
  containing unknown functions.
\newblock {\em Econometrica}, 71(6):1795--1843, 2003.

\bibitem{andrews1994asymptotics}
Donald~WK Andrews.
\newblock Asymptotics for semiparametric econometric models via stochastic
  equicontinuity.
\newblock {\em Econometrica}, pages 43--72, 1994.

\bibitem{avagyan2021high}
Vahe Avagyan and Stijn Vansteelandt.
\newblock High-dimensional inference for the average treatment effect under
  model misspecification using penalized bias-reduced double-robust estimation.
\newblock {\em Biostatistics \& Epidemiology}, pages 1--18, 2021.

\bibitem{belloni2012sparse}
Alexandre Belloni, Daniel Chen, Victor Chernozhukov, and Christian Hansen.
\newblock Sparse models and methods for optimal instruments with an application
  to eminent domain.
\newblock {\em Econometrica}, 80(6):2369--2429, 2012.

\bibitem{belloni2013inference}
Alexandre Belloni, Victor Chernozhukov, and Christian Hansen.
\newblock Inference for high-dimensional sparse econometric models.
\newblock In {\em Advances in Economics and Econometrics}, page 245–295,
  2013.

\bibitem{belloni2014uniform}
Alexandre Belloni, Victor Chernozhukov, and Kengo Kato.
\newblock Uniform post-selection inference for least absolute deviation
  regression and other {Z}-estimation problems.
\newblock {\em Biometrika}, 102(1):77--94, 2014.

\bibitem{bibaut2017data}
Aurelien~F Bibaut and Mark~J van~der Laan.
\newblock Data-adaptive smoothing for optimal-rate estimation of possibly
  non-regular parameters.
\newblock {\em arXiv:1706.07408}, 2017.

\bibitem{bickel1993efficient}
Peter~J Bickel, Chris~AJ Klaassen, Peter~J Bickel, Ya’acov Ritov, J~Klaassen,
  Jon~A Wellner, and YA'Acov Ritov.
\newblock {\em Efficient and Adaptive Estimation for Semiparametric Models},
  volume~4.
\newblock Johns Hopkins University Press Baltimore, 1993.

\bibitem{blundell2007semi}
Richard Blundell, Xiaohong Chen, and Dennis Kristensen.
\newblock Semi-nonparametric {IV} estimation of shape-invariant {E}ngel curves.
\newblock {\em Econometrica}, 75(6):1613--1669, 2007.

\bibitem{cai2020nonparametric}
Weixin Cai and Mark van~der Laan.
\newblock Nonparametric bootstrap inference for the targeted highly adaptive
  least absolute shrinkage and selection operator {(LASSO)} estimator.
\newblock {\em The International Journal of Biostatistics}, 16(2), 2020.

\bibitem{chen2018overidentification}
Xiaohong Chen and Andres Santos.
\newblock Overidentification in regular models.
\newblock {\em Econometrica}, 86(5):1771--1817, 2018.

\bibitem{chernozhukov2018original}
Victor Chernozhukov, Denis Chetverikov, Mert Demirer, Esther Duflo, Christian
  Hansen, Whitney Newey, and James Robins.
\newblock Double/debiased machine learning for treatment and structural
  parameters.
\newblock {\em The Econometrics Journal}, 21(1):C1--C68, 2018.

\bibitem{chernozhukov2016locally}
Victor Chernozhukov, Juan~Carlos Escanciano, Hidehiko Ichimura, Whitney~K
  Newey, and James~M Robins.
\newblock Locally robust semiparametric estimation.
\newblock {\em arXiv:1608.00033, Econometrica (to appear)}, 2016.

\bibitem{chernozhukov2019demand}
Victor Chernozhukov, Jerry~A Hausman, and Whitney~K Newey.
\newblock Demand analysis with many prices.
\newblock Technical report, National Bureau of Economic Research, 2019.

\bibitem{chernozhukov2018global}
Victor Chernozhukov, Whitney Newey, and Rahul Singh.
\newblock Debiased machine learning of global and local parameters using
  regularized {R}iesz representers.
\newblock {\em arXiv:1802.08667, Econometrics Journal (to appear)}, 2018.

\bibitem{chernozhukov2020adversarial}
Victor Chernozhukov, Whitney Newey, Rahul Singh, and Vasilis Syrgkanis.
\newblock Adversarial estimation of {R}iesz representers.
\newblock {\em arXiv:2101.00009}, 2020.

\bibitem{chernozhukov2018learning}
Victor Chernozhukov, Whitney~K Newey, and Rahul Singh.
\newblock Automatic debiased machine learning of causal and structural effects.
\newblock {\em arXiv:1809.05224, Econometrica, (to appear)}, 2018.

\bibitem{colangelo2020double}
Kyle Colangelo and Ying-Ying Lee.
\newblock Double debiased machine learning nonparametric inference with
  continuous treatments.
\newblock {\em arXiv:2004.03036}, 2020.

\bibitem{dikkala2020minimax}
Nishanth Dikkala, Greg Lewis, Lester Mackey, and Vasilis Syrgkanis.
\newblock Minimax estimation of conditional moment models.
\newblock {\em arXiv:2006.07201}, 2020.

\bibitem{foster2019orthogonal}
Dylan~J Foster and Vasilis Syrgkanis.
\newblock Orthogonal statistical learning.
\newblock {\em arXiv:1901.09036}, 2019.

\bibitem{hasminskii1979nonparametric}
Rafail~Z Hasminskii and Ildar~A Ibragimov.
\newblock On the nonparametric estimation of functionals.
\newblock In {\em Proceedings of the Second Prague Symposium on Asymptotic
  Statistics}, 1979.

\bibitem{ichimura2021influence}
Hidehiko Ichimura and Whitney~K Newey.
\newblock The influence function of semiparametric estimators.
\newblock {\em arXiv:1508.01378, Quantitative Economics (to appear)}, 2021.

\bibitem{javanmard2014confidence}
Adel Javanmard and Andrea Montanari.
\newblock Confidence intervals and hypothesis testing for high-dimensional
  regression.
\newblock {\em The Journal of Machine Learning Research}, 15(1):2869--2909,
  2014.

\bibitem{kallus2021causal}
Nathan Kallus, Xiaojie Mao, and Masatoshi Uehara.
\newblock Causal inference under unmeasured confounding with negative controls:
  A minimax learning approach.
\newblock {\em arXiv:2103.14029}, 2021.

\bibitem{klaassen1987consistent}
Chris~AJ Klaassen.
\newblock Consistent estimation of the influence function of locally
  asymptotically linear estimators.
\newblock {\em The Annals of Statistics}, pages 1548--1562, 1987.

\bibitem{luedtke2016statistical}
Alexander~R Luedtke and Mark~J van~der Laan.
\newblock Statistical inference for the mean outcome under a possibly
  non-unique optimal treatment strategy.
\newblock {\em Annals of Statistics}, 44(2):713, 2016.

\bibitem{newey1994asymptotic}
Whitney~K Newey.
\newblock The asymptotic variance of semiparametric estimators.
\newblock {\em Econometrica}, pages 1349--1382, 1994.

\bibitem{newey1994kernel}
Whitney~K Newey.
\newblock Kernel estimation of partial means and a general variance estimator.
\newblock {\em Econometric Theory}, 10(2):1--21, 1994.

\bibitem{newey2003instrumental}
Whitney~K Newey and James~L Powell.
\newblock Instrumental variable estimation of nonparametric models.
\newblock {\em Econometrica}, 71(5):1565--1578, 2003.

\bibitem{pfanzagl1982lecture}
Johann Pfanzagl.
\newblock Lecture notes in statistics.
\newblock {\em Contributions to a General Asymptotic Statistical Theory}, 13,
  1982.

\bibitem{qiu2021universal}
Hongxiang Qiu, Alex Luedtke, and Marco Carone.
\newblock Universal sieve-based strategies for efficient estimation using
  machine learning tools.
\newblock {\em Bernoulli}, 27(4):2300--2336, 2021.

\bibitem{robins1995semiparametric}
James~M Robins and Andrea Rotnitzky.
\newblock Semiparametric efficiency in multivariate regression models with
  missing data.
\newblock {\em Journal of the American Statistical Association},
  90(429):122--129, 1995.

\bibitem{robinson1988root}
Peter~M Robinson.
\newblock Root-n-consistent semiparametric regression.
\newblock {\em Econometrica}, pages 931--954, 1988.

\bibitem{rotnitzky2021characterization}
Andrea Rotnitzky, Ezequiel Smucler, and James~M Robins.
\newblock Characterization of parameters with a mixed bias property.
\newblock {\em Biometrika}, 108(1):231--238, 2021.

\bibitem{severini2012efficiency}
Thomas~A Severini and Gautam Tripathi.
\newblock Efficiency bounds for estimating linear functionals of nonparametric
  regression models with endogenous regressors.
\newblock {\em Journal of Econometrics}, 170(2):491--498, 2012.

\bibitem{shevtsova2011absolute}
Irina Shevtsova.
\newblock On the absolute constants in the {B}erry-{E}sseen type inequalities
  for identically distributed summands.
\newblock {\em arXiv:1111.6554}, 2011.

\bibitem{singh2021debiased}
Rahul Singh.
\newblock Debiased kernel methods.
\newblock {\em arXiv:2102.11076}, 2021.

\bibitem{singh2019kernel}
Rahul Singh, Maneesh Sahani, and Arthur Gretton.
\newblock Kernel instrumental variable regression.
\newblock In {\em Advances in Neural Information Processing Systems}, pages
  4595--4607, 2019.

\bibitem{smucler2019unifying}
Ezequiel Smucler, Andrea Rotnitzky, and James~M Robins.
\newblock A unifying approach for doubly-robust $\ell_1$ regularized estimation
  of causal contrasts.
\newblock {\em arXiv:1904.03737}, 2019.

\bibitem{vandegeer2014asymptotically}
Sara Van~de Geer, Peter B{\"u}hlmann, Ya’acov Ritov, and Ruben Dezeure.
\newblock On asymptotically optimal confidence regions and tests for
  high-dimensional models.
\newblock {\em The Annals of Statistics}, 42(3):1166--1202, 2014.

\bibitem{van2017finite}
Mark van~der Laan.
\newblock Finite sample inference for targeted learning.
\newblock {\em arXiv:1708.09502}, 2017.

\bibitem{van2018targeted}
Mark~J van~der Laan and Sherri Rose.
\newblock {\em Targeted Learning in Data Science}.
\newblock Springer, 2018.

\bibitem{van2006targeted}
Mark~J van~der Laan and Daniel Rubin.
\newblock Targeted maximum likelihood learning.
\newblock {\em The International Journal of Biostatistics}, 2(1), 2006.

\bibitem{zhang2014confidence}
Cun-Hui Zhang and Stephanie~S Zhang.
\newblock Confidence intervals for low dimensional parameters in high
  dimensional linear models.
\newblock {\em Journal of the Royal Statistical Society: Series B (Statistical
  Methodology)}, 76(1):217--242, 2014.

\bibitem{zheng2011cross}
Wenjing Zheng and Mark~J van~der Laan.
\newblock Cross-validated targeted minimum-loss-based estimation.
\newblock In {\em Targeted Learning}, pages 459--474. Springer Science \&
  Business Media, 2011.

\end{thebibliography}
\end{document}